
\documentclass[chaparabic,ee,phd,12pt,oneandhalf]{metu}
\pdfoutput=1
\usepackage{appendix}

%

%

%
%
%
\author{Mehmet Altan Toksöz}
\title{Basic Thresholding Classification}
\turkishtitle{Temel Eşikleme Sınıflandırma}
%
%
%
%
\date{March 2016}
%
%
%
\director[prof]{Gülbin Dural Ünver}
\headofdept[prof]{Gönül Turhan Sayan}
%
\supervisor[assocprof]{İlkay Ulusoy}
%
\departmentofsupervisor{Electrical and Electronics Eng. Dept., METU}
%
%
%
%
\committeememberi[prof]{Çağatay Candan}
\affiliationi{Electrical and Electronics Engineering Department, METU}
\committeememberii[assocprof]{İlkay Ulusoy}
\affiliationii{Electrical and Electronics Engineering Department, METU}
%
\committeememberiii[assocprof]{Alptekin Temizel}
\affiliationiii{Graduate School of Informatics, METU}
\committeememberiv[assocprof]{Selim Aksoy}
\affiliationiv{Computer Engineering Department, Bilkent University}
\committeememberv[assocprof]{Pınar Duygulu Şahin}
\affiliationv{Computer Engineering Department, Hacettepe University}
%
\keywords{basic thresholding classifier (BTC), kernel basic thresholding classifier (KBTC), sufficient indentification condition (SIC), face identification, hyper-spectral image classification, support vector machines (SVM), multinomial logistic regression (MLR), simultaneous orthogonal matching pursuit (SOMP)}
\anahtarklm{temel eşikleme sınıflandırıcı (BTC), çekirdek temel eşikleme sınıflandırıcı (KBTC), yeterli tanıma şartı (SIC), yüz tanıma, hiper-spektral imge sınıflandırma, destek vektör makineleri (SVM),  çok terimli lojistik regresyon (MLR), eşzamanlı dik eşleştirme takibi (SOMP)}
%
%
\abstract{In this thesis, we propose a light-weight sparsity-based algorithm, basic thresholding classifier (BTC), for classification applications (such as face identification, hyper-spectral image classification, etc.) which is capable of identifying test samples extremely rapidly and performing high classification accuracy. Originally BTC is a linear classifier which works based on the assumption that the samples of the classes of a given dataset are linearly separable. However, in practice those samples may not be linearly separable. In this context, we also propose another algorithm namely kernel basic thresholding classifier (KBTC) which is a non-linear kernel version of the BTC algorithm. KBTC can achieve promising results especially when the given samples are linearly non-separable. For both proposals, we introduce sufficient identification conditions (SICs) under which BTC and KBTC can identify any test sample in the range space of a given dictionary. By using SICs, we develop parameter estimation procedures which do not require any cross validation. Both BTC and KBTC algorithms provide efficient classifier fusion schemes in which individual classifier outputs are combined to produce better classification results. For instance, for the application of face identification, this is done by combining the residuals having different random projectors. For spatial applications such as hyper-spectral image classification, the fusion is carried out by incorporating the spatial information, in which the output residual maps are filtered using a smoothing filter. Numerical results on publicly available face and hyper-spectral datasets show that our proposal outperforms well-known support vector machines (SVM)-based techniques, multinomial logistic regression (MLR)-based methods, and sparsity-based approaches like $l_1$-minimization and simultaneous orthogonal matching pursuit (SOMP) in terms of both classification accuracy and computational cost.}
%
%
\oz{Bu tezde, bazı sınıflandırma uygulamaları (yüz tanıma, hiper-spektral imge sınıflandırma vb.) için yüksek sınıflandırma doğruluğu ile test numunelerini son derece hızlı bir şekilde sınıflandırabilen seyreklik tabanlı temel eşikleme sınıflandırıcı (BTC) önerilmektedir. Orjinalinde BTC doğrusal bir sınıflandırıcı olup verilen bir veri kümesinin sınıflarına ait örneklerin doğrusal olarak ayırt edilebilir varsayımı üzerine çalışmaktadır. Ancak pratikte bu örnekler doğrusal olarak her zaman ayırt edilemeyebilmektedir. Bu kapsamda, BTC'nin doğrusal doğrusal olmayan çekirdek versiyonu KBTC de ayrıca takdim edilmektedir. Özellikle doğrusal bir şekilde ayırt edilemeyen örnekler verildiğinde KBTC gelecek vaat eden sonuçlar elde edebilmektedir. Verilen bir sözlüğün değer kümesi uzayında bulunan herhangi bir test örneği, takdim edilen yeterli tanıma koşulu (SIC) altında sınıflandırılabilmektedir. Bu koşul kullanılarak çapraz doğrulama gerektirmeyen parametre kestirim yöntemleri geliştirilmiştir. BTC ve KBTC algoritmaları, sınıflandırma doğruluğunu arttırmak için bireysel sınıflandırıcı çıkışlarını birleştirerek füzyon tekniklerinin etkili bir şekilde uygulanmasını sağlamaktadır. Örneğin bu işlem, yüz tanıma uygulamaları için farklı rasgele projektörlere sahip sınıflandırıcıların çıkıştaki artık değerlerinin birleştirilmesiyle yapılır. Füzyon işlemi, hiper-spektral imge sınıflandırma gibi uzamsal uygulamalarda ise uzamsal bir filtre kullanılarak çıkıştaki artık değer haritalarının düzleştirilmesiyle yapılmaktadır. Bazı yaygın yüz ve hiper-spektral veri setleri kullanılarak gerçekleştirilen deneylerde, önerdiğimiz BTC ve KBTC algoritmaları, sınıflandırma doğruluğu ve maliyeti açısından, tanınmış destek vektör makineleri (SVM) tabanlı teknikler, çok terimli lojistik regresyon tabanlı metotlar, $l_1$-minimizasyonu ve eşzamanlı dik eşleştirme takibi (SOMP) gibi seyreklik tabanlı yaklaşımlardan daha iyi sonuçlar vermektedir.} 
%
\dedication{\textit{To my family and people who are reading this page}}
%
%
\acknowledgments{
I gratefully thank my supervisor Assoc. Prof. Dr. İlkay Ulusoy, who has shared her expertise with me throughout the development of the thesis, for her support, constructive criticism and supervision. I greatly appreciate her patience and belief on me and also her guidance related to my academic life.

I express my gratitude to thesis monitoring committee members Prof. Dr. Çağatay Candan and Assoc. Prof. Dr. Alptekin Temizel. They have closely monitored the progress of the study and the quality of the thesis has been remarkably improved with their valuable comments and fruitful discussions with them. In particular, I would like to say how lucky I was to meet Dr. Candan who has taught a lot of advanced subjects in signal analysis and processing course. 

I would like to thank thesis jury members Assoc. Prof. Dr. Selim Aksoy and Assoc. Prof. Dr. Pınar Duygulu Şahin for sparing their time and effort in the evaluation of the thesis.

I would like to express my gratitude to Tübitak Bilgem İltaren and also my colleagues there for all the support on the study.

I would also like to thank  Joel Tropp, Xudong Kang, Jun Li, Julien Mairal, Laurens van der Maaten, Chih-Chung Chang, and Dani Lischinski for kindly providing the source codes and toolboxes used in this thesis.
 
}
%

\usepackage[pdftex]{hyperref}
\usepackage[all]{hypcap}
\usepackage{todonotes}
\usepackage{epstopdf}
\usepackage{graphicx}
\usepackage{caption}
\usepackage{subcaption}
\usepackage{rotating}
\usepackage{xy} 
\usepackage{pdfpages}
\usepackage{array}
\usepackage{color}
\usepackage{tcolorbox}
\definecolor{light-gray}{gray}{0.95}
\tcbset{width=1.0\textwidth,boxrule=0pt,colback=light-gray,arc=0pt,auto outer arc,left=0pt,right=0pt,boxsep=5pt}
\usepackage{amsmath}
\usepackage{amssymb}
\usepackage{algorithm}
\usepackage{algpseudocode}
\usepackage{amsthm}
\newcommand{\norm}[1]{\left\lVert#1\right\rVert}
\newtheorem{theorem}{Theorem}[section]
\newtheorem{prop}[theorem]{Proposition}
\usepackage{secdot}
\newcommand{\noop}[1]{} 
\usepackage{textcomp}
\definecolor{dkgreen}{rgb}{0,0.6,0}
\definecolor{gray}{rgb}{0.5,0.5,0.5}
\usepackage[autolinebreaks]{mcode1}
\usepackage{textcomp}

\usepackage[utf8]{inputenc}

\begin{document}
\begin{preliminaries}

\begin{theglossary}{LONGESTABBRV}
\item[AA] Average Accuracy
\item[ANN] Artificial Neural Networks
\item[ASOMP] Adaptive Simultaneous Orthogonal Matching Pursuit
\item[AVIRIS] Airborne Visible/Infrared Imaging Spectrometer
\item[BP] Back Propagation
\item[BPNN] Back Propagation Neural Networks
\item[BT] Basic Thresholding
\item[BTC] Basic Thresholding Classifier
\item[COMP] Cholesky-based Orthogonal Matching Pursuit
\item[FFBPNN] Feed Forward Back Propagation Neural Networks
\item[GF] Guided Filter
\item[HSI] Hyper-spectral Image
\item[KBTC] Kernel Basic Thresholding Classifier
\item[LDA] Linear Discriminant Analysis
\item[LORSAL] Logistic Regression via Splitting and Augmented Lagrangian
\item[MASR] Multi-scale Adaptive Sparse Representation
\item[MLL] Multilevel Logistic
\item[MLR] Multinomial Logistic Regression 
\item[NN] Nearest Neighbor
\item[OA] Overall Accuracy
\item[OAA] One-Against-All
\item[OAO] One-Against-One
\item[OMP] Orthogonal Matching Pursuit
\item[PCA] Principal Component Analysis
\item[RBF] Radial Basis Functions
\item[ROC] Receiver Operating Characteristics
\item[ROSIS] Reflective Optics System Imaging Spectrometer
\item[SCI] Sparsity Concentration Index
\item[SIC] Sufficient Identification Condition
\item[SOMP] Simultaneous Orthogonal Matching Pursuit
\item[SRC] Sparse Representation-based Classification
\item[SVM] Support Vector Machines
\item[WLS] Weighted Least Squares
\item[WSOMP] Weighted Simultaneous Orthogonal Matching Pursuit

\end{theglossary}

\end{preliminaries}
%
%
%


\chapter{INTRODUCTION}
\label{chp:introduction}

The term classification is usually referred to as assigning objects into different categories. It can also be interpreted as assigning predefined class labels to each object under testing. Those objects could be human faces, fingerprints, characters, digits, signals, documents, diseases, proteins, genes, speech, emotions, galaxies, objects in a scene, hyper-spectral pixels, etc.

The classification / recognition / identification process can easily be performed by human brains. On the other hand, it is not quite easy for the artificial classifiers or recognizers because the process involves quite complex stages such as sensing, preprocessing, feature extraction, dimension reduction, and decision making. Fig. \ref{classification} shows a typical classification scheme. Sensors such as camera, microphone or other type of acquisition devices are able to capture high quality raw data in today's technology. Although the sensing and preprocessing stages are performed easily, the feature extraction stage could be problematic. Perhaps, the performance of a classifier is mostly affected by the quality of the features. A good feature is considered to be discriminative, informative, robust, reliable, independent, invariant to scale and transformation.

\begin{figure*}
\centering
\includegraphics[width = 1.0\textwidth]{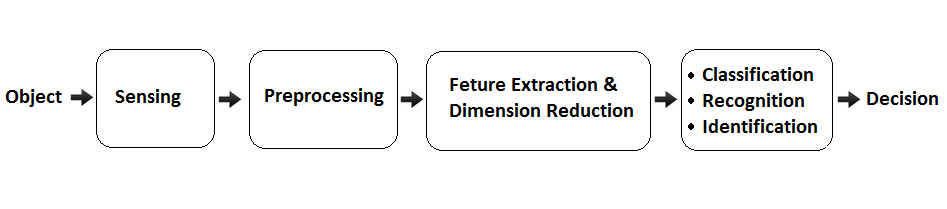}
\centering
\caption{A typical classification scheme}
\label{classification}
\end{figure*}

Features in a classification problem could be color, shape, texture, size, sound, intensity of a pixel, gender, height, weight, measured frequency value, etc. They are generally mapped to real line and stored in the vectors. An $N$-dimensional feature vector can also be interpreted as a point in an $N$-dimensional feature space. Sometimes using all of the extracted features does not improve the classification process. Instead, it may cause performance degradation in terms of accuracy and speed. To overcome this problem, an information reduction stage namely dimension reduction is integrated to the whole system.

The final stage of a classification process, decision making, is performed by a classifier. if high quality features are passed to the classifier, the objects under test can easily be classified. However, in real world applications, extracting good features from the raw data is not always possible. In this case, in order to increase classification accuracy, we need to design sophisticated classifiers. These kinds of classifiers not only perform high classification accuracy but also act quickly. They also require few processing steps and memory.

In the literature, variety of classifiers have been proposed addressing the classification problem. They can be divided into two main categories namely parametric and non-parametric approaches. One example of parametric techniques is the Bayesian decision theory in which a priori probability densities are known for each class. Those densities are converted to a posteriori probabilities and final decisions are made based on them \cite{duda2012pattern}. Unfortunately, in practice, those densities are generally unknown. Therefore, it is inevitable to use non-parametric approaches. There are various methods in this category such as density or parameter estimation-based techniques in which the underlying densities and parameters are estimated based on the provided training data. Sometimes decision boundaries are formed using the training data, which is referred to as learning. If the class labels of the training data are known, then this kind of learning is called supervised learning. If there is no labeled data, in this case, the learning process becomes unsupervised learning or clustering.

Recently, sparse representation-based classification algorithms in the category of supervised techniques have got significant attention \cite{wright2009robust,wright2010sparse,mei2011robust,yuan2012visual,zhang2011sparse,gao2010kernel,yang2011fisher}. Over the last two decades, tremendous research activities have been observed in the area of sparse signal representation and compressed sensing \cite{donoho2006compressed,donoho2003optimally,candes2006stable,donoho2006most,chen2001atomic}. This is mainly because of the fact that significant portion of the signals in the nature are sparse, that is, most of the components of them are zero (Fig. \ref{sparseSignal}). Sparsity provides that real world signals can be represented by the combinations of a few basis vectors. For instance, a typical image can be successfully compressed via JPEG technique which works based on the assumption that an image can be represented by a few discrete cosine basis.

\begin{figure*}
\centering
\includegraphics[width = 1.0\textwidth]{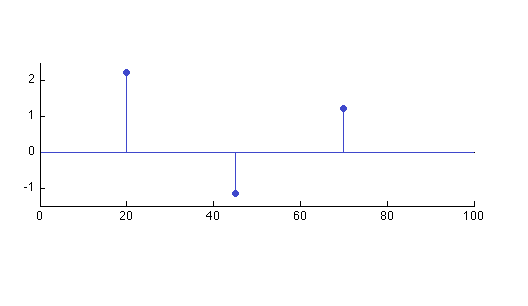}
\centering
\caption{A sparse signal}
\label{sparseSignal}
\end{figure*}

While the nice properties of compressible or sparse signals in the area of signal processing are inspiring, the computer vision community is more interested in the semantic information of a signal rather than compressed sensing and compact representation. For instance, state-of-the-art results have been achieved based on sparse representation in image de-noising and in-painting \cite{elad2006image,mairal2009non,mairal2008sparse}, image super-resolution \cite{yang2010image,yang2008image,kim2010single,yang2012coupled}, object tracking \cite{mei2011robust,jia2012visual,wang2012online}, face recognition \cite{wright2009robust,zhang2011sparse,wagner2012toward,yang2010gabor,zhang2016sample,xu2016approximately}, image smoothing \cite{xu2011image}, image classification \cite{yang2009linear,gao2010kernel,rigamonti2011sparse,liu2016structure,huang2016efficient}, etc. 

Although sparse representation-based techniques achieve promising results, the underlying framework, sparse signal recovery or reconstruction, is not an easy task. Until now, various algorithms have been proposed addressing the problem of sparse reconstruction. Convex relaxation and $l_1$-minimization-based techniques \cite{donoho2003optimally,candes2005l1,candes2008enhancing,becker2011nesta}, greedy approaches \cite{tropp2004greed,needell2009cosamp,tropp2006algorithms,donoho2012sparse}, Bregman iteration-based procedures \cite{goldstein2009split,yin2008bregman}, and linear programming-based \cite{candes2005decoding} methods have been deeply investigated. Unfortunately, while some of those approaches are extremely costly, the others are highly sensitive to noise and corruption. As we stated previously, most of the classification applications involve noisy and corrupted features such as face recognition under illumination variations, noise, and corruption. In some applications such as hyper-spectral image classifications, the problems involve classification of hundred thousands of pixels, which requires extremely cost effective classifiers.           

In this thesis, we propose two sparsity-based, light-weight, and easy-to-implement classification algorithms which achieve state-of-the-art results in terms of both accuracy and computational cost. While the first algorithm addresses the applications in which the samples of data are linearly separable, the other one refers the problems involving non-linearly separable data. The following section briefly presents the organization of the thesis.

\section{Outline}

This thesis provides the following contributions to the field of computer vision and classification:

\begin{itemize}
\item In Chapter 2, we briefly discuss some of the common existing techniques in classification including non-parametric approaches, neural network-based methods, and sparsity-based classifiers. 
\item In Chapter 3, we introduce the basic thresholding classification (BTC) algorithm and give the construction of it step by step. We also provide necessary guidance for parameter estimation.
\item Chapter 4 introduces the kernel basic thresholding classification (KBTC) algorithm which achieves promising results in the problems especially involving non-linearly separable data. We present full guidance of the parameter estimation steps by utilizing the propositions related to the algorithm. 
\item In Chapter 5, the performance of the BTC algorithm is compared to those of the state-of-the-art sparsity-based techniques in the application of face identification. We also provide an effective classification fusion technique in which individual classifiers are combined to achieve better classification performance. At the end of the chapter, an efficient validation scheme is presented in order to reject invalid test samples.
\item Chapter 6 compares the performance of the BTC technique with those of the powerful non-linear kernel methods such as SVM in the application of hyper-spectral image classification. This chapter also introduces a spatial-spectral framework in which the residual maps produced by the proposed algorithms are smoothed using edge preserving filtering techniques. This intermediate step extremely improves the classification accuracy.
\item In Chapter 7, we compare the performance of the KBTC algorithm with those of the state-of-the-art non-linear kernel approaches as well as the linear BTC in hyper-spectral image classification. This chapter shows how the non-linear similarity-based KBTC achieves significant performance improvements over the linear ones as well as the other techniques. We also provide fixed training sets by which efficient comparison of the algorithms is performed.
\item Finally, Chapter 8 concludes this thesis by presenting summary and future directions.
\end{itemize}


\begin{tcolorbox}
Please note that the Chapter 5, 6, and 7 are based on the following papers:\\

M. A. Toksoz and I. Ulusoy, “Hyperspectral image classification via kernel basic thresholding classifier,” IEEE Transactions on Geoscience and Remote Sensing, vol. 55, no. 2, pp. 715–728, 2017.\\

M. A. Toksöz and I. Ulusoy, “Hyperspectral image classification via basic thresholding classifier,” IEEE Transactions on Geoscience and Remote Sensing, vol. 54, no. 7, pp. 4039–4051, 2016.\\

M. A. Toksöz and I. Ulusoy, “Classification via ensembles of basic thresholding classifiers,” IET Computer Vision, vol. 10, no. 5, pp. 433–442, 2016.
\end{tcolorbox}

\section{Notations}

Throughout the thesis we will use some notations which are described as follows:

\begin{itemize}
\item We will use capital letters for matrices and sets. For instance, a given dictionary consisting of training samples will be shown by the matrix $A$. Exceptionally, the threshold parameter, the number of features, and the number of classes will be represented by $M$, $B$, and $C$, respectively.
\item Small letters will be used to represent vectors. In classification applications, typically the small letter $y$ is used to describe a given test sample which is a vector containing features. Exceptionally, the small letters $i$, $j$, $k$, $m$, and $n$ will be used for indexes. 
\item The $i$th column of a dictionary $A$ will be shown by $A(i)$ which corresponds to a training sample. If we want to extract the sub matrix whose indexes in $\Lambda$, we will use the notation $A(\Lambda)$. 
\item Small Greek letters such as $\alpha$ and $\gamma$ will denote the constants. 
\item The notation $A_i$ will represent the sub matrix which contains only the training samples of the $i$th class. A sample belonging to $i$th class will be denoted by $a_i$.
\item Finally, the range space of a vector will be shown by $\mathcal{R}(.)$ and the notations $\norm{.}_0$, $\norm{.}_1$, and $\norm{.}_2$ will represent the $l_0$, $l_1$, and $l_2$ norms, respectively.    
\end{itemize}


\chapter{EXISTING METHODS}
\label{chp:existingMethods}
In this chapter, we briefly discuss commonly used classification algorithms in the literature. 

\section{Nearest Neighbor Classifier}
One of the most intuitive and primitive methods in the class of non-parametric techniques is the nearest neighbor (NN) classifier. The classification is simply performed based on the Euclidean distances between testing and training samples in the feature space. It has been shown in \cite{cover1967nearest} that when the size of the training data goes to infinity, its error rate does not exceed the double Bayes error rate. 

NN classifier is commonly used in the classification applications such as face recognition. Usually features are transformed before using it. One of the most popular subspace-based transformation methods is the principal component analysis (PCA) approach \cite{turk1991eigenfaces,turk1991face}. In PCA, high dimensional data is projected to a lower dimensional subspace in which first principal component carries most discriminative information. Another popular approach of the subspace-based algorithms is the linear discriminant analysis (LDA) method, also known as Fisher's LDA \cite{martinez2001pca}. The goal of the LDA is to seek a projection to a lower dimensional subspace such that maximum separability is obtained between samples of different classes. This is achieved by maximizing the ratio $\frac{|V^TS_bV|}{|V^TS_wV|}$, where $S_b$ is the between-class scatter matrix, $S_w$ is the within-class scatter matrix, and $V$ is the projection which can be obtained from the eigenvectors of $S_w^{-1}S_b$. Although LDA is a powerful dimensionality reduction method, it encounters a common high dimensionality problem in the classification applications. In other words, the size of the feature vectors is generally larger than the number of training samples. This makes the $S_w$ matrix singular, which is usually called small sample size problem (SSS). 

Although NN-based approaches work well under normal conditions, they are highly sensitive to corruption and noise in the features. A more sophisticated version of NN classifier is the k-NN technique which executes majority voting among $k$ nearest training samples \cite{xu2013coarse}. As shown in \cite{melgani2004classification}, its performance does not exceed those of linear and non-linear SVM classifiers. First, second, and third nearest neighbors of a test instance could be seen in Fig. \ref{knn}.    

\begin{figure*}
\centering
\includegraphics[width = 0.8\textwidth]{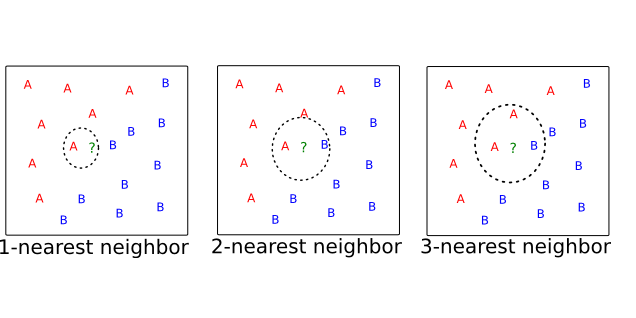}
\centering
\caption{Nearest neighbors of a test instance}
\label{knn}
\end{figure*}
      
\section{Neural Networks}
Over the last two decades, artificial neural networks (ANN) has become more important in computer vision and pattern recognition. One of the most popular ANN-based methods is the back-propagation (BP) algorithm which is a gradient based method \cite{rumelhart1988learning}. At every epoch, the algorithm adjusts the connection weights in the network such that the difference of the desired and actual output vector is minimized. BPNN was successfully applied to hand written digit recognition \cite{ciresan2010deep,le1990handwritten}. Recently, FFBPNN has been applied to face recognition using PCA \cite{kashem2011face,latha2009face}. 

\begin{figure*}
\centering
\includegraphics[width = 0.7\textwidth]{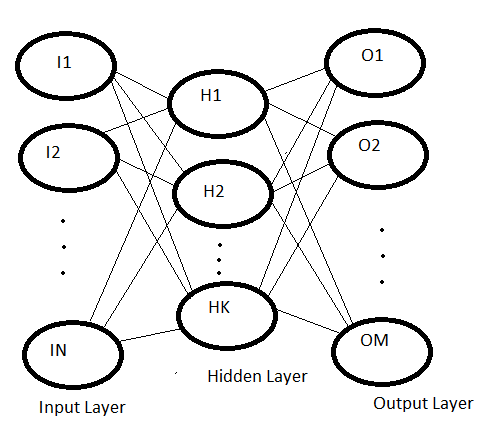}
\centering
\caption{A three-layer neural network}
\label{neuralNetwork}
\end{figure*}

Fig. \ref{neuralNetwork} shows a structure of a typical network which has three layers, namely, input, hidden, and output ones. In a classification application, generally, the number of input units is equal to the length of feature vectors. The number of hidden units could be determined experimentally, and finally the number of output units is equal to the number of classes in the application. There could be more than one hidden layers in the network. Very recently, deep convolutional neural networks (CNN) has been successfully applied to hyper-spectral image classification \cite{hu2015deep}. The technique becomes superior to the SVM approach, however, it requires more training samples than a conventional classifier. 
\section{Support Vector Machines (SVM)}
SVM is a binary classification technique in which a maximum distance decision surface is found between closest points (support vectors) of two classes \cite{cortes1995support}. The points are assumed to be linearly separable. In case they are inseparable, a penalty factor $C$ is utilized. Alternatively, a non-linear kernel (RBF, Polynomial, etc.) is used to determine a non-linear decision boundary. The hyper-plane found by the algorithm has maximum distances to the support vectors (Fig. \ref{svmHyperplane}).

\begin{figure*}
\centering
\includegraphics[width = 0.8\textwidth]{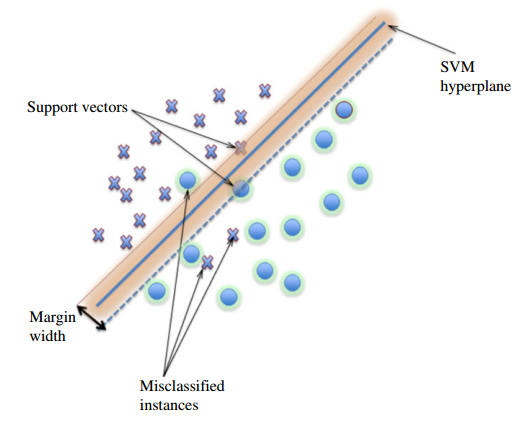}
\centering
\caption{Support vector machines}
\label{svmHyperplane}
\end{figure*}

Since the algorithm is originally a binary classification technique, it can not be directly applied to a multi-class classification problem. There are various strategies proposed in the literature to extend the binary SVMs to multi-class SVMs. The two famous of them namely one-against-all (OAA) and one-against-one (OAO) strategies are quite common for multi-class problems. In OAA approach, the number of binary SVMs is equal to the number of classes in the training set. Each SVM finds a separating hyper plane between $i$th class and the rest of the classes. There are some drawbacks of this strategy. First, the number of train pixels per class is unbalanced since the rest of the classes has more train pixels than the $i$th class. Second, the size of one SVM classifier is very large and it requires large memories. In OAO approach, for every possible pair of classes there exists a binary SVM. This approach is a symmetric one and every SVM requires less memory than the OAA approach. However, the number of classifiers is larger than the OAA strategy. Therefore, we can say that during the classification procedure, it requires more time. SVM approach has been applied to plenty of classification problems in the literature including text classification \cite{joachims1998text,joachims1999transductive,chapelle1999support,tong2002support}, face recognition \cite{guo2000face,heisele2001face,phillips1998support}, protein classification \cite{leslie2004mismatch,leslie2002spectrum,eskin2002mismatch,tsuda2005fast}, gene classification \cite{guyon2002gene,furey2000support}, hyper-spectral image classification \cite{melgani2004classification,fauvel2008spectral,bazi2006toward,tarabalka2010svm,camps2005kernel}, spam categorization \cite{drucker1999support}, etc. Although this technique is quite common, its parameters are determined using cross validation which may result in non-optimal parameter set.

\section{Sparse Representation-Based Classification (SRC)}
As we stated in the previous chapter, most of the signals in the nature are sparse and a few components of them carry information. The major challenge in sparse approximation / reconstruction is to recover the original signal $x\in\mathbb{R}^{N}$ using a few measurements $y \in\mathbb{R}^B$ and the sensing (measurement) matrix $A \in\mathbb{R}^{B\times N}$ (Fig. \ref*{sparseRecovery}). It is assumed that each column of $A$ has unit Euclidean norm and the system of equations is mostly under-determined ($B \ll N$). 

\begin{figure*}
\centering
\includegraphics[width = 1.0\textwidth]{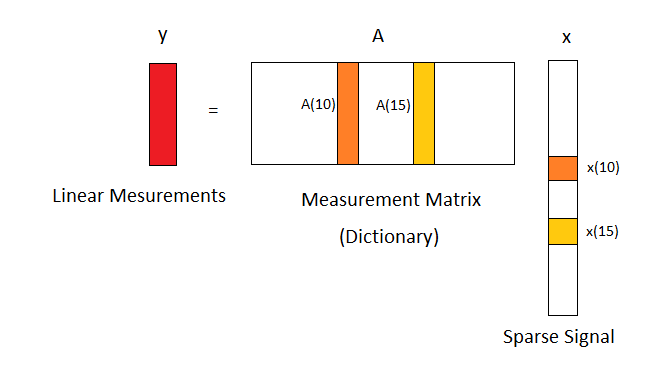}
\centering
\caption{Sparse representation}
\label{sparseRecovery}
\end{figure*}

The reconstruction task could be written as the following minimization problem:

\begin{equation}\label{sparseL0}
\hat{x}=\arg\min_x \norm{x}_0~subject~to~y=Ax~~or~~subject~to~\norm{y-Ax}_2 \le \epsilon
\end{equation}

where $\norm{x}_0$ is $l_0$ norm of $x$, which counts the number of non-zero components in $x$ ,and $\epsilon$ is small error tolerance. Unfortunately, the problem is not tractable and it is NP-hard. One intuitive solution could be found by replacing $l_0$ norm with $l_2$ norm (Euclidean norm) which results in Tikhonov regularization \cite{golub1999tikhonov}. However, this technique does not produce sparse solution. Alternative approach is to replace $l_0$ norm with $l_1$ since it is convex and close to $l_0$ function. Luckily, this method provides sparse solution. The comparison between these two approaches is demonstrated in Fig. \ref{L1L2}. 

\begin{figure*}
\centering
\includegraphics[width = 1.0\textwidth]{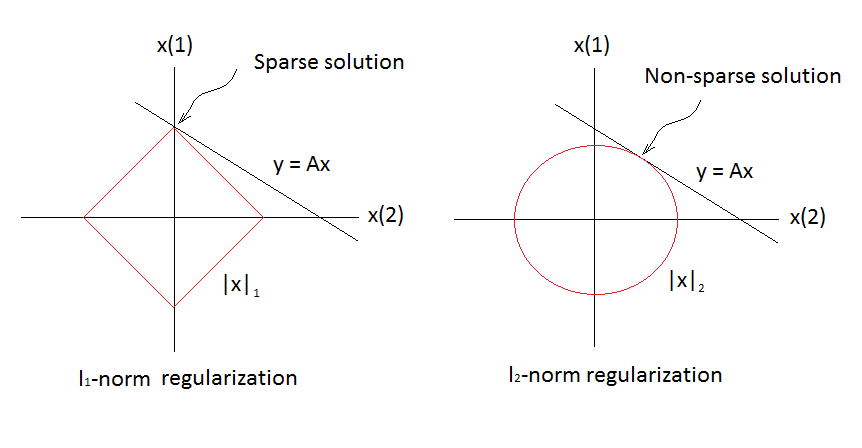}
\centering
\caption{$l_1$ versus $l_2$ regularized solutions}
\label{L1L2}
\end{figure*}

We can also compare the two methods by performing an experiment. Assume that we have a sparse signal $x\in\mathbb{R}^{512}$ having $15$ non-zero entries and a sensing matrix $A \in\mathbb{R}^{170\times 512}$ whose entries are filled with numbers drawn from Gaussian distribution. The observation vector $y$ can be formed by decoding $x$ via $A$, that is, $y=Ax$. We can then recover $x$ given $A$ and $y$ by performing $l_1$ and $l_2$ regularizations. The result could be seen in Fig. \ref{expL1L2}. As we can observe, $l_1$ regularization approach exactly recovers the original $x$, having very tiny ripples. On the other hand, the result obtained by $l_2$ regularization is highly dense and only a few peaks can be observed. 

\begin{figure*}
\centering
\includegraphics[width = 1.0\textwidth]{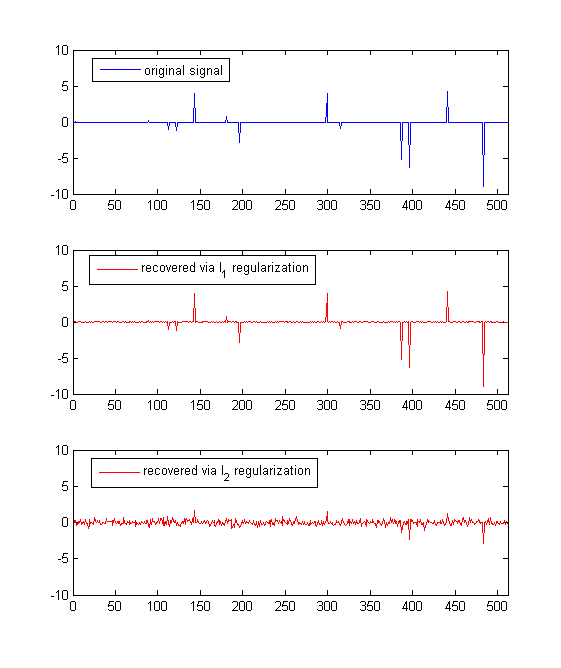}
\centering
\caption{An experiment: $l_1$ versus $l_2$ regularized solutions}
\label{expL1L2}
\end{figure*}
 
$l_1$ regularized solution could be obtained via some convex relaxation methods such as basis pursuit \cite{chen2001atomic}, least absolute shrinkage and selection operator (LASSO) \cite{tibshirani1996regression}, homotopy \cite{osborne2000new}, etc. The common problems of these techniques are heavy computational complexity. Greedy approximations such as orthogonal matching pursuit (OMP) \cite{tropp2004greed}, compressive sampling matching pursuit (CoSaMP) \cite{needell2009cosamp}, stage-wise orthogonal matching pursuit (StOMP) \cite{donoho2012sparse}  are available in the literature to reduce the computational burden in sparse approximation.   

Recently, sparse representation has been successfully applied to classification problems by exploiting the fact that identity information (class identity) of a given test sample is sparse among the other classes \cite{wright2009robust,wright2010sparse}. In this case, the matrix $A$ does not contain the basis elements but the labeled training samples. The measurement vector $y$ becomes the sample to be classified and the sparse signal $x$ is interpreted as the vector consisting of identity information. The new interpretation is shown in Fig. \ref{sparseClassification}.

\begin{figure*}
\centering
\includegraphics[width = 1.0\textwidth]{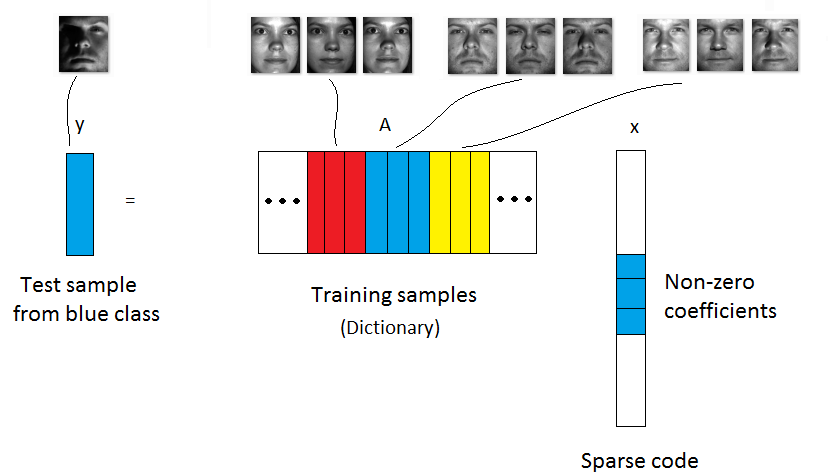}
\centering
\caption{Sparse representation in classification}
\label{sparseClassification}
\end{figure*}

After finding the sparse code using $l_1$-minimization in the new configuration, class identity of a given test sample is estimated via class-wise regression error. That is,

\begin{equation}\label{sparseRegression}
 class(y)=\arg \min_i \norm{y-A \hat{x_i}}_2 \; \forall i \in \{1,2,\ldots,C\}
\end{equation}
where $\hat{x_i}$ represents the $i$th class portion of the estimated sparse code $\hat{x}$ among $C$ many classes. The class estimation stage could also be seen in Fig. \ref{src35}.

\begin{figure*}
\centering
\includegraphics[width = 1.0\textwidth]{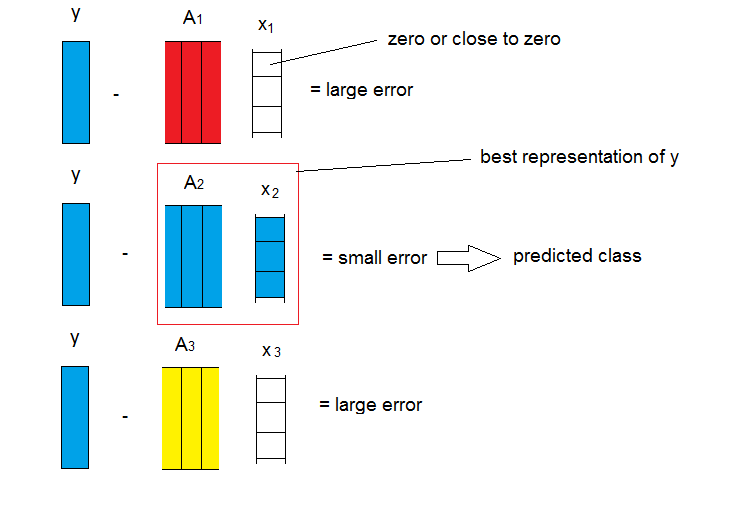}
\centering
\caption{Class estimation via class-wise regression error}
\label{src35}
\end{figure*}

The most crucial side of this approach is low sensitivity to corrupted features because of the fact that the errors due to these kinds of features are often sparse with respect to the dictionary elements \cite{wright2009robust}. Although this approach achieves state-of-the-art results, its sparse recovery part based on the $l_1$ minimization is extremely costly and infeasible to apply huge size problems such as hyper-spectral image classification. Alternative approaches such as collaborative representation-based classification (CRC) based on $l_2$ regularization have been proposed to reduce the computational cost. However, as observed in Fig. \ref{expL1L2}, the method uses non-sparse solution which only provides satisfactory results if the problem contains very large number of features. Therefore, this approach fails in most of the conventional classification problems involving a moderate number of features.


\chapter{BASIC THRESHOLDING CLASSIFICATION}
\label{chp:btc}

All the limitations with the described algorithms in the previous chapter force us to seek a method for classification problems having the following properties:
\begin{itemize}
\item It provides high classification accuracy.
\item It is robust, cost effective, and fast.
\item It is easy to implement.
\item It has no parameter tuning experimentally.
\end{itemize}
In this context, we propose the BTC algorithm for classification applications which approximately satisfies aforementioned properties. BTC is motivated by \emph{basic thresholding} (BT) algorithm which could be considered one of the simplest techniques in compressed sensing theory \cite{rauhut2008compressed, foucart2013mathematical}. Unlike BT, which is a generic sparse signal recovery algorithm, BTC is a classifier which utilizes Tikhonov's regularization \cite{golub1999tikhonov} for the overdetermined systems and performs classification using class-wise regression error. In the following section, we will develop the BTC algorithm step by step.

\section{Basic Thresholding Classifier}
In the generic sparse signal recovery problem, BT algorithm applies the following two stages given a dictionary containing orthonormal atoms:
\begin{itemize}
\item The first step consists of selecting a subset of the atoms ($D$) from the whole dictionary $A$ which are close in angle to the signal $y$.
\item The second step estimates the sparse code for $y$ with respect to the pruned dictionary $D$, which is performed by solving the following minimization problem:
\begin{equation}\label{bt_min}
\hat{x}=\arg~\min_x \norm{y-D x }_2
\end{equation}
The solution is obtained using the ordinary least squares (OLS) technique, $\hat{x}(\Lambda) = (D^TD)^{-1}D^Ty$. Note that in the expression, only the entries indexed with $\Lambda$ are updated and the others are set to zero. The index set $\Lambda$ consists of the indexes of $M$ (threshold) largest correlations.
\end{itemize}
Here, the assumption is that $y$ is a linear combination of a subset of orthonormal basis vectors included in the dictionary. In classification problems, the assumption of orthonormal basis vectors fails since the selected subset of the atoms contains extremely correlated columns possibly from the same class. Therefore, the matrix constructed using the selected atoms of the original dictionary becomes singular and the OLS produces meaningless bad sparse approximation. One solution to this problem is to utilize Tikhonov's regularization method for the overdetermined systems. In this case, the minimization problem could be written as follows:

\begin{equation}\label{btc_min}
\hat{x}=\arg~\min_x \norm{y-D x }^2_2 + \alpha\norm{x }^2_2
\end{equation}
The solution of the minimization problem could be obtained by manipulating the cost function below,
\begin{equation}\label{tikhonov_lin_btc}
J(x)= \norm{y-D x }^2_2 + \alpha\norm{x}^2_2
\end{equation}
The expanded version of it can be written as,
\begin{equation}\label{tikhonov_lin_exp}
\begin{split}
J(x) &= (y-D x )^T(y-D x ) + \alpha x^Tx\\
&=y^Ty - 2D^Tyx +x^TD^TDx +\alpha x^Tx
\end{split}
\end{equation}
In order to find $x$ which minimizes $J(x)$, we take the gradient of $J(x)$ with respect to $x$ and set it to zero, that is,
\begin{equation}\label{tikhonov_lin_grad}
\begin{split}
\nabla J(x) &= 0\\
 - 2D^Ty + 2D^TDx +2\alpha x &=0 \\
 (D^TD +\alpha) x &= D^Ty \\
  x &= (D^TD +\alpha I)^{-1} D^Ty
\end{split}
\end{equation}
The final equation which estimates the sparse code becomes as follows,
\begin{equation}\label{tikhonov_lin_final}
  \hat{x}(\Lambda) = (D^TD +\alpha I)^{-1} D^Ty
\end{equation}
Here, $\alpha$ is a small regularization constant which is generally problem dependent. Tikhonov's regularization has the following advantages:
\begin{itemize}
\item It filters out the small or zero eigenvalues of $D^TD$.
\item It is simple to implement and it requires no complex decompositions like singular value decomposition (SVD).
\item It preserves the structure of $D^TD$ matrix and the effects of it could easily be analyzed.
\end{itemize}

\begin{figure*}
\centering
\includegraphics[width = 0.9\textwidth]{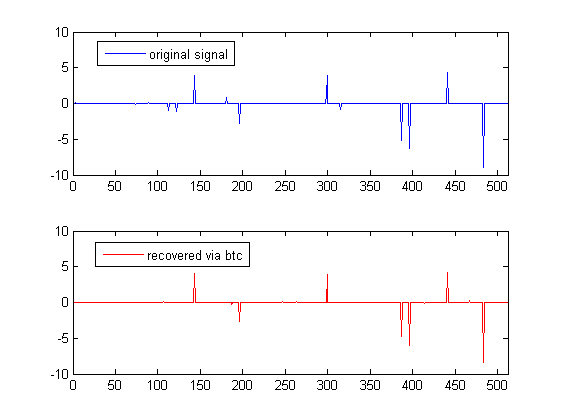}
\centering
\caption{An experiment: Sparse solution via BTC}
\label{expBtcSolution}
\end{figure*}

We can also repeat the same experiment performed in the previous chapter in order to see how BTC technique produces an efficient sparse solution. The result could be seen in Fig. \ref{expBtcSolution}. As we can observe, most of the components of sparse $x$ are perfectly recovered by BTC and only a few insignificant components of it are missed.  
  
As we stated previously, BTC not only recovers the sparse code, but also performs classification using a predetermined dictionary containing labeled training samples. It produces the class label of a test sample based on the minimal residual or equivalently class-wise regression error. The implementation is given in \textbf{Algorithm} \ref{alg:BTC}.

\alglanguage{pseudocode}
\begin{algorithm}
\caption{BTC} \label{alg:BTC}
{\normalsize \textbf{INPUT:}\\}
{\normalsize Dictionary} $A \in\mathbb{R}^{B \times N}$\\
{\normalsize Test sample} $y \in\mathbb{R}^{B}$\\
{\normalsize Threshold } $M \in \mathbb{N}$ \\
{\normalsize Regularization constant} $\alpha  \in (0,1)$ \\
{\normalsize Initial sparse coefficients with zeros} $\hat{x} \in\mathbb{R}^{N}$\\
{\normalsize \textbf{OUTPUT:}\\}
{\normalsize Class of }$y$\\
{\normalsize Residual vector} $\epsilon \in\mathbb{R}^{C}$\\
{\normalsize \textbf{PROCEDURE:} }
\begin{algorithmic}[1]
\State $v \gets A^T y$
\State $\Lambda \gets L_{M}(v)$
\State $D \gets A(\Lambda)$
\State $\hat{x}(\Lambda) \gets (D^TD+\alpha I)^{-1}D^Ty$
\State $\epsilon(j) \gets \norm{y-A_j \hat{x}_j}_2 ~~ \forall j \in \{1,2,\ldots,C\}$
\State $class(y) \gets \arg \min_j \epsilon(j)$
\end{algorithmic}
\end{algorithm}

The BTC algorithm performs the following steps:
\begin{itemize}
\item In the first step, BTC finds the correlation vector $v$ containing the linear correlations between the test sample $y$ and the samples of all training set $A$ in the original feature space via inner product.
\item In the second step, the operator $L_{M}(.)$ selects the index set $\Lambda$ consisting of the indexes of $M$ largest correlations.
\item Then, $D$ matrix is extracted from the original dictionary $A$ by means of the index set $\Lambda$.
\item The entires of the sparse code $x$ indexed with $\Lambda$ are estimated using regularized least squares technique with a small regularization constant $\alpha$.
\item Finally, it calculates the residuals for all classes, and predicts the class of $y$ based on the minimum residual.      
\end{itemize}

\section{Upper Bound for the Threshold}
Tikhonov's regularization not only helps us estimate the sparse code $x$ but also provides the upper bound for the threshold parameter $M$. Notice that this regularization technique requires the system of equations is overdetermined. Therefore, the number of columns of $D$, which is equal to the threshold parameter $M$, can not exceed the number of rows (features ($B$)) of it. We can then define the following relation,
\begin{equation}\label{MB_relation}
M < B
\end{equation}
This relation highly reduces the cost of the algorithm since $B \ll N$. A nice thing that the boundaries of $M$ provide us is that $M$ does not grow as the number of classes in the dictionary increases. For instance, assume that we have 1000 subjects and each contains 10 samples. Also suppose that we want to reduce the feature vector size to 120. In this example, $M$ will be less than 120. This will highly reduce the cost of BTC algorithm. However, for an $l_1$-minimization-based classification technique, the dictionary size will be $120\times 10^4$ which will extremely increase the convergence time. 
In the following part, we will introduce a sufficient identification condition (SIC) for BTC and based on it, we will develop a procedure to determine the best value of the parameter $M$ in the SIC sense.

\section{Sufficient Identification Condition for BTC}
In compressed sensing theory, one of the most fundamental property of a dictionary is the \emph{mutual incoherence} quantity (\ref{coherence}). 
\begin{equation}\label{coherence}
\mu \triangleq \max_{i \neq j} |<A(i),A(j)>|
\end{equation}
It simply measures how much any two elements in a dictionary look alike. Using $\mu$ one can determine under which conditions an algorithm recovers the correct sparse code. For instance, Tropp \cite{tropp2007signal} showed that if $\mu < \frac{1}{2K-1}$, then the OMP algorithm perfectly recovers any $K$-sparse vector $x$ from the measurements $y=A x$. In generic sparse signal recovery, $\mu$ is desired to be small. A dictionary having $\mu = 0.05$ will satisfy exact recovery condition for OMP to recover any sparse signal having sparsity of at most 10. Unfortunately, in classification applications, the columns of a dictionary are highly correlated. Therefore, we can not use the quantity $\mu$ to determine such conditions \cite{wright2010sparse}. \emph{Cumulative coherence} is another quantity which measures the maximum total similarity between a fixed column and the collection of other columns \cite{tropp2007signal}. Even the conditions based on this measure are not useful for classification problems because of the high correlations. 

Luckily, the BTC algorithm enables us to develop such conditions for the dictionaries in classification applications. Let us consider the following proposition:

\begin{prop}
A sufficient condition for BTC to identify a test sample $y$ belonging to the $i$th class of the dictionary $A$ is that
\begin{equation}\label{propos_btc}
\max_{j \neq i} \frac{\norm{y-A_i\hat{x}_i}_2}{\norm{y-A_j\hat{x}_j}_2} < 1
\end{equation} 
where $\hat{x_i}$ and $\hat{x_j}$ are the $i$th and $j$th class portions of $\hat{x}$, respectively.
\end{prop}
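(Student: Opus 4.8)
The plan is to reduce the stated inequality directly to the decision rule of \textbf{Algorithm}~\ref{alg:BTC}. Recall that the algorithm forms the residual vector $\epsilon$ with entries $\epsilon(j) = \norm{y - A_j \hat{x}_j}_2$ for $j \in \{1,2,\ldots,C\}$ and then assigns $class(y) = \arg\min_j \epsilon(j)$. Hence BTC identifies $y$ as belonging to class $i$ precisely when the $i$th class-wise residual is the strict minimum, i.e. when $\epsilon(i) < \epsilon(j)$ for every $j \neq i$. The entire argument therefore amounts to showing that the hypothesis (\ref{propos_btc}) is nothing but this condition written in ratio form, so no property of the Tikhonov solution $\hat{x}$ itself is needed beyond the residuals it induces.

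First I would observe that each residual $\epsilon(j) = \norm{y - A_j \hat{x}_j}_2$ is non-negative, and that for the ratio in (\ref{propos_btc}) to be well defined we may take $\epsilon(j) > 0$ for all $j \neq i$; if some competing residual vanished the ratio would be infinite and the hypothesis could not hold. Under this mild positivity assumption, dividing by $\epsilon(j)$ is order-preserving, so for a fixed $j \neq i$ the inequality $\epsilon(i)/\epsilon(j) < 1$ is equivalent to $\epsilon(i) < \epsilon(j)$.

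Next I would pass to the maximum over $j \neq i$. The condition $\max_{j \neq i} \epsilon(i)/\epsilon(j) < 1$ holds if and only if $\epsilon(i)/\epsilon(j) < 1$ for every $j \neq i$, which by the previous step is equivalent to $\epsilon(i) < \epsilon(j)$ for all $j \neq i$. This says exactly that $\epsilon(i)$ is the unique smallest entry of $\epsilon$, whence $\arg\min_j \epsilon(j) = i$ and the algorithm outputs $class(y) = i$. Since $y$ was assumed to belong to class $i$, this is correct identification, and sufficiency follows.

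The point requiring care here is conceptual rather than computational. Because the proposition asserts only sufficiency, I must drive the implication in the correct direction, i.e. from the hypothesis to the decision, and I should explicitly flag the harmless assumption that the competing residuals are strictly positive so that both the ratio and the outer $\max$ are meaningful. Beyond this the statement is a direct unpacking of the $\arg\min$ rule, and I would expect the formal proof to occupy only a few lines.
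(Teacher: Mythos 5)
Your proof is correct and takes essentially the same route as the paper's: both unpack the ratio condition $\max_{j\neq i}\epsilon(i)/\epsilon(j)<1$ as the statement that the $i$th residual is the strict minimum, which is exactly the $\arg\min$ decision rule of BTC. Your version is in fact slightly more careful than the paper's, since you make the direction of the implication explicit and flag the positivity of the competing residuals needed for the ratios to be meaningful.
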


\begin{proof}
A test sample $y$ belonging to the $i$th class can be successfully identified via BTC if and only if the residual $\norm{y-A_i\hat{x}_i}_2$ is minimum. It implies that,
\begin{equation}\label{propos_proof_btc}
\norm{y-A_i\hat{x}_i}_2 < \min_{j \neq i} \norm{y-A_j\hat{x}_j}_2
\end{equation} 
Dividing both sides of the inequality with the right hand side concludes the proof. 
\end{proof}

Based on the proposition, we define a quantity namely the SIC rate as follows: We replace the testing sample $y$ with $a_i$ which is a training sample belonging to $A_i$. We also replace the $A_i$ matrix with $\overline{A_i}$ which excludes the column $a_i$. Now the training sample $a_i$ is not belonging to the dictionary $A$ anymore. Then, the quantity could be expressed as follows,

\begin{equation}\label{sic_rate}
\beta_M(a_i) \triangleq  \max_{j \neq i} \frac{\norm{a_i-\overline{A_i}\hat{x}_i}_2}{\norm{a_i-A_j\hat{x}_j}_2}
\end{equation}
Notice that we used the notation $\beta_M(a_i)$ because the expression is also a function of the parameter $M$. If we can find a threshold value which minimizes $\beta_M(a_i)$, we then state that this is the best threshold for the selected $a_i$ in the SIC sense.  One could compute the value of $\beta_M(a_i)$ using \textbf{Algorithm} \ref{alg:SIC}. In the algorithm, the operator $L_{M-1}(.)$ selects the indexes of $M$ largest correlations excluding the first one which corresponds to the index of $a_i$. 
\alglanguage{pseudocode}
\begin{algorithm}
\caption{$\beta_M(a_i)$} \label{alg:SIC}
{\normalsize \textbf{INPUT:}\\}
{\normalsize Dictionary} $A \in\mathbb{R}^{B \times N}$\\
{\normalsize Any selected sample from the $i$th class } $a_i \in\mathbb{R}^{B}$\\
{\normalsize Threshold} $M \in \mathbb{N}$ \\
{\normalsize Regularization constant} $\alpha \in (0,1)$ \\
{\normalsize Initial sparse coefficients with zeros} $\hat{x} \in\mathbb{R}^{N}$\\
{\normalsize \textbf{OUTPUT:}\\}
{\normalsize $\beta_M(a_i) \in\mathbb{R}$}\\
{\normalsize \textbf{PROCEDURE:} }
\begin{algorithmic}[1]
\State $v \gets A^T a_i$
\State $\Lambda \gets L_{M-1}(v)$
\State $D \gets A(\Lambda)$
\State $\hat{x}(\Lambda) \gets (D^TD+\alpha I)^{-1}D^Ta_i$
\State $\epsilon(j) \gets \norm{a_i-A_j \hat{x}_j}_2 ~~ \forall j \in \{1,2,\ldots,C\}$
\State $\beta_M(a_i) \gets \max_{j \neq i} \epsilon(i) / \epsilon(j)$
\end{algorithmic}
\end{algorithm}
The idea of the best threshold in the SIC sense for any $a_i$ could be extended to the all samples of $A$. Let $\overline{\beta}_M$ be the quantity which is computed by averaging the $\beta_M(a_i)$ for all samples of $A$. This is shown by the following expression.
\begin{equation}\label{avg_sic_rate}
\overline{\beta}_M \triangleq \frac{1}{N}\sum_{k = 1}^{N} \beta_M(A(k))
\end{equation}
\section{Parameter Selection}
Parameter selection is quite critical for a classifier, which highly effects the classification performance. Some algorithms use cross validation in which some portion of the training data is used for testing purposes. This method may not always provide good parameter estimation. Bad estimation of the parameters highly reduces the classification accuracy. In the following parts, we will provide some procedures to estimate the parameters of BTC.   
\subsection{Estimating the Threshold Parameter}
Using the previously defined quantity $\overline{\beta}_M$, one can estimate the threshold parameter. The best estimate of $M$ could be found in the SIC sense using the following minimization expression:
\begin{equation} \label{estimate_btc_m}
\hat{M} = \arg \min_{M} \overline{\beta}_M
\end{equation} 
By varying $M$ from $1$ to $B$, we could plot the quantity $\overline{\beta}_M$, and choose the best $M$ which minimizes it. Since $\overline{\beta}_M$ is generally convex, the procedures such as binary search and steepest descent could be utilized and the best value of $M$ could be found in a few steps. However, since the parameter determination is an off-line procedure, exhaustive search could also be used.

\subsection{Selection of the Regularization Parameter}
The best value of $\alpha$ is generally problem dependent. Once it is determined for a classification application, it is fixed for all experiments. For example, for face identification one could set it to $0.01$ for all experiments while for pixel-wise hyper-spectral image classification, it can be fixed to $0.0001$. Sometimes, the optimal choice of $\alpha$ is not critical and it is set to a quite small number just to prevent the ill-conditioned matrix operation. For instance, for spatial-spectral hyper-spectral image classification, it is set to very small number such as $10^{-10}$.      


\chapter{KERNEL BASIC THRESHOLDING CLASSIFICATION}
\label{chp:kbtc}

\section{Mapping to Hyperspace}
As we stated previously, in practice, given test samples belonging to different classes may not be distinguishable or separable in the original $B$ dimensional feature space. One solution to this problem is to map the samples in the original feature space to some arbitrarily large or possibly infinite dimensional hyperspace via a mapping function $\phi$ \cite{scholkopf1998nonlinear}, that is,
\begin{equation}\label{mapping}
\phi: \mathbb{R}^{B} \longrightarrow \mathcal{F}~~by~~y \longmapsto \phi(y)
\end{equation}

\begin{figure*}
\centering
\includegraphics[width = 1.0\textwidth]{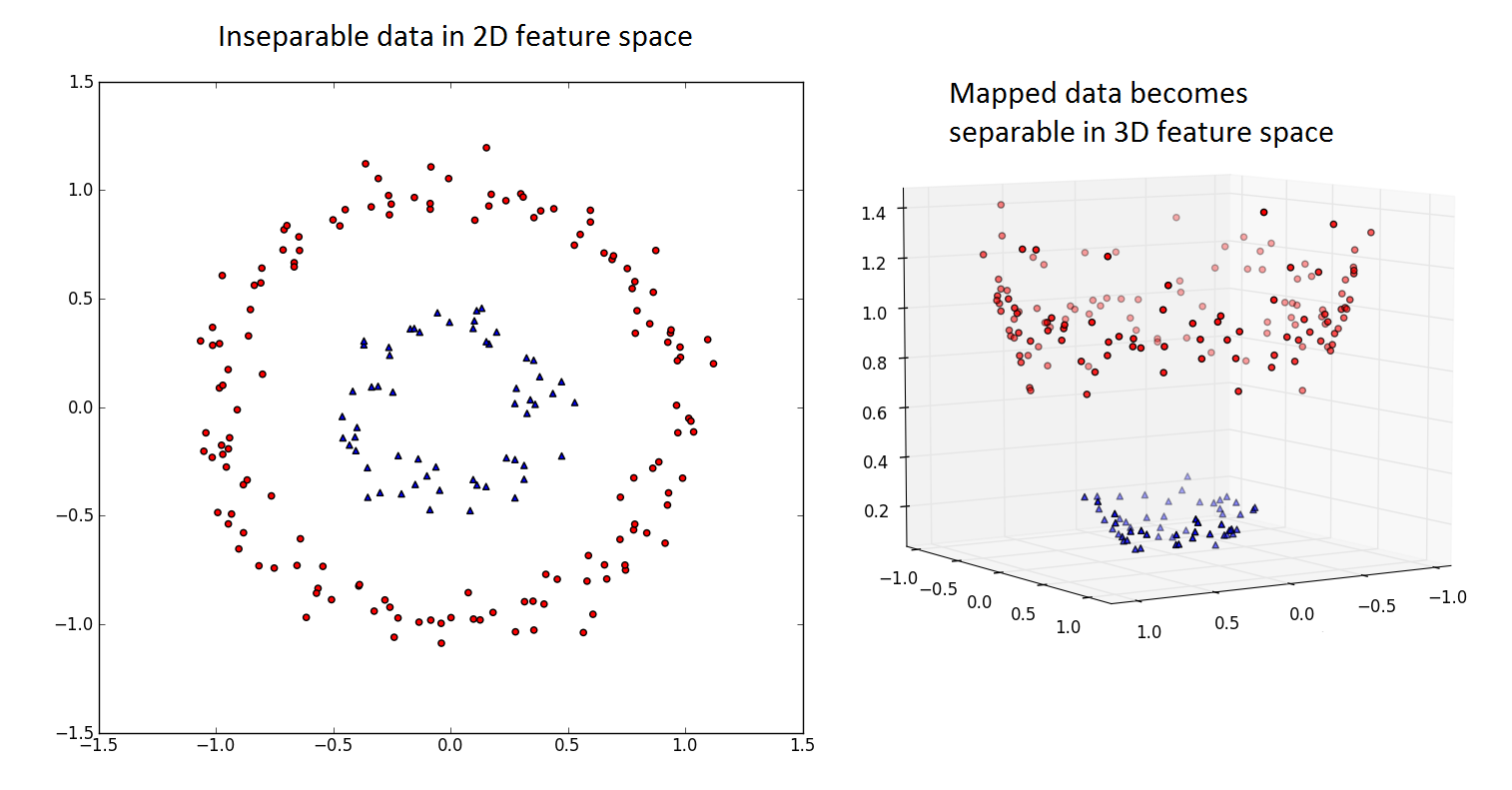}
\centering
\caption{Mapping to hyperspace}
\label{data2dto3d}
\end{figure*}
Fig. \ref{data2dto3d} shows how inseparable data in two dimensional space becomes separable in three dimensional space via a separating hyper-plane. This solution could also be applied to the BTC algorithm. In the following parts, we are going to develop kernel basic thresholding classifier by utilizing this technique.

\subsection{Mapping the Cost Function}
We know that BTC uses the cost function given below, 
\begin{equation}\label{tikhonov_lin}
J(x)= \norm{y-D x }^2_2 + \alpha\norm{x}^2_2
\end{equation}
If we map both the test sample $y$ and the selected $D$ matrix to $\mathcal{F}$ via the mapping function $\phi$, we then obtain the new cost function as follows,
\begin{equation}\label{tikhonov_nonlin}
J(x)= \norm{\phi(y)-\phi(D) x }^2_2 + \alpha\norm{x}^2_2
\end{equation}
It could also be expanded as,
\begin{equation}\label{tikhonov_nonlin_exp}
\begin{split}
J(x) &= (\phi(y)-\phi(D) x )^T(\phi(y)-\phi(D) x ) + \alpha x^Tx\\
&=\phi(y)^T\phi(y) - 2\phi(D)^T\phi(y)x+x^T\phi(D)^T\phi(D)x +\alpha x^Tx
\end{split}
\end{equation}
In order to find $x$ which minimizes $J(x)$ in $\mathcal{F}$, we take the gradient of $J(x)$ with respect to $x$ and set it to zero, that is,
\begin{equation}\label{tikhonov_nonlin_grad}
\begin{split}
\nabla J(x) &= 0\\
 - 2\phi(D)^T\phi(y) + 2\phi(D)^T\phi(D)x +2\alpha x &=0 \\
 (\phi(D)^T\phi(D) +\alpha) x &= \phi(D)^T\phi(y) \\
  x &= (\phi(D)^T\phi(D) +\alpha I)^{-1} \phi(D)^T\phi(y)
\end{split}
\end{equation}
The final equation which estimates the sparse code in $\mathcal{F}$ in terms of mapped elements becomes as follows,
\begin{equation}\label{tikhonov_nonlin_final}
  \hat{x}(\Lambda) = (\phi(D)^T\phi(D) +\alpha I)^{-1} \phi(D)^T\phi(y)
\end{equation}

\subsection{Kernel Trick}
Since we do not know the mapping function $\phi$, we can not directly calculate $\hat{x}$ using (\ref{tikhonov_nonlin_final}). However, notice that the expression contains the inner products $\langle \phi(D(i)), \phi(D(j))  \rangle$ $\forall i,j \in \{1,2,\ldots,M\}$ and $\langle \phi(D(i)), y \rangle$ $\forall i \in \{1,2,\ldots,M\}$. This enables us to use the kernel trick \cite{boser1992training} by which we can calculate the inner products of two vectors in $\mathcal{F}$ implicitly via a kernel function, that is, $K(x,y)=\langle\phi(x),\phi(y)\rangle$. Then, the kernelized version of (\ref{tikhonov_nonlin_final}) becomes as follows,
\begin{equation}\label{tikhonov_nonlin_kernel}
  \hat{x}(\Lambda) = (K(D,D) +\alpha I)^{-1} K(D,y)
\end{equation} 
where $K(D,D)$ is an $M \times M$ Gram matrix such that the entry corresponding to the $i$th row and $j$th column is equivalent to $K(D(i),D(j))=\langle \phi(D(i)), \phi(D(j))  \rangle$ and $K(D,y)$ is an $M \times 1$ vector whose entries are $K(D(i),y) = \langle \phi(D(i)), y \rangle ~~\forall i \in \{1,2,\ldots,M\}$.
\subsection{Mapping the Residuals}
After finding the sparse code estimation in $\mathcal{F}$, we also need to determine the distances or residuals in $\mathcal{F}$ between the test sample and the best representation of it for all classes, that is, $\epsilon(j) = \norm{\phi (y)-\phi(A_j) \hat{x}_j}_2 ~~ \forall j \in \{1,2,\ldots,C\}$. Since we can not calculate $\phi (y)$ and $\phi(A_j)$ directly, we are required to expand the expression as follows,

\begin{equation}\label{residual_mapped}
\begin{split}
\epsilon(j) &= \norm{\phi (y)-\phi(A_j) \hat{x}_j}_2\\
&= \sqrt{(\phi (y)-\phi(A_j) \hat{x}_j)^T(\phi (y)-\phi(A_j) \hat{x}_j)}\\ 
&= \sqrt{\phi(y)^T\phi(y) - 2\hat{x}_j^T \phi(A_j)^T\phi(y)  + \hat{x}_j^T\phi(A_j)^T\phi(A_j)\hat{x}_j} \\
&= \sqrt{K(y,y)-2\hat{x}_j^T K(A_j,y)+\hat{x}_j^TK(A_j,A_j)\hat{x}_j}
\end{split}
\end{equation} 
\section{Kernel Basic Thresholding Classifier}
After obtaining the required kernelized expressions, we can easily construct the KBTC algorithm. 

\alglanguage{pseudocode}
\begin{algorithm}
\caption{KBTC} \label{alg:KBTC}
{\normalsize \textbf{INPUT:}\\}
{\normalsize Dictionary} $A \in\mathbb{R}^{B \times N}$\\
{\normalsize Test sample} $y \in\mathbb{R}^{B}$\\
{\normalsize Threshold } $M \in \mathbb{N}$ \\
{\normalsize Regularization constant} $\alpha \in (0,1)$ \\
{\normalsize Initial sparse coefficients with zeros} $\hat{x} \in\mathbb{R}^{N}$\\
{\normalsize \textbf{OUTPUT:}\\}
{\normalsize Class of }$y$\\
{\normalsize Residual vector} $\epsilon \in\mathbb{R}^{C}$\\
{\normalsize \textbf{PROCEDURE:} }
\begin{algorithmic}[1]
\State $v \gets K(A,y) $
\State $\Lambda \gets L_{M}(v)$
\State $D \gets A(\Lambda)$
\State $\hat{x}(\Lambda) \gets (K(D,D)+\alpha I)^{-1}K(D,y)$
\State $\epsilon(j) \gets \sqrt{K(y,y)-2\hat{x}_j^T K(A_j,y)+\hat{x}_j^TK(A_j,A_j)\hat{x}_j} ~~ \forall j \in \{1,2,\ldots,C\}$
\State $class(y) \gets \arg \min_j \epsilon(j)$
\end{algorithmic}
\end{algorithm}

The implementation of the KBTC algorithm is presented in \textbf{Algorithm} \ref{alg:KBTC}. The KBTC technique performs the following steps:

\begin{itemize}
\item In the first step, KBTC finds the non-linear correlations between the test sample $y$ and the samples of all training set $A$  in $\mathcal{F}$ using the kernel function $K(.,.)$.
\item In the second step, it chooses the index set of largest $M$ correlations using the operator $L_M(.)$.
\item Then, it forms the sub matrix $D$ by using the indexes in the set $\Lambda$.
\item Using the expression $(K(D,D)+\alpha I)^{-1}K(D,y)$, it estimates the sparse code indexed with $\Lambda$.
\item Finally, it calculates the residuals for all classes, and predicts the class of $y$ based on the minimum residual.      
\end{itemize}
Note that computing $K(D,D)$ whenever a new sample is needed to be classified, is infeasible. Instead, we recommend computing $K(A,A)$ off-line and extract the required matrix as $K(D,D) \gets K(A(\Lambda),A(\Lambda))$. This way of computing $K(D,D)$ highly reduces the computational cost.
There are several kernels used in the literature such as the polynomial kernel $K(x,y)=(1+x^Ty)^d$ and the radial basis function (RBF) kernel $K(x,y)=exp(-\gamma \norm{x-y}^2_2)$. In this dissertation, we use the RBF kernel which is more commonly preferred. There are three parameters in the KBTC algorithm namely the regularization constant $\alpha$, the threshold $M$, and the $\gamma$ parameter of the kernel function. The choice of $\alpha$ is an easy one because it is used just to prevent the ill-conditioned matrix inversion. Typically, we set it to very small number such as $10^{-9}$, $10^{-10}$, etc. The most critical parameter is $\gamma$ which determines the hyperspace $\mathcal{F}$ in which the classes of a dictionary should be separate enough such that a test sample could be classified correctly. SVM technique uses cross validation to determine the $\gamma$ parameter. However, this method may not provide the best $\gamma$ for all classifiers. In the next part, we will introduce sufficient identification condition for KBTC and based on it, we will develop some procedures to determine the best values of $\gamma$ and $M$ parameters.   
\section{Sufficient Identification Condition for KBTC}

\begin{prop}
A sufficient condition for KBTC to identify a test sample $y$ belonging to the $i$th class of the dictionary $A$ is that
\begin{equation}\label{propos}
\max_{j \neq i} \frac{ \sqrt{K(y,y)-2\hat{x}_i^T K(A_i,y)+\hat{x}_i^TK(A_i,A_i)\hat{x}_i}}
{\sqrt{K(y,y)-2\hat{x}_j^T K(A_j,y)+\hat{x}_j^TK(A_j,A_j)\hat{x}_j}} < 1
\end{equation} 
where $\hat{x}$ is the sparse code estimated via KBTC, $\hat{x_i}$ and $\hat{x_j}$ are the $i$th and $j$th class portions of $\hat{x}$, respectively, and $K(.,.)$ is the kernel function.
\end{prop}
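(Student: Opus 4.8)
The plan is to mirror the proof of the linear BTC proposition, but to carry out the entire argument inside the feature space $\mathcal{F}$ rather than in the original $\mathbb{R}^{B}$. The key observation that makes this transparent is that line 5 of \textbf{Algorithm} \ref{alg:KBTC} defines each residual $\epsilon(j)$ precisely as the kernelized form of the feature-space distance $\norm{\phi(y)-\phi(A_j)\hat{x}_j}_2$; indeed, the expansion in (\ref{residual_mapped}) already establishes that
\begin{equation}
\epsilon(j) = \sqrt{K(y,y)-2\hat{x}_j^T K(A_j,y)+\hat{x}_j^TK(A_j,A_j)\hat{x}_j} = \norm{\phi(y)-\phi(A_j)\hat{x}_j}_2 .
\end{equation}
Thus the two square-root expressions appearing in the numerator and denominator of (\ref{propos}) are nothing but $\epsilon(i)$ and $\epsilon(j)$, so the inequality to be established is simply $\max_{j\neq i}\epsilon(i)/\epsilon(j) < 1$.

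First I would invoke the decision rule of KBTC (line 6 of \textbf{Algorithm} \ref{alg:KBTC}), which assigns $y$ to the class attaining the minimal residual. Correct identification of $y$ as a member of class $i$ therefore holds if and only if $\epsilon(i)$ is the strict minimum over all classes, that is,
\begin{equation}
\epsilon(i) < \min_{j \neq i} \epsilon(j).
\end{equation}
Next I would divide both sides of this inequality by the strictly positive quantity $\min_{j\neq i}\epsilon(j)$ and rewrite the resulting left-hand ratio as a maximum over $j\neq i$, exactly as in the linear case; substituting the kernel expressions for $\epsilon(i)$ and $\epsilon(j)$ then reproduces (\ref{propos}) verbatim, concluding the argument.

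I do not anticipate a genuine obstacle here, since the argument is a direct transport of the BTC proposition into $\mathcal{F}$ using the identity above. The only point requiring a word of care is the positivity of the denominator: one must note that $\epsilon(j)=\norm{\phi(y)-\phi(A_j)\hat{x}_j}_2$ is strictly positive whenever $\phi(y)$ does not coincide with its class-$j$ reconstruction in $\mathcal{F}$, which is exactly the regime in which the ratio is meaningful, so that dividing by $\min_{j\neq i}\epsilon(j)$ is legitimate. As with the linear proposition, the resulting statement is a \emph{sufficient} and not a necessary condition, because it is phrased in terms of the already-computed estimate $\hat{x}$ rather than quantified over all admissible sparse codes.
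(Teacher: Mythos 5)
Your proposal is correct and follows essentially the same route as the paper's own proof: both identify the kernelized residuals with the feature-space distances $\norm{\phi(y)-\phi(A_j)\hat{x}_j}_2$ via the expansion in (\ref{residual_mapped}), invoke the minimal-residual decision rule to get $\epsilon(i) < \min_{j\neq i}\epsilon(j)$, and divide by the right-hand side to obtain (\ref{propos}). Your added remark on the strict positivity of the denominator is a small point of rigor the paper leaves implicit, but it does not change the argument.
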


\begin{proof}
A test sample $y$ belonging to the $i$th class can be successfully identified if and only if the residual $\norm{\phi(y)-\phi(A_i)\hat{x}_i}_2$ is minimum. It implies that,
\begin{equation}\label{propos_proof}
\norm{\phi(y)-\phi(A_i)\hat{x}_i}_2 < \min_{j \neq i} \norm{\phi(y)-\phi(A_j)\hat{x}_j}_2
\end{equation} 
Expanding both sides of the inequality we obtain that,

\begin{equation}\label{propos_proof2}
\begin{split}
&\sqrt{\phi(y)^T\phi(y) - 2\hat{x}_i^T \phi(A_i)^T\phi(y)  + \hat{x}_i^T\phi(A_i)^T\phi(A_i)\hat{x}_i} \\
&< \min_{j \neq i} \sqrt{\phi(y)^T\phi(y) - 2\hat{x}_j^T \phi(A_j)^T\phi(y)  + \hat{x}_j^T\phi(A_j)^T\phi(A_j)\hat{x}_j} \\\\
&\sqrt{K(y,y)-2\hat{x}_i^T K(A_i,y)+\hat{x}_i^TK(A_i,A_i)\hat{x}_i} \\
&< \min_{j \neq i} \sqrt{K(y,y)-2\hat{x}_j^T K(A_j,y)+\hat{x}_j^TK(A_j,A_j)\hat{x}_j}\\
\end{split}
\end{equation} 

Finally, dividing both sides of the inequality with the right hand side concludes the proof. 
\end{proof}

Now, based on the proposition, we define a quantity by replacing the test sample $y$ with $a_i$ which is a training sample belonging to $A_i$ and also replacing the $A_i$ matrix with $\overline{A_i}$ which excludes the column $a_i$. It means that $a_i$ is not belonging to the dictionary $A$ anymore. Then, the quantity could be expressed as follows,

\begin{equation}\label{beta}
\beta(\gamma, M, a_i)\triangleq \max_{j \neq i} \frac{ \sqrt{K(a_i,a_i)-2\hat{x}_i^T K(\overline{A_i},a_i)+\hat{x}_i^TK(\overline{A_i},\overline{A_i})\hat{x}_i}}
{\sqrt{K(a_i,a_i)-2\hat{x}_j^T K(A_j,a_i)+\hat{x}_j^TK(A_j,A_j)\hat{x}_j}}
\end{equation}

One could easily compute $\beta(\gamma, M, a_i)$ using the \textbf{Algorithm} \ref{alg:KBTC_SIC}. It is similar to the \textbf{Algorithm} \ref{alg:KBTC}, however, this time the input is any selected training sample belonging to $i$th class and the operator $L_{M-1}(.)$ chooses the indexes of $M$ largest non-linear correlations excluding the first one which is the index of $a_i$ itself. The final output value is equivalent to the ratio of residual belonging to the $i$th class and minimum residual whose class is not equal to $i$.  

\alglanguage{pseudocode}
\begin{algorithm}
\caption{$\beta(\gamma, M, a_i)$} \label{alg:KBTC_SIC}
{\normalsize \textbf{INPUT:}\\}
{\normalsize Dictionary} $A \in\mathbb{R}^{B \times N}$\\
{\normalsize Any selected training sample from the $i$th class } $a_i \in\mathbb{R}^{B}$\\
{\normalsize Threshold } $M \in \mathbb{N}$ \\
{\normalsize Regularization constant} $\alpha \in (0,1)$ \\
{\normalsize Initial sparse coefficients with zeros} $\hat{x} \in\mathbb{R}^{N}$\\
{\normalsize \textbf{OUTPUT:}\\}
{\normalsize $\beta(\gamma, M, a_i) \in\mathbb{R}$}\\
{\normalsize \textbf{PROCEDURE:} }
\begin{algorithmic}[1]
\State $v \gets K(A,a_i) $
\State $\Lambda \gets L_{M-1}(v)$
\State $D \gets A(\Lambda)$
\State $\hat{x}(\Lambda) \gets (K(D,D)+\alpha I)^{-1}K(D,a_i)$
\State $\epsilon(j) \gets \sqrt{K(a_i,a_i)-2\hat{x}_j^T K(A_j,a_i)+\hat{x}_j^TK(A_j,A_j)\hat{x}_j} ~~ \forall j \in \{1,2,\ldots,C\}$
\State $\beta(\gamma, M, a_i) \gets \max_{j \neq i} \epsilon(i) / \epsilon(j)$
\end{algorithmic}
\end{algorithm}

As we stated previously, the quantity $\beta(\gamma, M, a_i)$ is quite important because we will utilize it in order to estimate the parameters of the KBTC algorithm.
\section{Parameter Selection}
Notice that we used the notation $\beta(\gamma, M, a_i)$ for our quantity because it is depended on $\gamma$ and the threshold $M$. In the following two parts, we develop some methodologies to estimate $\gamma$ and $M$ using the quantity $\beta(\gamma, M, a_i)$.

\subsection{Estimating the $\gamma$ Parameter}
Notice that $\beta(\gamma, M, a_i)$ is calculated only for a column of the dictionary $A$. If we repeat the procedure for all columns of $A$ and then average it, we obtain the following averaged quantity,
\begin{equation} \label{beta_avg_m}
\overline{\beta}(\gamma, M) \triangleq  \frac{1}{N}\sum\limits_{n=1}^{N} \beta(\gamma, M, A(n))
\end{equation} 
Knowing that $M$ could have values from $1$ to $B-1$, we could compute $\overline{\beta}(\gamma, M)$ for all $M$s and then average it. The final averaged quantity becomes as follows,
\begin{equation} \label{beta_avg}
\overline{\beta}(\gamma) \triangleq  \frac{1}{(B-1)}\frac{1}{N}\sum\limits_{m=1}^{B-1}\sum\limits_{n=1}^{N} \beta(\gamma, m, A(n))
\end{equation} 
Using $\overline{\beta}(\gamma) $ one can easily estimate the best $\gamma$ for KBTC by solving the following minimization problem.
\begin{equation} \label{estimate_gamma}
\hat{\gamma} = \arg \min_{\gamma} \overline{\beta}(\gamma)
\end{equation} 
As we stated previously, $\hat{\gamma}$ determines the hyperspace $\mathcal{F}$ that KBTC works in. After estimating $\gamma$, we also need to estimate $M$. In the following part, we will follow the similar procedures to estimate $M$.

\subsection{Estimating the Threshold Parameter}
Since we determined the $\gamma$, we could set it to the estimated value of it in the following function. 
\begin{equation} \label{beta_avg_m_gamma}
\overline{\beta}(\hat{\gamma}, M) \triangleq  \frac{1}{N}\sum\limits_{n=1}^{N} \beta(\hat{\gamma}, M, A(n))
\end{equation} 
Now we can easily find the best estimate of $M$ using the following minimization expression.
\begin{equation} \label{estimate_m}
\hat{M} = \arg \min_{M} \overline{\beta}(\hat{\gamma}, M)
\end{equation} 
By varying $M$ from $1$ to $B$, we could plot the quantity $\overline{\beta}(\hat{\gamma}, M)$, and choose the best $M$ which minimizes it.

\section{Alternative $5$th Step Calculation}

We could also develop some other alternative expressions for the $5$th stages of both \textbf{Algorithm} \ref{alg:KBTC_SIC} and \ref{alg:KBTC}. For this purpose, we will use the following proposition:

\begin{prop}
If the inequality $\norm{\phi(y)-\phi(A_i)x_i}_2 < \min_{j \neq i} \norm{\phi(y)-\phi(A_j)x_j}_2$ holds for a given sample $y$, then the following inequality also holds.
\begin{equation}\label{propos2}
|K(y,y)-x_i^T K(A_i,y)| < \min_{j \neq i} |K(y,y)-x_j^T K(A_j,y)|
\end{equation} 
\end{prop}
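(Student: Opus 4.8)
The plan is to pass from the norm inequality in $\mathcal{F}$ to the kernelized scalar quantities by expanding each squared residual exactly as in (\ref{residual_mapped}) and (\ref{propos_proof2}). Writing $e_j \triangleq \phi(y) - \phi(A_j)x_j$ for the class-$j$ residual vector, the hypothesis reads $\norm{e_i}_2 < \min_{j\neq i}\norm{e_j}_2$. The target quantity can be recast as an inner product, $K(y,y) - x_j^T K(A_j,y) = \langle \phi(y), \phi(y) - \phi(A_j)x_j\rangle = \langle \phi(y), e_j\rangle$, so the proposition asks whether the ordering of the residual norms $\norm{e_j}_2$ is inherited by the absolute inner products $|\langle \phi(y), e_j\rangle|$.

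First I would record the algebraic identity linking the two families of quantities. Decomposing $\phi(y) = \phi(A_j)x_j + e_j$ gives
\begin{equation}
\langle \phi(y), e_j\rangle = \langle \phi(A_j)x_j, e_j\rangle + \norm{e_j}_2^2,
\end{equation}
that is, $K(y,y) - x_j^T K(A_j,y) = \norm{e_j}_2^2 + \langle \phi(A_j)x_j, e_j\rangle$. This is the central relation: up to the cross term $\langle \phi(A_j)x_j, e_j\rangle$, the alternative quantity equals the square of the true residual.

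The cleanest route then exploits the fact that whenever $\phi(A_j)x_j$ is the orthogonal projection of $\phi(y)$ onto the span of the class-$j$ atoms, the residual $e_j$ is orthogonal to that span, the cross term vanishes, and $K(y,y) - x_j^T K(A_j,y) = \norm{e_j}_2^2 \ge 0$. The absolute values in (\ref{propos2}) become redundant, and since $t\mapsto t^2$ is strictly increasing on the nonnegative reals, squaring the hypothesis $\norm{e_i}_2 < \min_{j\neq i}\norm{e_j}_2$ immediately yields $|K(y,y)-x_i^T K(A_i,y)| < \min_{j\neq i}|K(y,y)-x_j^T K(A_j,y)|$. I would present the argument through exactly this monotonicity and nonnegativity bookkeeping.

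The hard part is precisely the cross term $\langle \phi(A_j)x_j, e_j\rangle$. In KBTC the coefficients come from the joint regularized least squares over the pooled dictionary $D=A(\Lambda)$, so $x_j$ is only the class-$j$ slice of the global solution and $\phi(A_j)x_j$ need not be the per-class projection; the exact orthogonality invoked above holds for the \emph{full} residual $\phi(y)-\phi(D)\hat{x}$ against every selected atom, not for each $e_j$ separately. A fully rigorous version therefore hinges on showing this cross term is zero, of the harmless sign, or negligible relative to $\norm{e_j}_2^2$, using the normal equations $K(D,D)\hat{x}(\Lambda)=K(D,y)$ together with the smallness of $\alpha$. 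I expect this to be the sole delicate point, the remaining steps being the routine kernel expansion and the monotonicity argument.
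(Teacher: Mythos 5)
Your central identity, $K(y,y)-x_j^T K(A_j,y)=\langle\phi(y),e_j\rangle=\norm{e_j}_2^2+\langle\phi(A_j)x_j,e_j\rangle$ with $e_j=\phi(y)-\phi(A_j)x_j$, is correct and genuinely clarifying, but the proof you build on it is conditional, and the condition fails for KBTC, as you yourself suspect. The cross term $\langle\phi(A_j)x_j,e_j\rangle$ does not vanish in general: the coefficients come from the regularized normal equations $(K(D,D)+\alpha I)\hat{x}(\Lambda)=K(D,y)$, so even the \emph{pooled} residual satisfies $\phi(D)^T\bigl(\phi(y)-\phi(D)\hat{x}\bigr)=\alpha\,\hat{x}(\Lambda)\neq 0$ for $\alpha>0$; and even at $\alpha=0$, orthogonality holds only for the full residual against the selected atoms, not for each per-class residual $e_j$ against the class-$j$ span, because $x_j$ is a slice of a joint solution rather than a per-class least-squares fit. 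Since you give no argument that the cross term is zero, of harmless sign, or dominated by $\norm{e_j}_2^2$, your argument establishes the proposition only in a special case (exact per-class orthogonal projections) that the algorithm does not provide. That is the genuine gap, and you correctly identified it as the crux rather than a side issue.

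For comparison, the paper takes a different route: it observes that under the representation assumption $\phi(y)=\phi(A)x$ all residual vectors lie in $\mathcal{R}(\phi(A))$, projects each residual onto the unit vector $\phi(y)/\norm{\phi(y)}_2$, asserts that this projection ``does not change the direction of the inequality,'' and then multiplies through by $\norm{\phi(y)}_2$ to obtain (\ref{propos2}). Note that this assertion is precisely the point your identity isolates: projection onto a fixed direction is a contraction but is not order-preserving ($\norm{e_i}_2<\norm{e_j}_2$ does not in general imply $|\langle u,e_i\rangle|<|\langle u,e_j\rangle|$ for a fixed unit vector $u$; take $e_j$ large but nearly orthogonal to $u$ and $e_i$ small but parallel to it), and the fact that all vectors lie in $\mathcal{R}(\phi(A))$ does not repair this. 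So the paper's proof steps over the same cross-term difficulty you flagged. Your write-up is more explicit about where the argument is delicate, but neither version closes the hole, and as submitted your proposal does not constitute a complete proof of the stated proposition.
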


\begin{proof}
We know that both $\phi(A_i)x_i$ and $\phi(A_j)x_j$ lie in the range space of $\phi(A)$. The sparse representation assumption $\phi(y) = \phi(A)x$ implies that $\phi(y)$ also lies in $\mathcal{R}(\phi(A))$. Therefore, both the residual vectors $\phi(y)-\phi(A_i)x_i$ and $\phi(y)-\phi(A_j)x_j$ lie in $\mathcal{R}(\phi(A))$. Projecting those vectors on to the vector $\phi(y)$ via dot product with $\phi(y)/\norm{\phi(y)}_2$ does not change the direction of the inequality, that is,
\begin{equation}
\norm{\langle \frac{\phi(y)}{\norm{\phi(y)}_2},  \phi(y)-\phi(A_i)x_i \rangle}_2 
< \min_{j \neq i}  \norm{\langle \frac{\phi(y)}{\norm{\phi(y)}_2}, \phi(y)-\phi(A_j)x_j \rangle}_2
\end{equation} 
Then the expression becomes as follows,
\begin{equation}
|\phi(y)^T\phi(y) -x_i^T \phi(A_i)^T\phi(y)| < \min_{j \neq i} |\phi(y)^T\phi(y) -x_j^T \phi(A_j)^T\phi(y)|
\end{equation}
Finally, writing the expression in terms of kernel function concludes the proof.
\begin{equation}
|K(y,y)-x_i^T K(A_i,y)| < \min_{j \neq i} |K(y,y)-x_j^T K(A_j,y)|
\end{equation} 
\end{proof}
By using the proposition, we can replace the $5$th steps of \textbf{Algorithm} \ref{alg:KBTC} and \ref{alg:KBTC_SIC} with $\epsilon(j) \gets |K(y,y)-\hat{x}_j^T K(A_j,y)|  ~~ \forall j \in \{1,2,\ldots,C\}$ and with $\epsilon(j) \gets |K(a_i,a_i)-\hat{x}_j^T K(A_j,a_i)|  ~~ \forall j \in \{1,2,\ldots,C\}$, respectively.

\chapter{FACE RECOGNITION via BTC}
\label{chp:face}

\section{Introduction}

Face recognition is obviously one of the most widely investigated subjects in computer vision and pattern recognition. Law enforcement, access control systems, and several commercial applications have made face recognition an attractive research area for the last two decades. Researchers have proposed various face recognition methods to provide robust, reliable, low-cost, and high-accuracy automatic identification of frontal-view faces under various difficulties. Among those, appearance based face recognition techniques have been very popular. One of the most popular appearance methods is the Principal Component Analysis (PCA) or Eigenfaces approach \cite{turk1991eigenfaces}. In PCA, high dimensional features of both train and test samples are projected to a lower dimensional subspace. Test images are then classified using Nearest Neighbor (NN) classifier in the new feature space. Another popular subspace method is the Linear Discriminant Analysis (LDA), also known as Fisher's LDA which extracts the lower dimensional features by using both within-class and between-class information \cite{belhumeur1997eigenfaces}. Afterwards, an NN classification is performed in the identification part. All those approaches and variants focus on the feature extraction and dimension reduction stages.  

Recently, Wright \emph{et al.} \cite{wright2009robust} have proposed a robust algorithm, Sparse Representation-based Classification (SRC), for face recognition which focuses on the classification stage rather than the feature extraction. They exploit the fact that the identity information of a person is sparse among all identities in a given face database. They use a mathematical model in which any test image lying in the span of a class of a given dataset can be represented by a linear combination of all train samples of the same set. This representation is then used for classification. They showed the robustness of the algorithm under both conventional (PCA, LDA, etc.) and unconventional (Down-sampled and Random) features. In the sparse information recovery part, they use $l_{1}$-minimization which requires solving a convex optimization problem. Unfortunately, $l_{1}$-minimization has a high computational complexity especially for large scale applications.

Recently, Yang \emph{et al.} \cite{yang2010fast} have investigated the performances of several popular and fast $l_{1}$-minimizers. As stated in \cite{yang2010fast}, one of the fastest algorithm is the homotopy method which was originally proposed in \cite{osborne2000new}. When we consider real time applications with large scale datasets, even homotopy method converges very slowly.

An interesting alternative to $l_{1}$-minimizers has been proposed in \cite{shi2010rapid} for face recognition problem. The authors actually propose a hash matrix in the feature extraction part, then they use either $l_{1}$-minimization or Orthogonal Matching Pursuit (OMP) for the sparse information recovery. They recommend OMP with hashing which is extremely fast. It is indeed quite fast, however, in case of noisy sparse signal recovery, its performance reduces dramatically. Therefore, it is not recommended for face recognition especially when the train samples have illumination, expression etc. variations. Another alternative has been proposed in \cite{shi2011face}. Instead of $l_{1}$-minimization, they use $l_{2}$-minimization which could be considered fast, however, it is very sensitive to alignment variations, therefore, is not so robust. 

In this chapter, we propose the BTC algorithm for face identification problem which is capable of identifying frontal-view test samples extremely rapidly and performing high recognition rates. By exploiting rapid recognition capability, we propose a fusion scheme in which individual BTC classifiers are combined to produce better identification results especially when very small number of features is used. Finally, we propose an efficient validation technique to reject invalid test samples. Numerical results show that BTC is a tempting alternative to greedy and $l_1$-minimization-based algorithms \cite{toksoz2015btc}.

\section{Feature Extraction}
                     
In face recognition, the goal is to correctly identify the class label of any given test image using a dictionary containing labeled training samples. In this context, gray scale face samples with size of $w\times h$ are embedded into $m$ dimensional vectors where $m=wh$. Let $a_{i,j}\in \mathbb{R}^m$ with $\norm{a_{i,j}}_2=1$ be the vector containing the pixels of the $j$th sample of class $i$ and let $A_i = [a_{i,1}~a_{i,2}~\ldots ~a_{i,N_i}]\in \mathbb{R}^{m\times N_i}$ represent the matrix containing the training samples of the $ith$ class with $N_i$ many samples and $C$ denote the number of classes. Then, the final face dictionary $A$ could be constructed in such a way that $A=[A_1~A_2~\ldots~A_C]\in \mathbb{R}^{m\times N}$ where $N=\sum_{i=1}^{C}N_i$.

When we consider a face image at a typical dimension, $100 \times 100$, the final vector storing the pixels of that image will have the size of $10^4$. The computational complexity of processing with dictionaries having this number of rows is extremely high. Also most of the data in such vectors are redundant and do not carry discriminative information. Therefore, a projection $R\in \mathbb{R}^{B \times m}$, where $B\ll m$, is required to remedy aforementioned problems. After finding such a transformation, one can calculate the lower dimensional representation of the dictionary, $\Phi\in\mathbb{R}^{B\times N}$, where $\Phi = RA$. In the following part, we will briefly describe the random projections technique.

\section{Random Projections}
According to Johnson-Lindenstrauss Lemma \cite{johnson1984extensions}, any $m$ dimensional set in Euclidean space could be embedded into $O(\log m/\epsilon^2)$ dimensional Euclidean space preserving the distances between any pair of points with small distortions which are not larger than a factor of $(1+\epsilon)$, where $0<\epsilon<1$. Random projections could be considered one of the such embeddings which project the original data spherically onto a random $B$-dimensional hyperplane ($B\ll m$). There are various advantages of the use of random projections: 1-) It is a very simple and computationally efficient technique. 2-) It preserves the distances between any pair of points with small distortions. 3-) It is data independent unlike PCA or LDA. 4-) It provides classifier output variability by which we can combine the outputs of several classifiers having different random projectors.

Random projection matrices could be constructed in several ways. For instance, each entry of the matrix is selected from the zero mean and i.i.d. random variables having Normal distribution. Then, each row of the matrix is normalized to unit length. This kind of projection was successfully applied in face recognition in \cite{wright2009robust}. Another way is to select each entry from i.i.d. random variables from the Bernoulli distribution. Both ways have similar mathematical properties. Storing Bernoulli random variables (integers) obviously requires less memory than storing Normal random variables (double size numbers). Achlioptas \cite{achlioptas2001database} proposed sparse random projections in which entries are selected from $\{+\sqrt{3}, 0, -\sqrt{3}\}$ with probabilities $\{\frac{1}{6}, \frac{2}{3}, \frac{1}{6}\}$. A more generic version of this technique was proposed in \cite{li2006very}. It is called very sparse random projection in which entries are selected from $\{+\sqrt{S}, 0, -\sqrt{S}\}$ with probabilities $\{\frac{1}{2S}, 1-\frac{1}{S}, \frac{1}{2S}\}$ where $S$ is a positive integer. Note that the random projection matrices from the Bernoulli distribution and variants are normalized by $\sqrt{m}$ to produce unit length row. In this thesis, we use the last technique in which we can adjust the sparsity of the input features by varying the $S$ parameter. Also, if we are working on a device having limited storage capacity, aggressive choice of $S$ (e.g. 100) will highly reduce the size of the random projection matrix (We can only store the locations of $\{+\sqrt{S}\}$s and $\{-\sqrt{S}\}$s) without losing significant discriminative features. 

\section{Recognition with a Single Classifier}
Assume that after random projection we obtained the lower dimensional representation of the dictionary as $\Phi \in\mathbb{R}^{B\times N}$ and the test face image as $y \in\mathbb{R}^{B}$. Then, one can use the following expression in order to identify $y$:

\begin{equation}\label{btc_face}
 Id(y) \gets BTC(\Phi, y, M, \alpha)
\end{equation}

The details of the algorithm could be found in Chapter \ref{chp:btc}. Using the described quantity (\ref{avg_sic_rate}), the threshold value $M$ can be estimated for any dataset. For instance, we computed $\overline{\beta}_M$ for Extended Yale-B dataset for 504 and 120 features. In Fig. \ref{betaYale}, we plotted $\overline{\beta}_M$ by simply ranging $M$ over the intervals (1, 120) and (1, 504). By observing the plotted curves, the optimum value of $M$ in SIC sense could be determined approximately by picking the value which makes $\overline{\beta}_M$ minimum. Note that the detailed descriptions of the Extended Yale-B dataset is given in Section \ref{chp5:exp}.

\begin{figure}[!t]
\centering
\includegraphics[width = 0.8\textwidth]{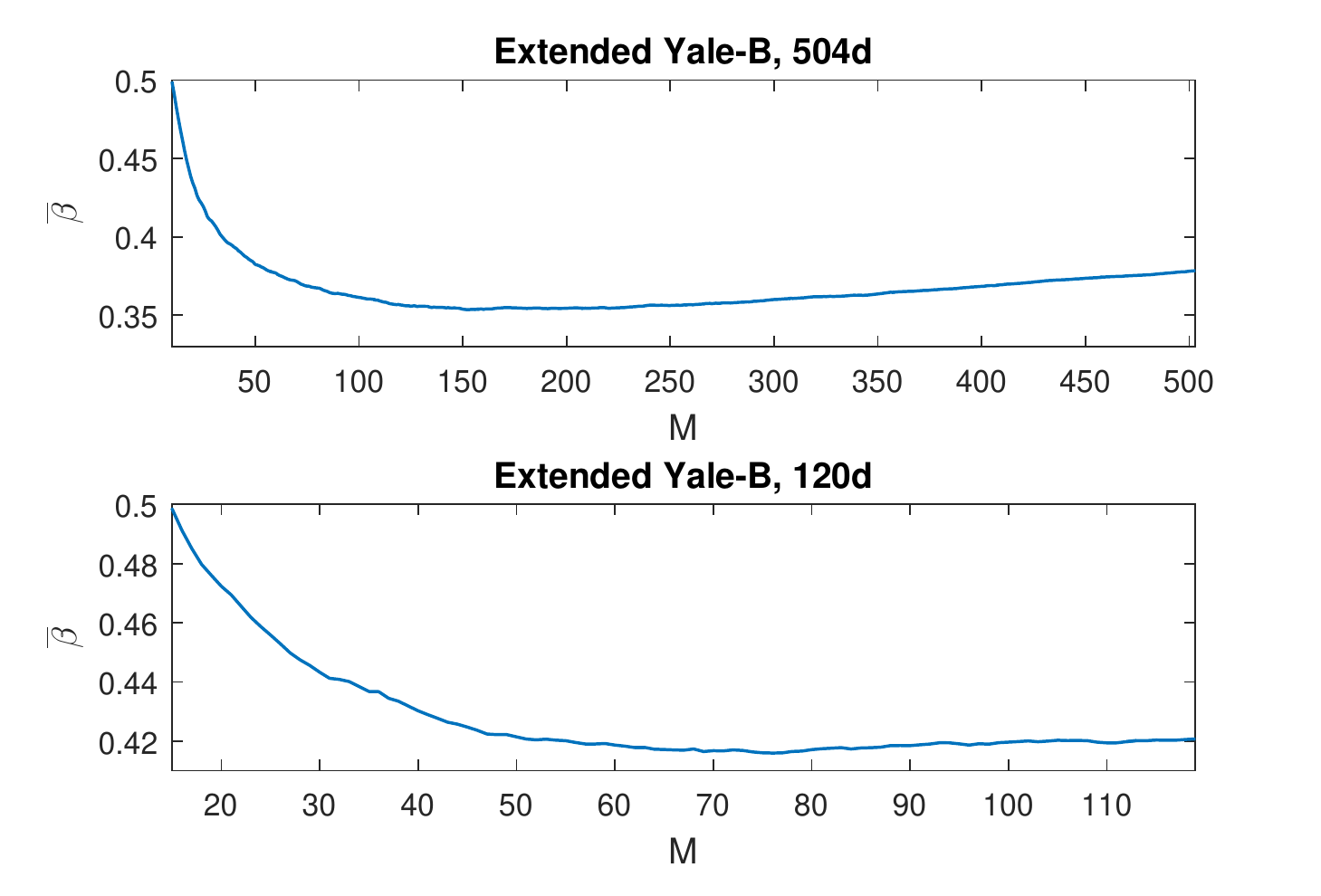}
\caption{ $\overline{\beta}_M$ vs threshold $M$ on Extended Yale-B dataset for dimensions 120 and 512}
\label{betaYale}
\end{figure} 

In generic classification problems, it is always desired to reduce the dimensionality of the input features. Here, the goal is not only to increase the discriminative properties of the features, but also to reduce the memory usage and computational burden. It is valid for face recognition as well. Consider the portable devices like cell phones and near feature intelligent robots for which energy requirement is always problematic. Therefore, the designers have to keep the processing capability of such intelligent devices in low levels. In this context, we need to construct reliable algorithms which are able to provide highly accurate results using only a small number of features. On the other hand, keeping the number of features small does not always provide good results especially for a single classifier. Immediate remedy of this problem is to combine the outputs of several classifiers. In this sense, we need high accuracy individual classifiers having enough level of output diversity. Those requirements are exactly met by individual BTC classifiers with different random projections. In the following section, we will develop an efficient fusion scheme by combining the outputs of individual BTC classifiers.  

\section{Classifier Ensembles}
In this part, we present a classifier fusion scheme to increase the classification accuracy further. In Fig. \ref{classifierEnsemble}, we see how we can combine the outputs of $n$ individual BTCs using parallel topology. This architecture is commonly used in the literature \cite{wozniak2014survey}, \cite{krawczyk2014clustering}, \cite{krawczyk2015usefulness}. Assume that the random projector for the $i$th BTC is $R^i \in\mathbb{R}^{B \times m}$ and the corresponding residual vector containing the errors for each class is $\epsilon^i \in\mathbb{R}^{C}$. We can then combine the individual residuals to obtain one final residual vector. Here, the sample mean of the residuals, as indicated in \cite{cyganek2012one} and successfully applied in \cite{yang2007feature}, could be considered as a good combiner, that is, $\epsilon = \frac{1}{n}\sum_{i=1}^{n}\epsilon^i$. The technique used in here combining the intermediate function output instead of direct classification results is called support function fusion \cite{wozniak2014survey}. After obtaining the overall residual vector, final classification is done by selecting the class which has the minimum residual. \textbf{Algorithm} \ref{alg:fusion} describes the practical implementation of BTC-n. Note that the fusion technique given here is similar to the method used in \cite{yang2007feature}. Here, we use the outputs of individual BTCs instead of the outputs of $l_1$ minimizers.
\begin{figure*}
\centering
\includegraphics[width = 1.0\textwidth]{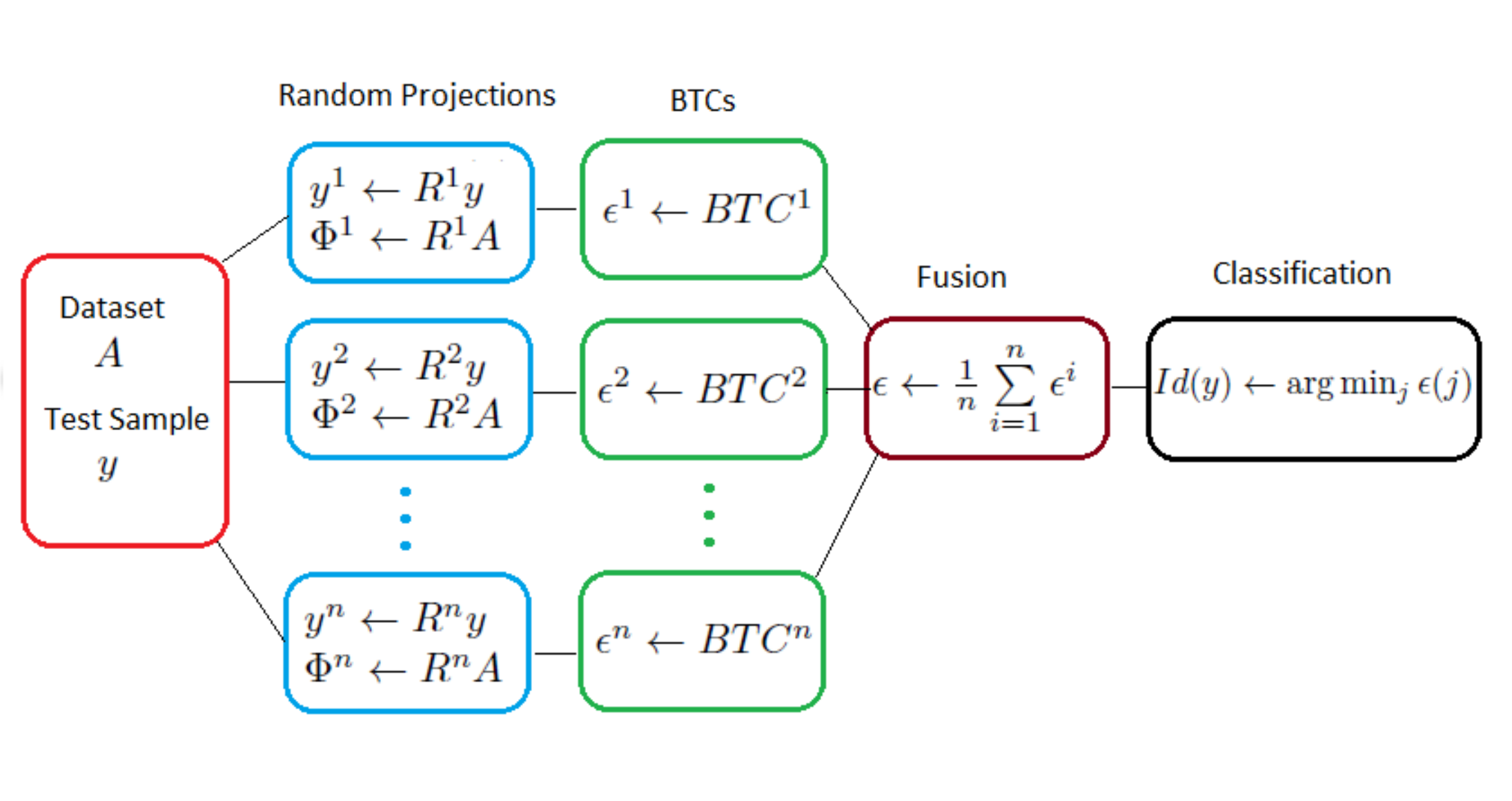}
\caption{Classifier ensembles}
\label{classifierEnsemble}
\end{figure*}
\alglanguage{pseudocode}
\begin{algorithm}
\caption{BTC-n} \label{alg:fusion}
{\normalsize \textbf{INPUT:}\\}
{\small Dictionary} $A \in\mathbb{R}^{m\times N}$\\
{\small Test sample} $y \in\mathbb{R}^{m}$\\
{\small Threshold parameter $M \in \mathbb{N}$ }\\
{\small Regularization parameter $\alpha \in (0,1)$ }\\
{\small Random Projectors} $R^i \in\mathbb{R}^{B \times m}~i \in \{1,2,...,n\}$\\
{\normalsize \textbf{OUTPUT:}\\}
{\small Identity of $y$}\\
{\normalsize \textbf{PROCEDURE:} }
\begin{algorithmic}[1]
\State $\epsilon(j) \gets 0 ~~ \forall j \in \{1,2,...,C\}$
\For{$i\gets 1, n$}
\State $\Phi^i \gets R^iA$
\State $y^i \gets R^iy$ 
\State $\epsilon^i \gets BTC(\Phi^i, y^i, M, \alpha)$
\State $\epsilon \gets \epsilon + \epsilon^i$
\EndFor
\State $\epsilon \gets \epsilon / n$
\State $Id(y) \gets \arg \min_j \epsilon(j)$
\end{algorithmic}
\end{algorithm}

\section{Experimental Verification and Performance Comparison}
\label{chp5:exp}
In this section, we will compare the classification performances of SRC, OMP, COMP, and BTC as well as their ensembles, SRC-n, OMP-n, COMP-n, and BTC-n, on publicly available datasets namely Extended Yale-B, Faces 94, 95, 96, and ORL in face identification domain. We tested the performances using 30, 56, 120, and 504 features as used in \cite{wright2009robust} and \cite{yang2007feature}. Note that in the beginning of the experiments we computed the threshold values for each dataset and feature vector size according to properties of the datasets. Also note that whenever the dictionary changes, that is, some classes and samples are added or removed from it, the $M$ value for that dictionary must be recalculated. The computed $M$ values are shown in Table \ref{thresholdValues}.

\begin{table*}
\renewcommand{\arraystretch}{1}
\caption{ Computed $M$ values for each dataset and dimension}
\label{thresholdValues}
\centering
\begin{tabular}{l c c c c}
\hline
 Dimension(d)    & 30 & 56 & 120 & 504\\
\hline
 Extended Yale-B & 29 & 48 & 88 & 172\\ 
 Faces 94        & 11 & 11 & 33 & 21\\ 
 Faces 95        & 11 & 26 & 36 & 38\\  
 Faces 96        & 11 & 13 & 35 & 43\\ 
 ORL             & 6  & 6  & 6  & 6\\ \hline 
\end{tabular}
\end{table*}

\begin{figure*}
\centering
\includegraphics[width = 0.8\textwidth]{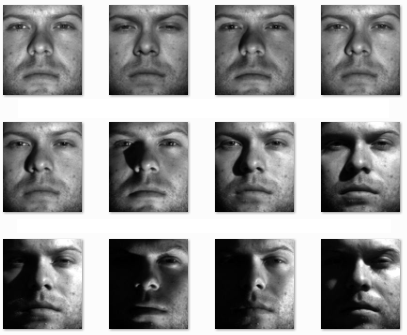}
\centering
\caption{Sample faces from Extended Yale-B dataset}
\label{sampleFaces}
\end{figure*}

\subsection{Results on Extended Yale-B}
The Extended Yale-B face dataset contains 38 classes (subjects) each having about 64 frontal face samples with resolution $192$ x $168$. The samples have perfect alignment and different illuminations per individual (Fig. \ref{sampleFaces}). As in \cite{wright2009robust} and \cite{yang2007feature}, we randomly selected half of the samples (about 32 per class) for training and remaining half for testing to make sure that the results do not depend on any special configuration of the dictionary. We used the same random instance for all algorithms. For SRC method, we chose Homotopy $l_1$ minimizer during the experiments with tolerance 1e-12. We set the sparsity level of OMP algorithm to the average number of samples per class ($\overline{N}$) for all experiments. For BTC algorithm, we used the threshold values given in Table \ref{thresholdValues} and for all experiments we set the regularization constant $\alpha$ to 0.01. For face identification, this is the value that we propose based on the experiment we provide on Extended Yale-B. As shown in Fig. \ref{fig:alpha}, the classification accuracies are maximized at 0.01 for BTC and BTC-n techniques. We repeated the experiments $50$ times with $50$ different random projectors (Bernoulli random projections) for each method to make sure that results do not depend on any special choice of the projection matrix. The classification accuracies of each method are shown in Table \ref{resultsYale}. In multi-class problems, considering only classification accuracy may not be enough to show the real performance of a classifier as indicated in \cite{galar2015drcw}. Therefore, we also added kappa ($\kappa$) statistic in Table \ref{resultsYale} which measures pairwise degree of agreement among set of raters \cite{scales1960ment}.    
\begin{table*}
\renewcommand{\arraystretch}{1}
\caption{ Recognition rates and $\kappa$ statistics on Extended Yale-B dataset}
\label{resultsYale}
\centering
\begin{tabular}{|c |c |c |c| c | c| c| c| c|}
\cline{2-9}
\multicolumn{1}{ c| }{}& \multicolumn{4}{ c |}{Accuracy} & \multicolumn{4}{ |c |}{$\kappa$ statistics} \\ 
\hline
Dimension(d) & 30 & 56 & 120 & 504 & 30& 56 & 120 & 504\\
\hline 
SRC [\%]     & 83.47 & \textbf{91.82} & \textbf{95.77} & 97.40 & 83.02 & \textbf{91.60} & \textbf{95.65} & 97.33 \\ 
OMP [\%]     & 66.59 & 76.66 & 89.91 & 96.52 & 65.69 & 76.03 & 89.64 & 96.43 \\  
COMP [\%]     & 72.23 & 86.75 & 93.24 & 97.03 & 71.48 & 86.40 & 93.05 & 96.95 \\ 
BTC [\%]    & \textbf{83.98} & 91.25 & 95.44 & \textbf{98.14} & \textbf{83.54} & 91.01 & 95.32 & \textbf{98.09} \\  \hline 

SRC-5 [\%]  & 90.87 & 94.98 & 96.87 & 97.45 & 90.62 & 94.84 & 96.79 & 97.38 \\  
OMP-5 [\%]  & 78.20 & 81.90 & 93.58 & 96.87 & 77.61 & 81.41 & 93.41 & 96.79 \\ 
COMP-5 [\%]  & 83.96 & 90.46 & 94.90 & 97.37 & 83.53 & 90.20 & 94.76 & 97.29 \\ 
BTC-5 [\%]  & \textbf{92.51} & \textbf{95.97} & \textbf{97.45} & \textbf{98.84} & \textbf{92.31} & \textbf{95.86} & \textbf{97.38} & \textbf{98.81} \\  \hline 
\end{tabular}
\end{table*}

In the single classifier case, we see that the rates of SRC and BTC are very close to each other. In case of 30 and 504 features, BTC slightly outperforms the SRC method. On the other hand, SRC slightly performs better than BTC at 56d and 120d. The performance of OMP algorithm is quite low at dimensions 30d, 56d, and 120d as expected. We also added Cholesky-based OMP (COMP) which performs better than ordinary OMP. In the case of classifier ensembles using 5 individual classifiers, we see that the accuracies highly increase. For this case, BTC-5 outperforms the other techniques at all dimensions. The ensemble technique namely \emph{E-Random} or SRC-5 proposed in \cite{yang2007feature} achieves {90.72, 94.12, 96.35, 98.26} percent at the same dimensions, respectively, using 5 classifiers. we see that BTC-5 outperforms \emph{E-Random} which is computationally quite expensive.   

\begin{figure*}
\centering
\includegraphics[width=0.8\textwidth]{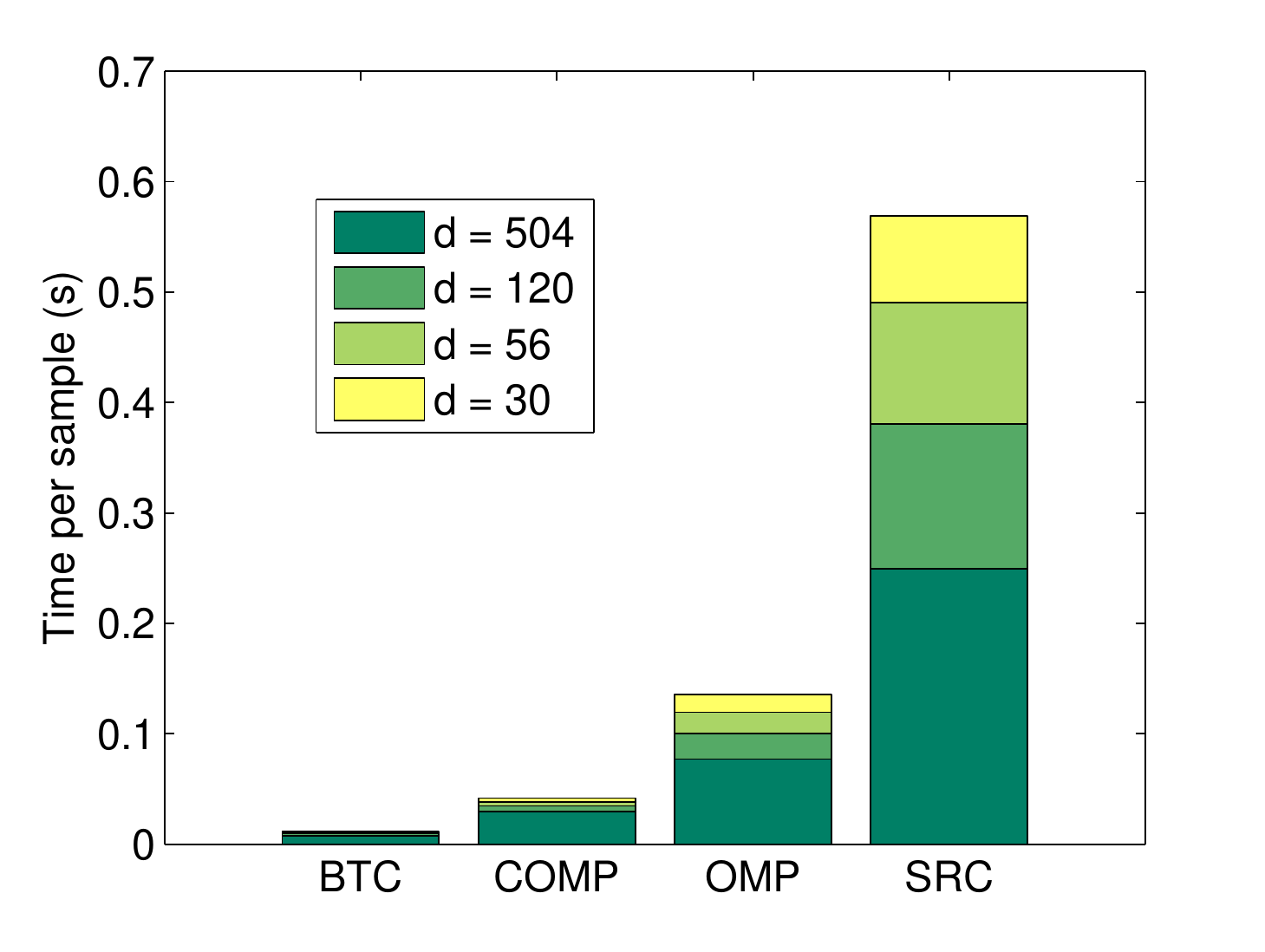}
\caption{Classification times on Extended Yale-B}
\label{fig:yaleTimes}
\end{figure*}

\begin{figure*}
\centering
\includegraphics[width=0.8\textwidth]{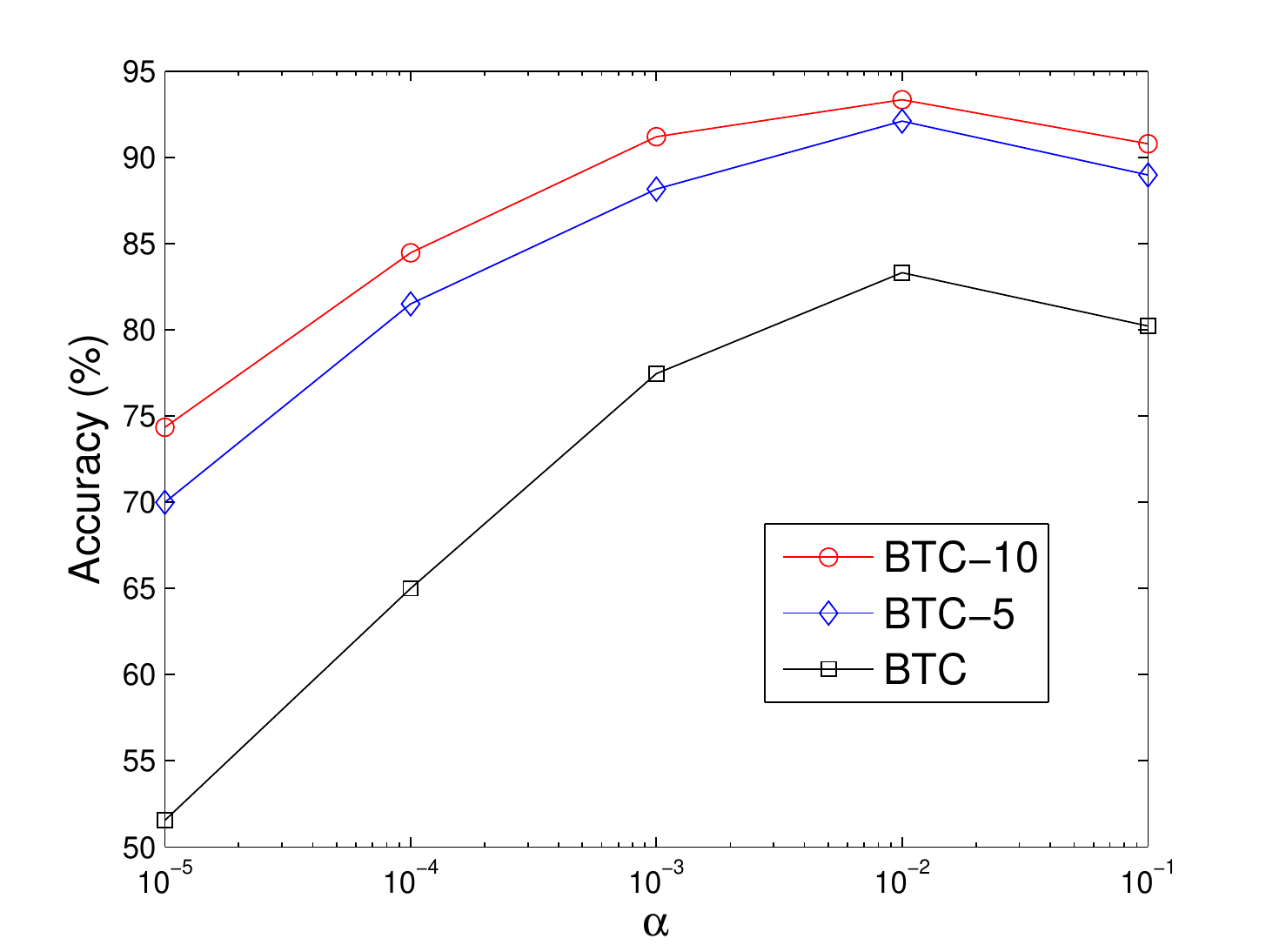}
\caption{Classification accuracies with respect to regularization parameter ($\alpha$)}
\label{fig:alpha}
\end{figure*}

\begin{figure*}
\centering
 \includegraphics[width=0.8\textwidth]{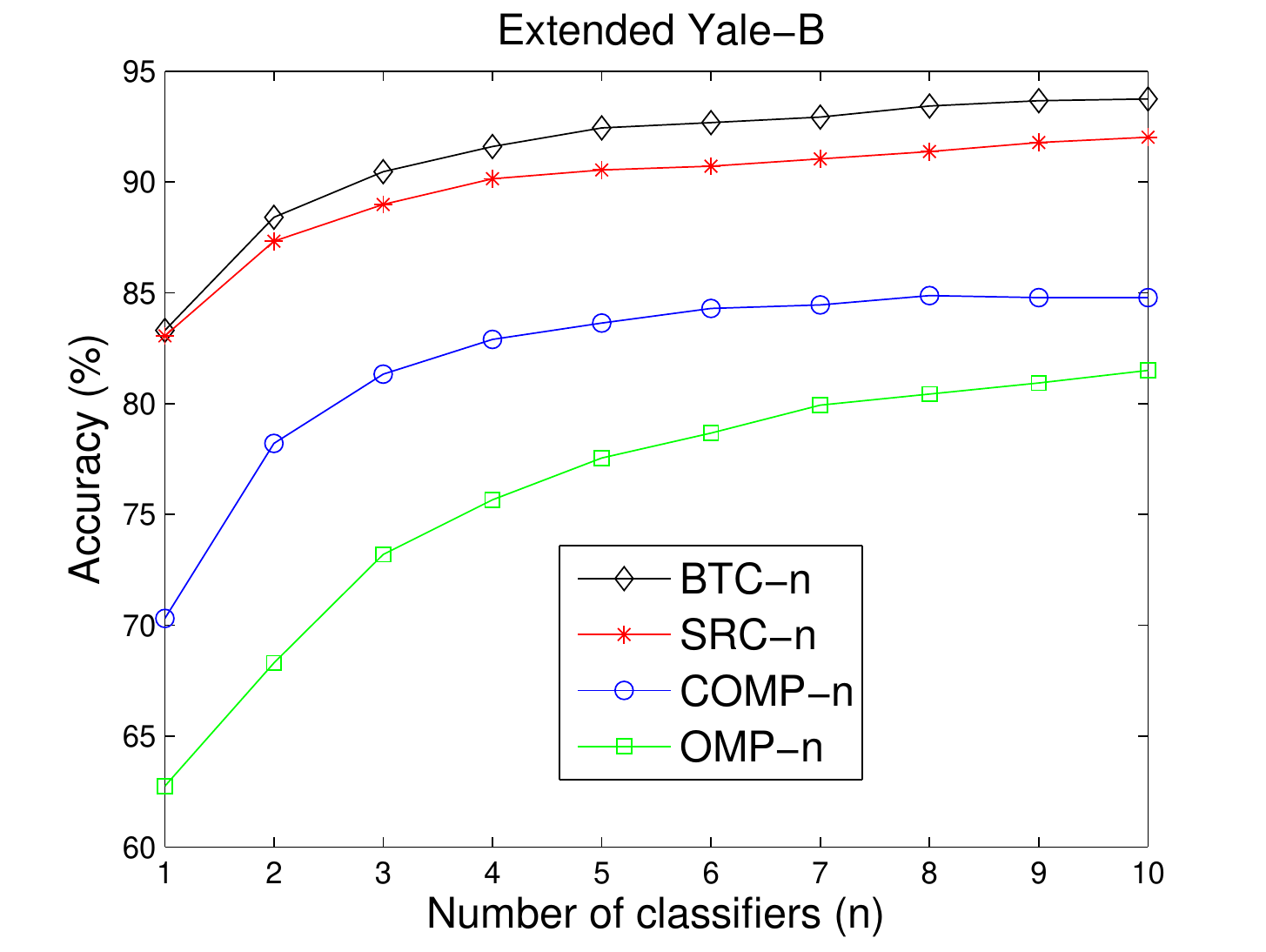}
 \caption{Classification accuracies on Extended Yale-B using ensemble techniques}
 \label{fig:yaleEnsemble}
\end{figure*}

\begin{figure*}
\centering
 \includegraphics[width=0.8\textwidth]{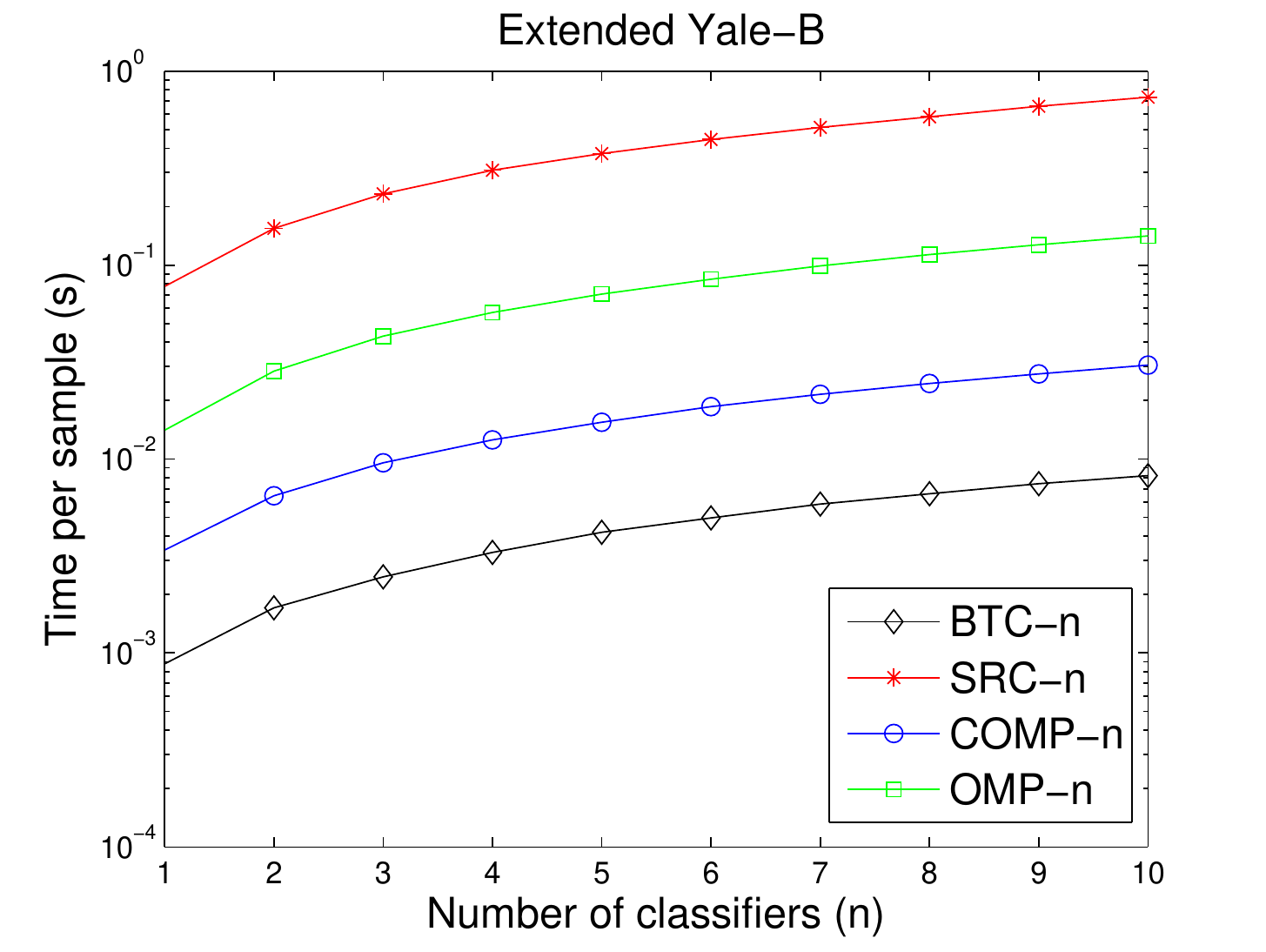}
 \caption{Classification times on Extended Yale-B  using ensemble techniques}
 \label{fig:yaleEnsembleTime}
\end{figure*}

We also compared the computation times per sample for each method in order to further investigate the feasibility of each proposal. Fig. \ref{fig:yaleTimes} shows the classification times per individual for each method at each dimension. Note that all experiments were performed in MATLAB on a workstation with dual quad-core 2.67 GHz Intel processors and 4GB of memory. As expected SRC is slow and OMP and COMP are fast. On the other hand, we see that single BTC is extremely fast as compared to the others. The speed of the proposal enables us efficiently use the ensemble technique which is superior to the single classifiers. The SRC-n approach is highly inefficient in terms of computational cost although its classification accuracy is high. We also compared the performances of the ensemble techniques with respect to the number of classifiers using only 30 features in Fig. \ref{fig:yaleEnsemble}. We observe that for all cases BTC-n is superior to the other methods. Up to 5 classifiers, the classification accuracies immediately increase. Based on this observation, combining 5 or 10 classifiers could be good choice depending on the speed of the algorithm. The corresponding computation times for the ensemble techniques are given in Fig. \ref{fig:yaleEnsembleTime}. The results show that the computation performance differences between BTC-n and the other approaches are significant.  
\subsection{Results on Faces 94, 95, 96, and ORL}
The goal of the experiments on Faces 94, 95, and 96 is to compare the classification performances under automatic object detection framework. Note that we used \emph{Viola-Jones} detector to capture the faces \cite{viola2001rapid}. Since the detector that we use is not able to perfectly capture and align the test samples, the maximum identification rates of the classifiers are decreased. Because of this reason, we focus on the performance differences rather than the maximum rates achieved. Brief description of the datasets could be seen in Table \ref{dbDescription}. The detailed description of each dataset (Faces 94, 95, 96) could be found in \cite{spacek2007collection}. Notice that this time we have more realistic scenarios which contain misalignments, head scale, expression, and illumination variations in the cropped faces. We have also chance to compare the performances under ORL which has a few (5) training samples per subject. Configurations for these experiments were similar to the previous ones. The only differences were the number of classes and samples for each dataset.

\begin{table}[!t]
\renewcommand{\arraystretch}{1}
\caption{ Description of each dataset}
\label{dbDescription}
\centering
\begin{tabular}{|c|c|c| c| c |c| c|}
\hline
Dataset & Difficulty & Resolution  & Captured & Num. of & Tr. smp.& Te. smp. \\
  &  &  & resolution & classes & per class & per class\\
\hline 
Faces 94 & Easy & $180\times 200$ & $123\times 123$ & 152 & 10 & 10\\
Faces 95 & Medium & $180\times 200$ & $75\times 75$ & 72 & 10 & 10\\
Faces 96 & Hard & $196\times 196$ & $98\times 98$ & 151 & 10 & 10\\
ORL      & Medium & $9\times 112$ & $92\times 112$ & 40 & 5 & 5\\ \hline 
\end{tabular}
\end{table}

\begin{figure*}
\centering
\includegraphics[width=0.8\textwidth]{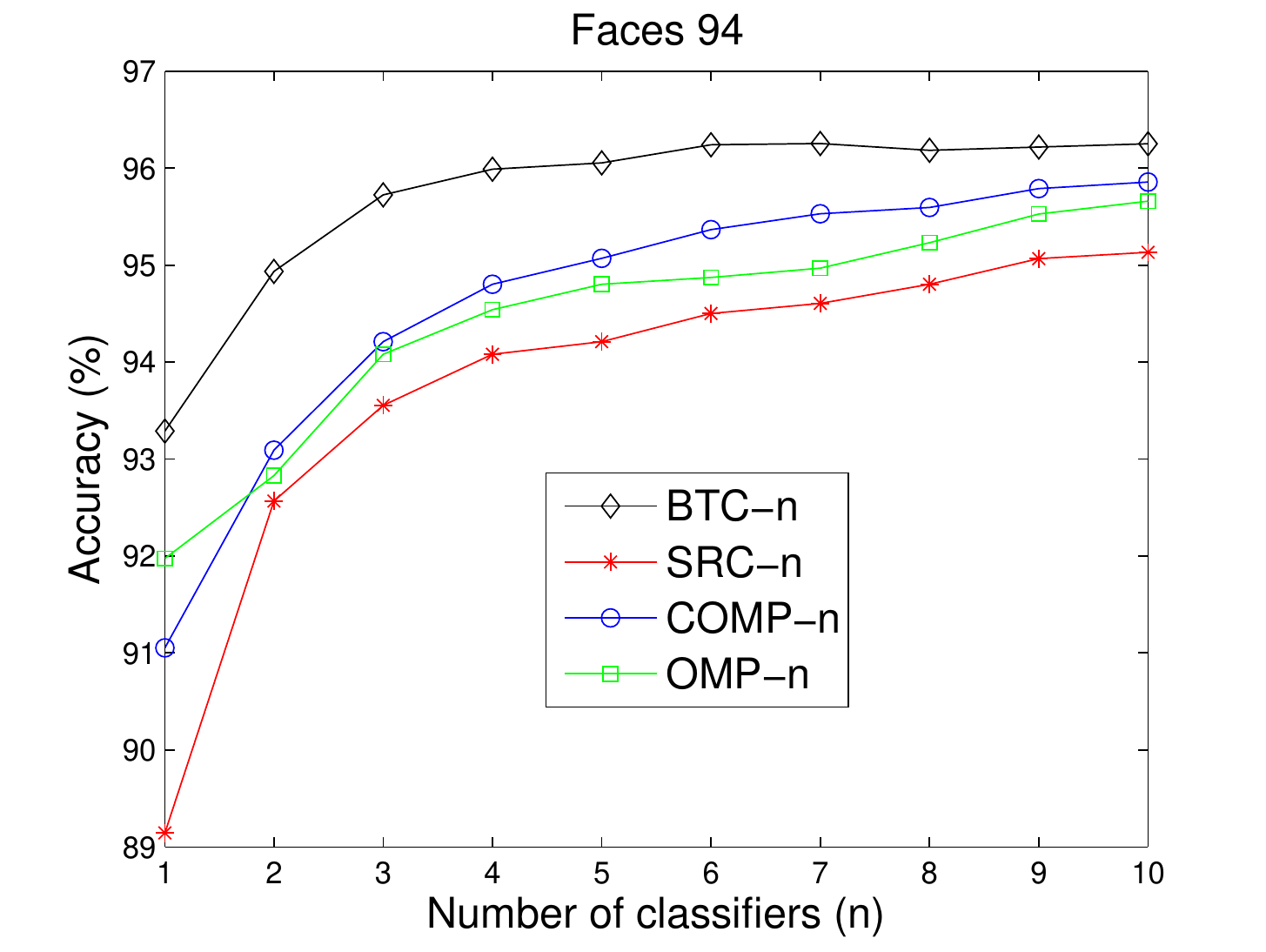}
\caption{Recognition rates on Faces 94}
\label{fig:faces94}
\end{figure*}

\begin{figure*}
\centering
\includegraphics[width=0.8\textwidth]{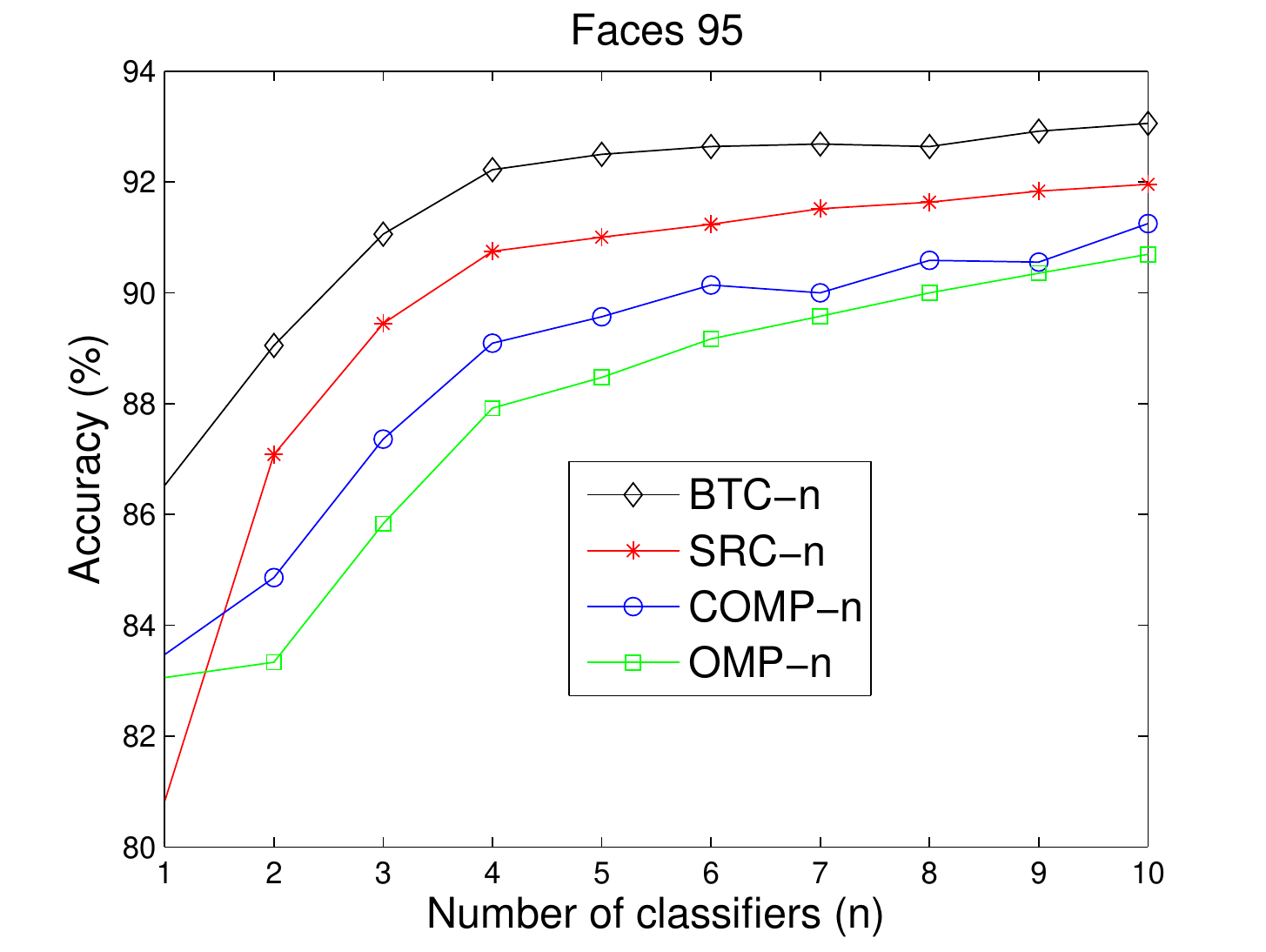}
\caption{Recognition rates on Faces 95}
\label{fig:faces95}
\end{figure*}

\begin{figure*}
\centering
 \includegraphics[width=0.8\textwidth]{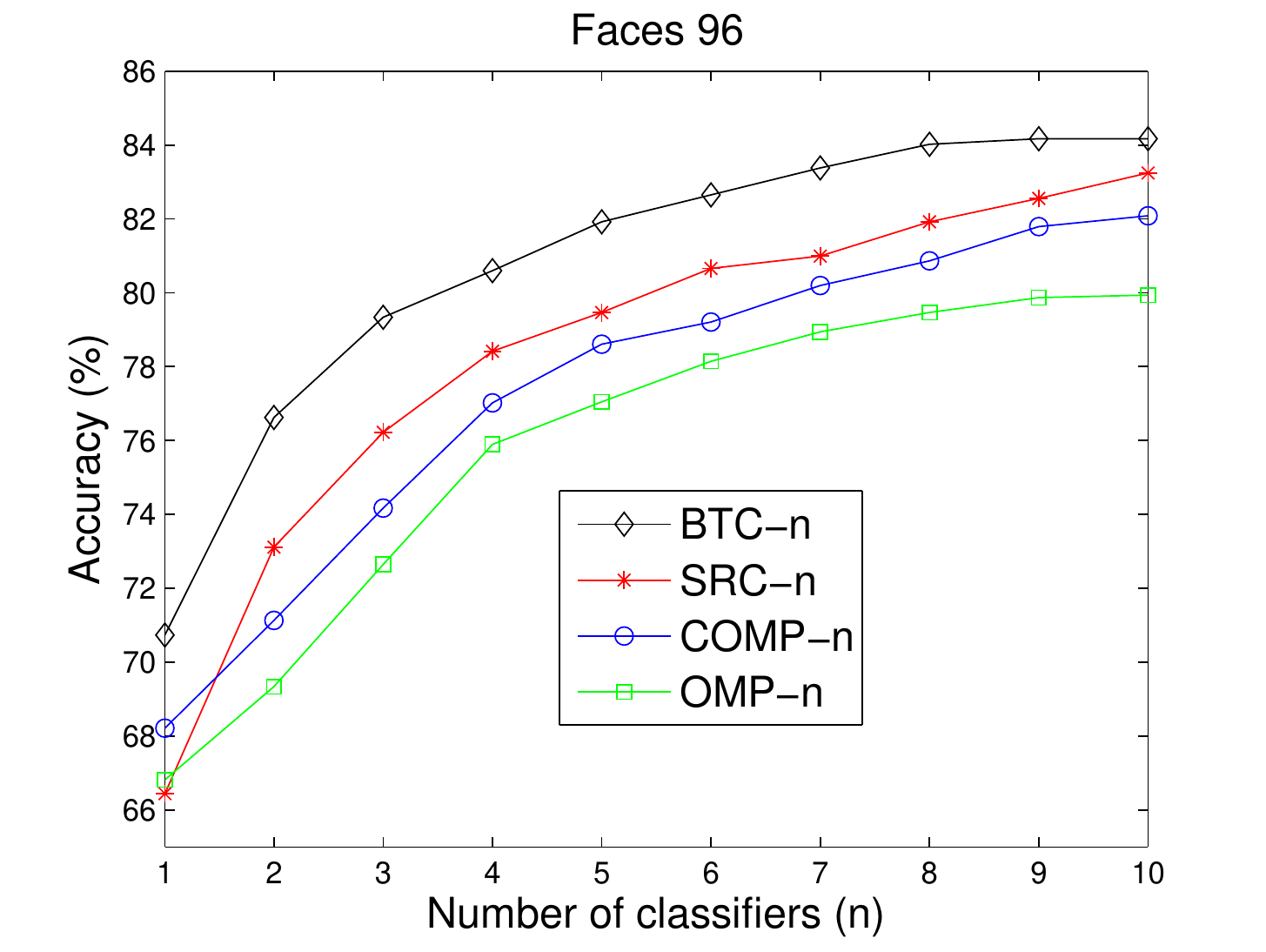}
 \caption{Recognition rates on Faces 96}
 \label{fig:faces96}
\end{figure*}

\begin{figure*}
\centering
 \includegraphics[width=0.8\textwidth]{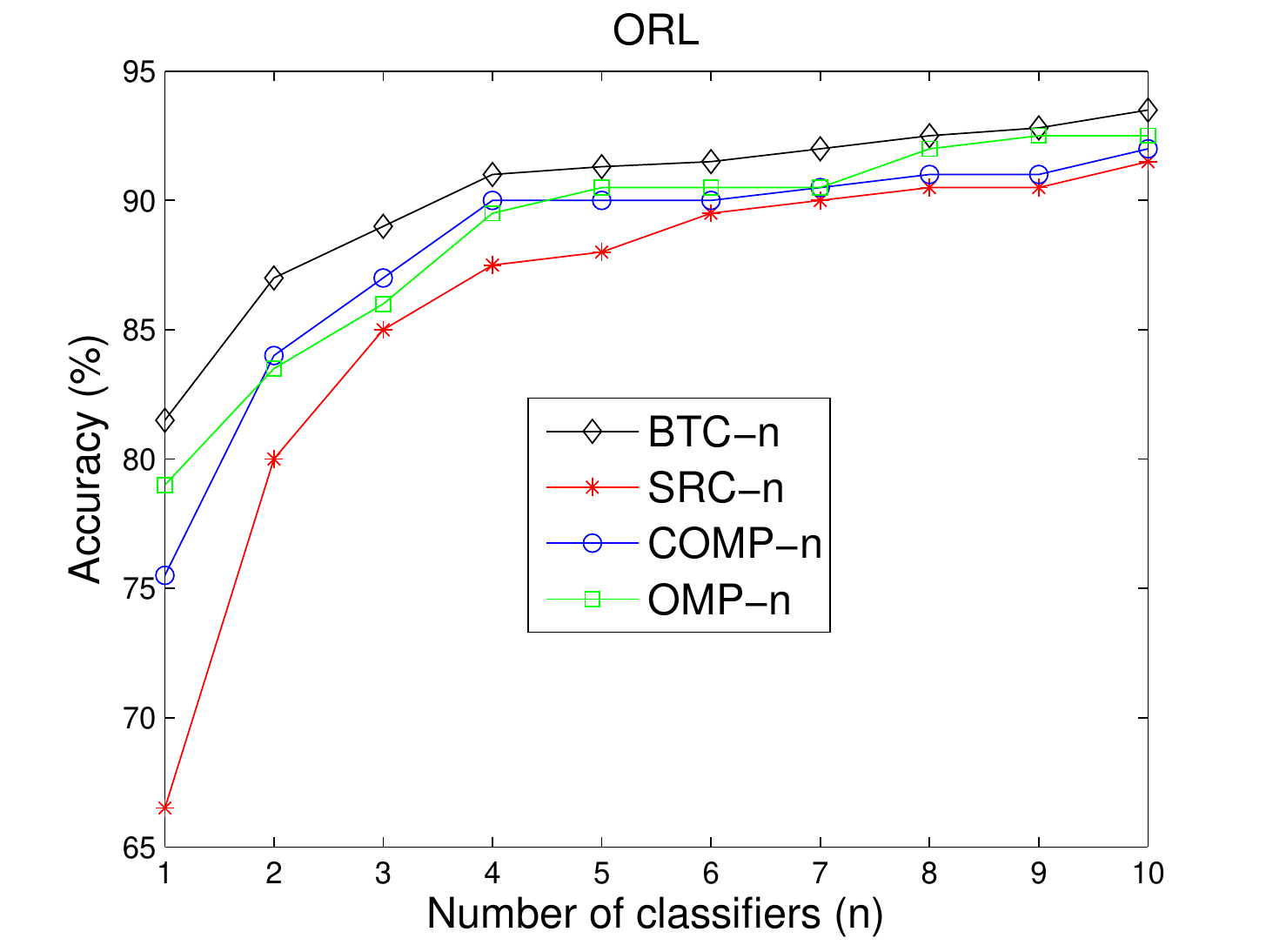}
 \caption{Recognition rates on ORL}
 \label{fig:orl}
\end{figure*}

%
%

Fig. \ref{fig:faces94} shows the recognition rates on Faces 94 dataset for SRC-n, OMP-n, COMP-n, and BTC-n methods with respect to the number of classifiers using 30 features. As we see in the figure, BTC-n outperforms the other techniques as expected. Notice that this time, OMP-n's and COMP-n's performances are acceptable as compared to the results on Extended Yale-B. This is because the illumination variations are not significant which corresponds to less-noisy sparse recovery. They even outperform the SRC-n method. However, previous results on Extended Yale-B show that those approaches are not as robust as the SRC-n and BTC-n techniques.

In Fig. \ref{fig:faces95} we see the classification performances on Faces 95 dataset. This dataset is more difficult than the previous one. Therefore, the performances of all methods slightly decreased as compared to the previous case. This time also BTC-n technique outperforms the others. We observe that SRC-n is superior to the OMP-n and COMP-n techniques because of the difficulty of the dataset. 

In Fig. \ref{fig:faces96} we see the classification accuracies on Faces 96 dataset which could be considered the most difficult one. This time the performances of the all methods decreased. The performance differences are similar to those of the previous experiment. The best results also were obtained by BTC-n. 

Fig. \ref{fig:orl} shows the classification performances on ORL dataset. As expected BTC-n achieves best results as in the previous cases. This time the performances of BTC-n, OMP-n, and COMP-n are very close. However, BTC-n slightly outperforms the OMP-n and COMP-n methods. SRC-n achieves the lowest rates. This shows that SRC-n is vulnerable to the number of samples per subject especially when the dictionary has very small number of samples per class. 
\subsection{Comparison with the Correlation Classifier}
One could wonder what happens if we directly use simple correlations instead of sparse representation. For this purpose, we designed an algorithm namely simple correlation classifier (CORR) which performs the following steps:

\begin{itemize}
\item Find the correlation vector $v$ containing the $M$ largest linear correlations between the test sample $y$ and the samples of all training set $A \in\mathbb{R}^{B \times N}$. 
\item Set the remaining $N-M$ entries in $v$ to zero.
\item Perform classification using the sum of the linear correlations within each class, that is, 
\begin{equation}\label{CORR}
class(y) = \arg \max_j \sum_iv_j(i) ~~ \forall j \in \{1,2,\ldots,C\}
\end{equation}
where $i$ represents the correlation index within a class and $j$ shows the class index.
\end{itemize}
Using this method, we repeated the same experiment previously performed on Extended Yale-B dataset. The recognition rates for different feature vector sizes and threshold values ($M$) could be seen in Fig. \ref{fig:corr}. We can observe that the best results are achieved when $M$ is set to $1$. This means that the best policy is to find the class label of the dictionary element having the highest correlation with the test sample. Note that we could have used the majority voting (MV) technique instead of the sum operation. However, the sum operation is superior to the MV technique.  
 
\begin{figure*}
\centering
 \includegraphics[width=0.8\textwidth]{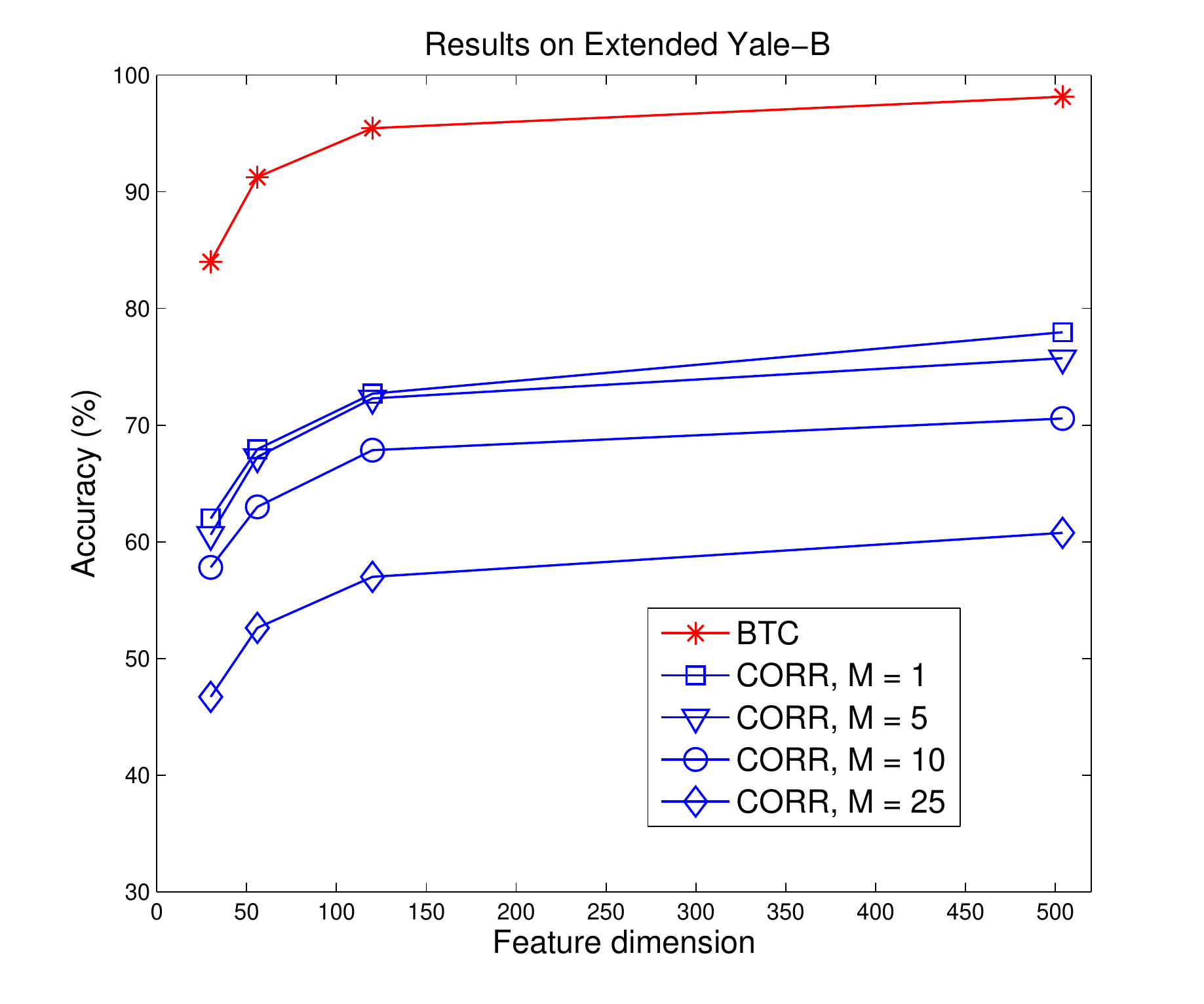}
 \caption{Comparison with correlation classifier (Extended Yale-B)}
 \label{fig:corr}
\end{figure*}

By observing the results, we can also compare the performance of this intuitive method with that of the proposed algorithm. We see that the accuracies obtained by BTC are far beyond the CORR approach. This experiment not only shows the superiority of BTC but also the power of the sparse representation.

\subsection{Rejecting Invalid Test Samples}
Evaluating a classifier considering only classification performance and computational cost is not enough in real world applications. A classifier must also be able to reject invalid test samples and correctly classify the valid ones at the same time. Up to now we only considered the cases where the test sample belongs to one of the classes in the dictionary. This time we consider a test sample which does not belong to any of the classes. Rejection could be performed by using a predefined threshold value. In this thesis, we propose a validation mechanism based on the residual vector which is produced when the test sample $y$ is applied to the BTC algorithm. As we stated previously, the residual vector contains entries for all classes in the dictionary. In this context, we define the following measure for any test sample $y$.      
\begin{equation}\label{gamma}
\gamma(y) \triangleq 1 - \frac{\epsilon(i)}{\epsilon(j)}
\end{equation}
where $i=\arg \min_k \epsilon(k)$ and $j=\arg \min_{k \neq i} \epsilon(k)$. Here, $i$ and $j$ are simply the class indexes which give the smallest and the second smallest residuals, respectively. Notice that the ratio in (\ref{gamma}) is the natural result of $\overline{\beta}_M$ that we mentioned previously. Assume that we apply $y$ to the BTC algorithm and obtain $\gamma(y)$ being close to 1. Then, we say that with high probability it belongs to one of the classes. If the result is close to 0, then we say that it probably does not belong to any of the classes. Let us define $\tau \in (0,1)$ as the rejection threshold. If the following condition is not satisfied, then the test sample is rejected.
\begin{equation}\label{rejectionTau}
\gamma(y) \geq \tau
\end{equation}

\begin{figure*}
\centering
\includegraphics[width=0.8\textwidth]{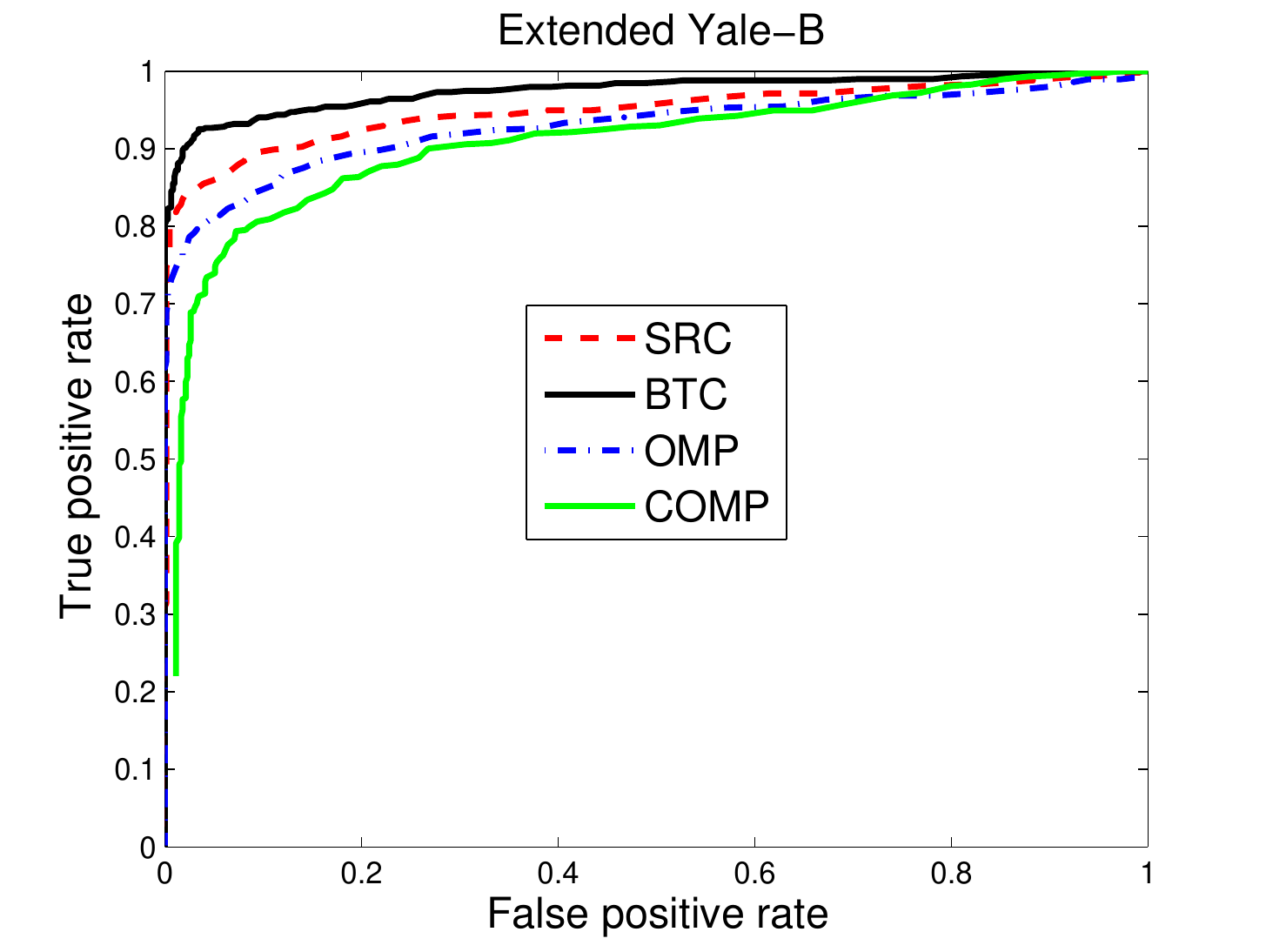}
\caption{ROC curves on Extended Yale-B}
\label{fig:yaleRoc}
\end{figure*}

\begin{figure*}
\centering
\includegraphics[width=0.8\textwidth]{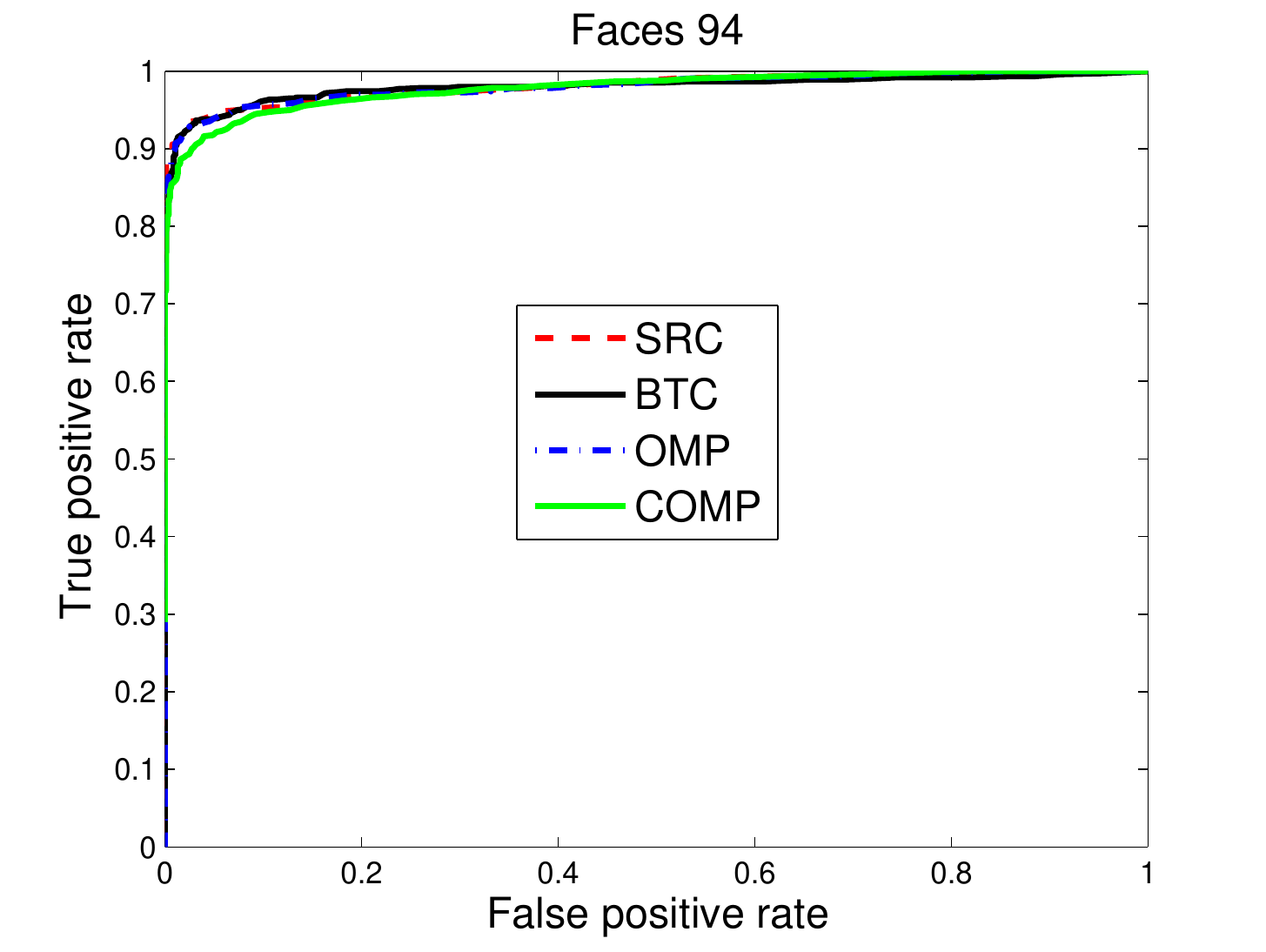}
\caption{ROC curves on Faces 94}
\label{fig:faces94Roc}
\end{figure*}

\begin{figure*}
\centering
 \includegraphics[width=0.8\textwidth]{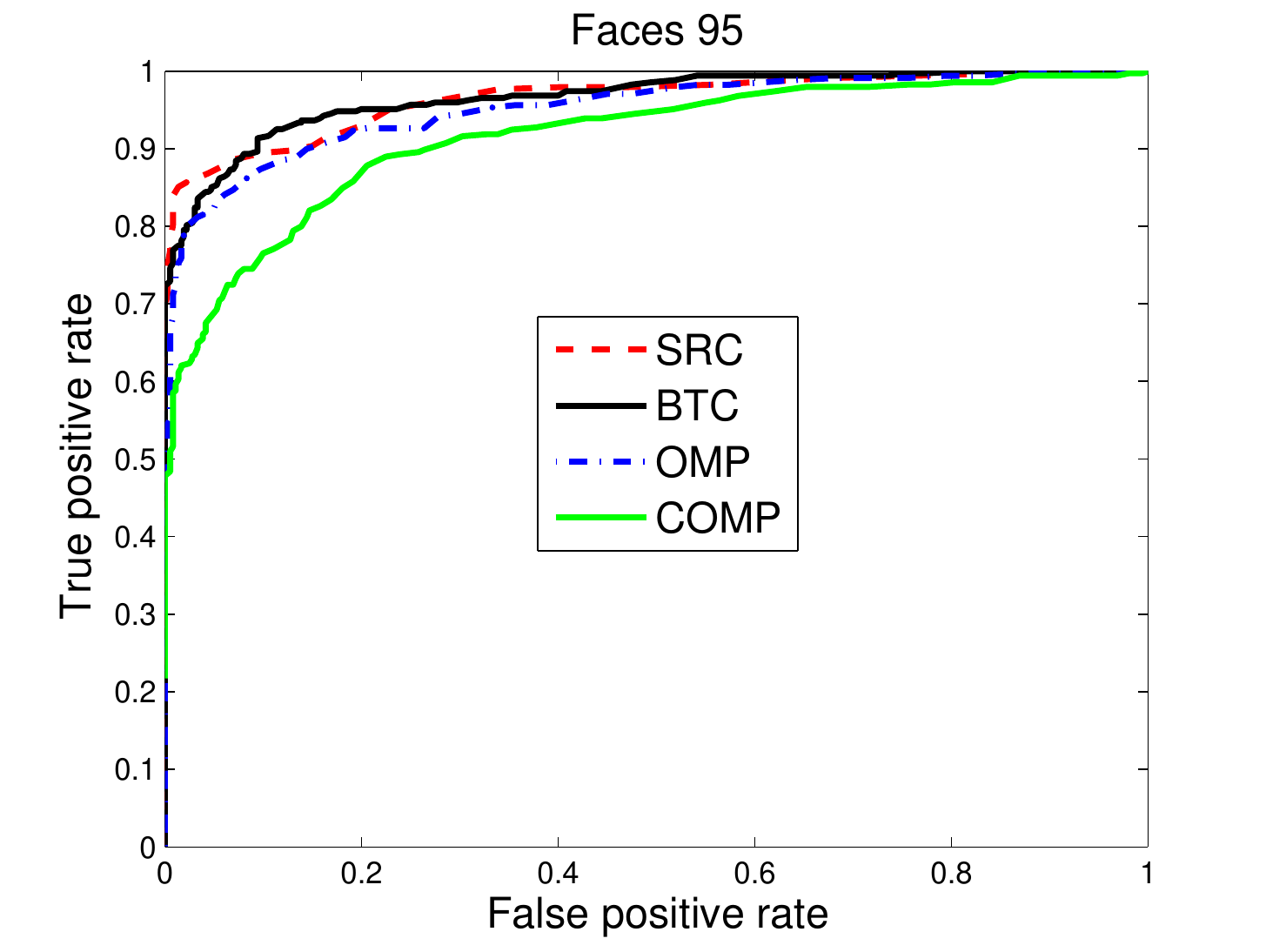}
 \caption{ROC curves on Faces 95}
 \label{fig:faces95Roc}
\end{figure*}

\begin{figure*}
\centering
 \includegraphics[width=0.8\textwidth]{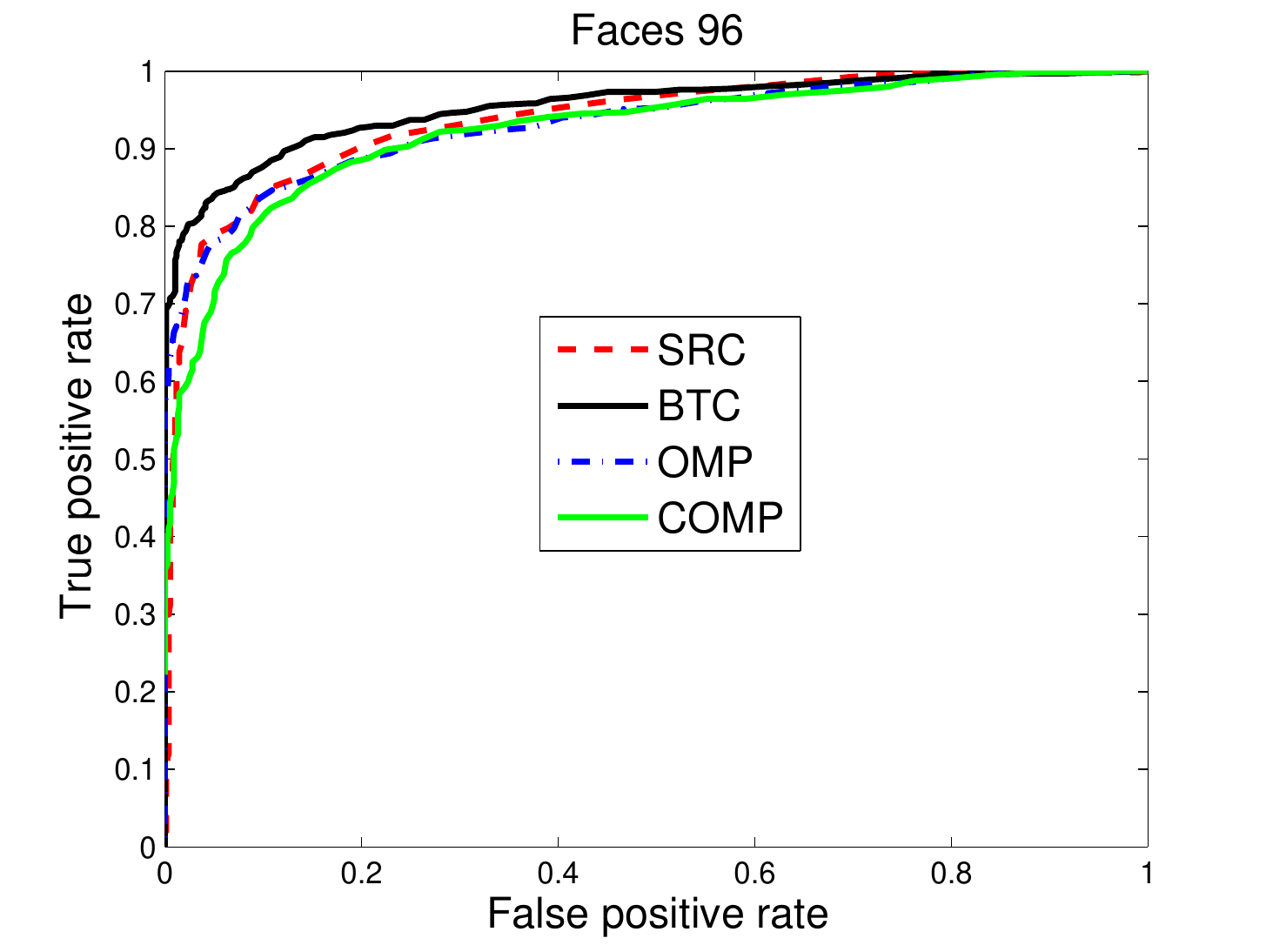}
 \caption{ROC curves on Faces 96}
 \label{fig:faces96Roc}
\end{figure*}

%
%

Unlike the rejection rule defined here, SRC algorithm uses the \emph{Sparsity Concentration Index} (SCI) to reject invalid test samples \cite{wright2009robust}. It is based on the estimated sparse code $x$. The details of the rule could be found in \cite{wright2009robust}. In order to compare the rejection performances of BTC, SRC, OMP, and COMP we designed a new experimental configuration. This time in the experiments, we included Extended Yale-B, Faces 94, 95, and 96 at the dimension 120d. We also included only half of the classes in the training sets. However, the test sets remained the same. Therefore, half of the classes and their samples in the test sets were invalid for the new dictionaries. Since the dictionaries were redesigned, we recalculated the $M$ values which are 72, 11, 33, 24 for Extended Yale-B, Faces 94, 95, and 96, respectively. We then generated the normalized Receiver Operating Characteristics (ROC) curves for each dataset and algorithm by simply sweeping $\tau$ over the range (0,1). Note that for SRC, OMP, and COMP we used the SCI rule. 

Fig. \ref{fig:yaleRoc} shows the ROC curves for Extended Yale-B dataset. In this set, BTC significantly outperforms the other algorithms. Since the dataset Faces 94 is an easy one, all algorithms generate nearly the same curves in Fig. \ref{fig:faces94Roc}. In Fig. \ref{fig:faces95Roc}, we see the results for Faces 95. In this case, up to 0.1 false positive rate, SRC slightly outperforms BTC. After that point, BTC performs better than SRC up to 0.3. After this rate, the performances are nearly the same. Finally, Fig. \ref{fig:faces96Roc} shows the ROC curves for Faces 96, which was the most difficult one. This time again BTC algorithm achieves the best results. 

\section{Conclusions}
In this chapter, we introduced basic thresholding classification for face recognition which has significant speed, accuracy, and  rejection performance improvements over $l_1$-minimization-based and greedy approaches. One of the main contributions is that unlike most of the classification algorithms, the computation performance of the proposed algorithm does not depend on the number of classes in the dictionary. It depends only on the feature vector size which is already intended to be small. By exploiting the output diversity property of the random projections and speed of the proposed algorithm, we developed a classifier ensemble mechanism which efficiently combines the outputs of the individual classifiers to further increase the classification accuracy especially in the case of small feature vector size. The ensemble technique also enables us to run each individual classifier in parallel manner. Finally, in order to reject invalid samples, we presented a rejection methodology which was actually developed by using the natural results of the sufficient identification condition rate. We demonstrated the performance and robustness of the algorithm under various well-known publicly available datasets. Simulation results showed that basic thresholding classification is a quite reasonable alternative to some state-of-the-art $l_1$-minimization-based and greedy classifiers.


\chapter{HYPER-SPECTRAL IMAGE CLASSIFICATION via BTC}
\label{chp:hyper1}

\section{Introduction}
Different materials on the surface of earth have different electromagnetic spectral signatures. In a given scene, those signatures could be captured by remote sensors with a small spatial and spectral resolution. Each pixel of a captured hyper-spectral image (HSI) or data cube contains very useful spectral measurements or features which could be used to distinguish different materials and objects. With the advancement of the sensor technology, current sensors are able to capture hundreds of spectral measurements. However, increasing the number of spectral features of a HSI pixel does not always help to increase the correct classification rate of the pixel. For instance, in the supervised classification techniques, at some point increasing the input feature vector size further may reduce the classification accuracy. This is known as high dimensionality problem or Hughes phenomenon \cite{hughes1968mean}.   

Several dimension reduction techniques have been proposed to eliminate the effects of Hughes phenomenon \cite{harsanyi1994hyperspectral}, \cite{wang2006independent}, \cite{li2012locality}. Besides, to increase the classification accuracy, various approaches have been proposed. Among those, support vector machines (SVM) technique outperformed classical methods such as K-nearest neighbor (K-NN) and radial basis function (RBF) networks \cite{melgani2004classification}. It has been shown that SVM distinguishes in terms of classification accuracy, computational cost, low vulnerability to Hughes phenomenon, and it requires a few training samples \cite{melgani2004classification}, \cite{camps2005kernel}. On the other hand, SVM approach has some limitations. First, it has parameter tuning ($C$, $\gamma$, error tolerance) and kernel (linear, RBF, etc.) selection steps which are done using k-fold cross validation in which some portion of the training data is used for testing purposes. Those procedures are cumbersome and the resulting parameter set may not be optimum for the test sets \cite{mountrakis2011support}. Second, since SVM is a binary classifier, a conversion strategy to multi-class case is required. An easy one is the one-against-all (OAA) strategy in which a test sample may result as unclassified which causes low classification accuracies \cite{mountrakis2011support}. Another strategy is the one-against-one approach in which number of binary classifiers ($K(K-1)/2$) increases dramatically as the number of classes ($K$) increases. The final limitation is that the probability outputs of the SVM classifier can not be directly provided and an estimation procedure is required such as logistic sigmoid \cite{platt1999probabilistic}. Therefore, the SVM-based methods using probability outputs must rely on those estimates.           

The SVM approach described above is in the class of pixel-wise algorithms since it uses only the spectral features. It is well known that the performance of a pixel-wise classifier could be improved by incorporating spatial information based on the fact that neighboring pixels in the homogeneous regions of a HSI have similar spectral signatures. Therefore, various approaches have been proposed to combine the spectral and spatial information. For instance, a composite kernels approach has been proposed in \cite{camps2006composite} which successfully enhances the classification accuracy of the SVM. A segmentation-based technique proposed in \cite{tarabalka2009spectral} combines the segmentation maps obtained via clustering and pixel-wise classification results of the SVM technique. Final decisions are made by majority voting in the adaptively defined windows. A similar framework has been proposed in \cite{tarabalka2010segmentation} which utilizes segmentation maps using watershed transformation. All these methods share common limitations since they are based on the SVM approach. 

One of the recent spatial-spectral frameworks, which utilizes an edge-preserving filter, has been proposed in \cite{kang2014spectral}. The method uses the SVM classifier as the pixel-wise classification step. For each class, the probability maps, which are the posterior probability outputs of the SVM classifier, are smoothed by an edge-preserving filter with a gray scale or rgb guidance image. Final decision for each pixel is then made based on the maximum probabilities. As an edge-preserving filter, they use one of the recent state-of-the-art techniques namely guided image filtering \cite{he2010guided}. Since the proposed framework is based on SVM, it has also common problems with the SVM-based classifiers. One alternative to SVM classifier is multinomial logistic regression (MLR)\cite{bohning1992multinomial} method in which class posterior probability distributions are learned using Bayesian framework. MLR has been successfully applied to HSI classification in \cite{borges2007evaluation}, \cite{li2012spectral}, and \cite{li2010semisupervised}. One of the recent techniques based on MLR has been proposed in \cite{li2011hyperspectral}. The method uses logistic regression via splitting and augmented Lagrangian (LORSAL) \cite{bioucas2009logistic} algorithm with active learning in order to estimate the posterior distributions. In the segmentation stage, it utilizes a multilevel logistic (MLL) prior to encode the spatial information. LORSAL-MLL (L-MLL) technique achieves promising results as compared to classical segmentation methods.

Recently, sparsity-based methods, sparse representation-based classification (SRC) and joint SRC (J-SRC), alternative to SVM-based frameworks have been successfully applied to HSI classification \cite{chen2011hyperspectral,chen2011hyperspectral,chen2013hyperspectral,zhang2016spectral,zhang2016weighted,bo2016hyperspectral}. SRC originally was proposed for face identification in \cite{wright2009robust}. Since SRC is based on $l_1$ minimization which includes solving costly convex optimization problem, greedy algorithms like orthogonal matching pursuit (OMP) \cite{tropp2007signal} and simultaneous OMP (SOMP) have been preferred for HSI classification in \cite{chen2011hyperspectral}. SOMP was originally proposed in \cite{tropp2006algorithms} for generic simultaneous sparse information recovery. HSI version of SOMP is based on a joint sparsity model assuming that the pixels in the small neighborhood of a test pixel share a common sparsity pattern. Those pixels are simultaneously represented by the linear combinations of the training samples of a predetermined dictionary. Kernelized versions of SOMP have been developed in \cite{chen2013hyperspectral}, \cite{liu2013spatial}, and \cite{li2014column} to exploit the non-linear nature of the kernels for a better class separability. Another version of SOMP namely weighted joint sparsity (W-JSM) or WSOMP has been proposed in \cite{zhang2014nonlocal}. The method calculates a non-local weight matrix for neighboring pixels of each test pixel. It then executes the standard SOMP with the calculated weight matrix. Multi-scale adaptive sparse representation (MASR), which utilizes spatial information at multiple scales, has been proposed in \cite{fang2014spectral}. A final one is the adaptive SOMP (ASOMP) which adaptively selects the neighborhood of the test pixel according to a predetermined segmentation map \cite{zou2014classification}. All SOMP-based approaches have several common drawbacks. The most important one is the extensive computational cost due to simultaneous sparse recovery of the surrounding pixels of the test sample. Another limitation is the parameter tuning step in which the sparsity level $K_0$, error tolerance $\epsilon$, maximum iterations, and the weight thresholds are needed to be tuned experimentally. Once the parameters are determined for a dataset, they may not be optimum for some other datasets. Therefore, there is no guidance for the parameter selection.  

In this chapter, we propose the basic thresholding classifier (BTC) for HSI classification. It is a pixel-wise light-weight method which classifies every pixel of an HSI image using only spectral features. During the classification it uses a predetermined dictionary containing labeled training pixels. For each pixel, it produces two outputs which are the error vector consisting of the residuals and the class label selected based on the minimal residual. To improve the classification accuracy of BTC, we extend our framework to a three-step spatial-spectral procedure. First, we run pixel-wise classification step using BTC for each pixel of a given HSI. The output residual vectors form a cube which is also interpreted as a stack of images. Every image is also called as residual map. Secondly, we smooth every residual map using an averaging filter. In the final step, we determine the class label of each pixel based on the minimal residual. The contribution of this chapter is threefold:
\begin{itemize}
\item We introduce a new sparsity-based algorithm for HSI classification which is light-weight, cost effective, easy to implement, and provides high classification accuracy. Unlike classical approaches such as minimum distance, K-NN, and SVM techniques, our method has low vulnerability to the corrupted, noisy, and partial features in the test samples since it is based on sparse representation which exploits the fact that the errors due to these kinds of features are often sparse with respect to the standard basis \cite{wright2009robust}. It also distinguishes from previous sparsity-based techniques in its ability to classify test pixels extremely rapidly.   
\item The proposed approach eliminates the limitations of well-known SVM technique. First, unlike SVM, it does not have training and cross-validation stages. We give the full guidance of threshold and regularization parameter selection of the BTC method. On the other hand, the parameters of SVM ($C$, $\gamma$) have to be determined using cross-validation which may result in a non-optimal set. Second, the computational cost of BTC does not significantly increase as the number of classes ($K$) increases. However, since the number of binary classifiers is dependent on the square of $K$ in one of the common conversion strategies (OAO) of SVM to multi-class case, the cost of SVM dramatically increases as K increases. Finally, SVM does not provide residuals which might be used for intermediate processing such as smoothing.
\item Our proposal can easily be extended to spatial-spectral case by smoothing the residual maps. This procedure eliminates high computational cost of joint sparsity model or SOMP-based techniques in which simultaneous sparse code recovery is essential. This low cost intermediate process extremely increases the classification accuracy of the proposed method. It is even able to outperform non-linear SVM-based techniques which use direct classification output maps of the spectral-only SVM.
\end{itemize}  

\section{HSI Classification}
In the context of HSI classification, spectral measurements are embedded into $B$ dimensional feature vectors. Let $a_{i,j}\in \mathbb{R}^B$ with $\norm{a_{i,j}}_2=1$ be the vector consisting of the spectral features of the $j$th sample of the class $i$ and let $A_i = [a_{i,1}~a_{i,2}~\ldots ~a_{i,N_i}]\in \mathbb{R}^{B\times N_i}$ denote the matrix containing the training pixels of the $i$th class with $N_i$ many pixels. Then, one can construct the dictionary $A$ with $C$ many classes in a way that $A=[A_1~A_2~\ldots~A_C]\in \mathbb{R}^{B\times N}$ where $N=\sum_{i=1}^{C}N_i$. 

Suppose that we constructed an HSI dictionary $A \in\mathbb{R}^{B\times N}$ and we are given a test pixel $y \in\mathbb{R}^B$ to be classified. In the sparse representation model, the assumption is that there exists a minimum $l_1$-norm sparse code $x\in\mathbb{R}^{N}$ such that $y=Ax$ or $y=Ax+\sigma$ where $\sigma$ is a small error \cite{wright2009robust}. The problem is equivalent to 

\begin{equation}\label{sparse}
\hat{x}=\arg\min_x \norm{x}_1~subject~to~y=Ax
\end{equation}
or alternatively subject to $\norm{y-A x }_2 \leq \sigma$. The class of $y$ is then found using the following expression.
\begin{equation}\label{sparse_class}
 class(y)=\arg \min_i \norm{y-A \hat{x_i}}_2 \; \forall i \in \{1,2,\ldots,C\}
\end{equation}
where $\hat{x_i}$ represents the $i$th class portion of the estimated sparse code $\hat{x}$. As we stated previously, solving (\ref{sparse}) is not an easy problem. $l_1$-minimization or convex relaxation-based techniques such as homotopy method \cite{yang2010fast} could be used. However, those techniques are quite expensive for the applications such as HSI classification. A faster way of solving (\ref{sparse}) is the OMP technique which has been successfully applied in HSI classification \cite{chen2011hyperspectral}. When we use OMP, the expression in (\ref{sparse}) is replaced with (\ref{omp}). 
\begin{equation}\label{omp}
\hat{x}=OMP(A, y)
\end{equation} 
The details of the OMP algorithm could be found in \cite{tropp2007signal}. In order to incorporate spatial information, Chen \emph{et al.} proposed the joint sparsity model (JSM) in \cite{chen2013hyperspectral}. In this case, not only the test pixel $y$ is used but also the surrounding $n-1$ pixels $y^1,y^2,\ldots,y^{n-1}$ in a given window $T$ are used in the minimization problem where the corresponding sparse codes are $x^1,x^2,\ldots,x^{n-1}$. This time expression (\ref{sparse}) is replaced with the following:
\begin{equation}\label{jsm}
\hat{X}=\arg\min_X \norm{X}_{row,0}~subject~to~Y=AX
\end{equation} or alternatively subject to $\norm{Y-A X }_F \leq \sigma$ where $X=[x~x^1~x^2~\ldots~x^{n-1}]$, $Y=[y~y^1~y^2~\ldots~y^{n-1}]$, and $\norm{X}_{row,0}$ is the number of nonzero rows. The class of $y$ is then found using the following expression:
\begin{equation}\label{jsm_class}
 class(y)=\arg \min_i \norm{Y-A \hat{X_i}}_F \; \forall i \in \{1,2,\ldots,C\}
\end{equation}
where $\hat{X_i}$ represents the $i$th class portion of the estimated sparse code matrix $\hat{X}$. For this case, greedy SOMP algorithm is preferred to solve (\ref{jsm}) in \cite{chen2011hyperspectral}. The details of SOMP could be found both in \cite{chen2011hyperspectral} and \cite{tropp2006algorithms}. As we stated previously, although SOMP technique is a greedy approach, it is computationally expensive since it simultaneously recovers the sparse codes of test and surrounding pixels.

All the limitations with the SVM and SOMP algorithms force us to use BTC for HSI classification \cite{toksoz2016btc}. One can use the following expression in order to classify the given pixel $y$:

\begin{equation}\label{btc_hyper1}
 Class(y) \gets BTC(A, y, M, \alpha)
\end{equation}
Note that the details of proposed technique could be found in Chapter \ref{chp:btc}. Before using the BTC algorithm, first, we need to determine the parameters $M$ and $\alpha$. In the following section, we will show how the parameters are selected using the quantity $\overline{\beta}_M$.
 
\section{Parameter Selection}
We plotted $\overline{\beta}_M$s for the dictionaries constructed using some publicly available well-known hyper-spectral images. The $\overline{\beta}_M$s for the dictionaries of Indian Pines, Salinas and Pavia University images are given in Fig. \ref{avgSicValues4}. The detailed description of each dataset and the corresponding dictionaries are given in Section \ref{sec:exp}. We see that in all plots of Fig. \ref{avgSicValues4}, $\overline{\beta}_M$ decays and reaches approximately to some minimum value. As we stated previously, any $M$ at which $\overline{\beta}_M$ is close to the minimum value is an acceptable choice. However, we need to consider that increasing $M$ will also increase the computational cost. For the given plots, we selected the regularization constant $\alpha$ as $10^{-4}$ which is a good choice for pixel-wise HSI classification. The effects of $\alpha$ could also be seen in the figures. Without $\alpha$, the decaying $\overline{\beta}_M$ would start to increase at some point due to the noisy eigenvalues which reduce the classification performance. One could think that the optimum choice of $\alpha$ is quite critical for BTC. It is important in spectral-only classification, however, in the following part, we will show that the effects of it will be compensated by the post processing smoothing in the spatial extension case. Therefore, using the optimal choice of $\alpha$ will not be critical anymore. Instead of optimal choice, we will prefer a quite small $\alpha$ in order to avoid an ill-conditioned matrix operation.

\begin{figure*}
\centering
\includegraphics[width = 0.9\textwidth]{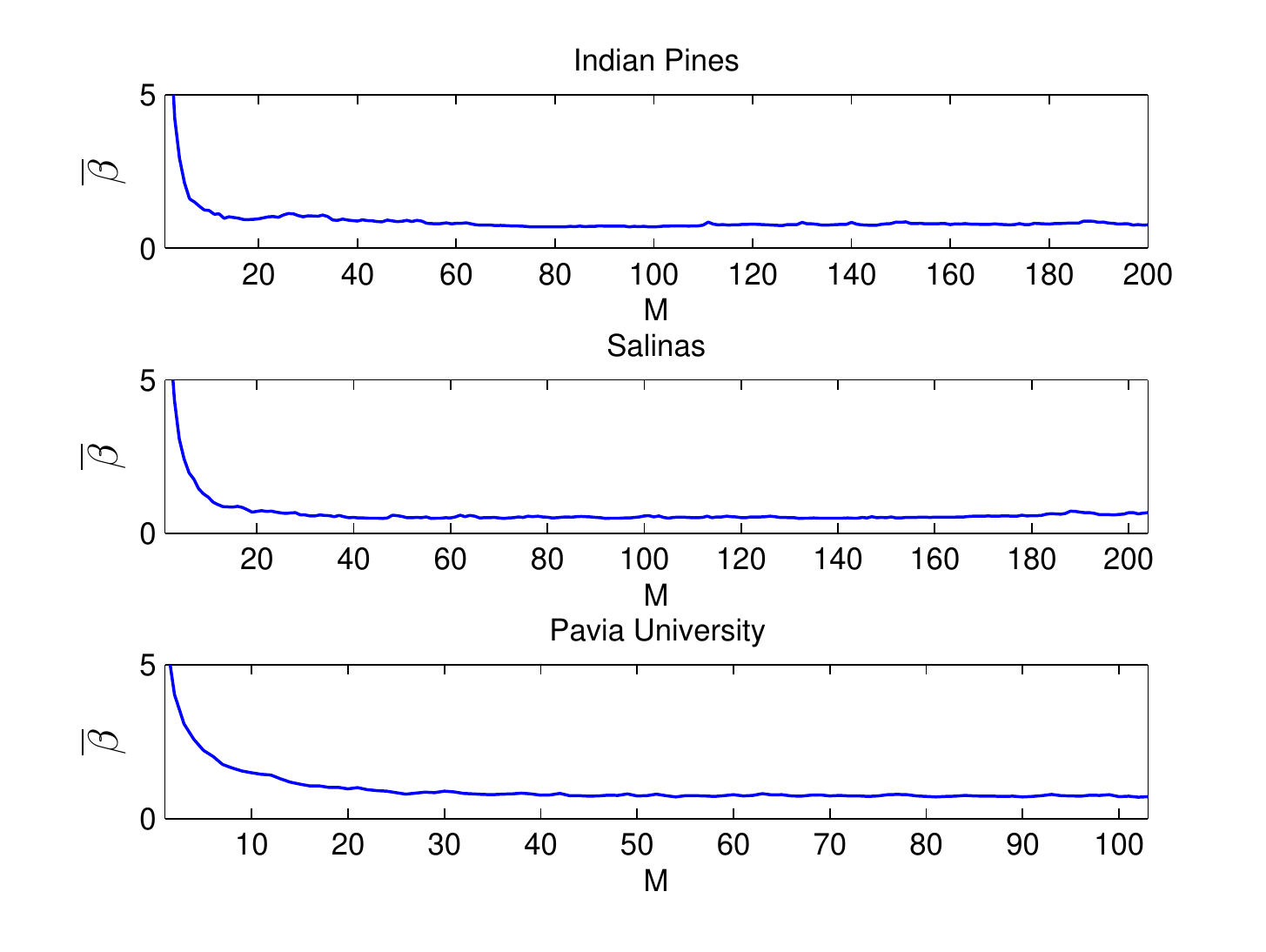}
\caption{$\overline{\beta}_M$ vs threshold for Indian Pines, Salinas, and Pavia University ($\alpha=10^{-4}$)}
\label{avgSicValues4}
\end{figure*} 

\section{Extension to Spatial-Spectral BTC}
In this section, we extend our pixel-wise proposal to a three-step spatial-spectral framework in order to incorporate spatial information. In the first step, each pixel $y \in\mathbb{R}^{B}$ in a given HSI, $H \in\mathbb{R}^{n_1 \times n_2 \times B}$ containing $n_1 \times n_2$ pixels, is classified using BTC. The produced outputs are not only the class labels but also the residual vectors, $\epsilon \in\mathbb{R}^{C}$, for all pixels. The resulting residual vectors form a residual cube $R \in\mathbb{R}^{n_1 \times n_2 \times C}$ which could also be interpreted as a stack of images representing residual maps ($map_i \in\mathbb{R}^{n_1 \times n_2}$ for all $i \in \{1,2,\ldots,C\}$). In the second step, each $map_i$ for all $i \in \{1,2,\ldots,C\}$ is smoothed via an averaging filter producing $\overline{map}_i $ for all $i \in \{1,2,\ldots,C\}$. Before smoothing operation, the values in the residual cube $R$ are normalized between 0 and 1. Also note that by using the intermediate classification map of spectral-only BTC, we simply set the residual values to the maximum value 1 in $map_i$ for the entries whose labels are not equal to $i$. This improves the classification performance. The smoothed maps form a smoothed residual cube $\overline{R}\in\mathbb{R}^{n_1 \times n_2 \times C}$. In the final step, class label of each pixel is determined based on minimal smoothed residuals, that is, $class(y) =\arg \min_i \overline{\epsilon}(i)$ where $i \in \{1,2,\ldots,C\}$. The overall framework is shown in Fig. \ref{SBTC}. For the spatial-spectral extension case, in which $\alpha$ is set to very small number ($10^{-10}$), we also plotted $\overline{\beta}_M$s in Fig. \ref{avgSicValues10} for the dictionaries of the images used in this work. As we see in the figure, the decaying $\overline{\beta}_M$s start to increase at some point because of the noisy eigenvalues. As we stated in the previous part, the optimal choice of $\alpha$ for BTC in the spatial extension is not critical and a quite small value ($10^{-8}, 10^{-9}, 10^{-10},$ etc.) could be used only to prevent the singular matrix inverse. Also note that using a bit larger value such as $10^{-4}$ will reduce the classification accuracy of the spatial-spectral classifier since it eliminates some discriminative small eigenvalues. Therefore, in the spatial extension, we always prefer to use a very small regularization constant.

\begin{figure*}
\centering
\includegraphics[width = 1.0\textwidth]{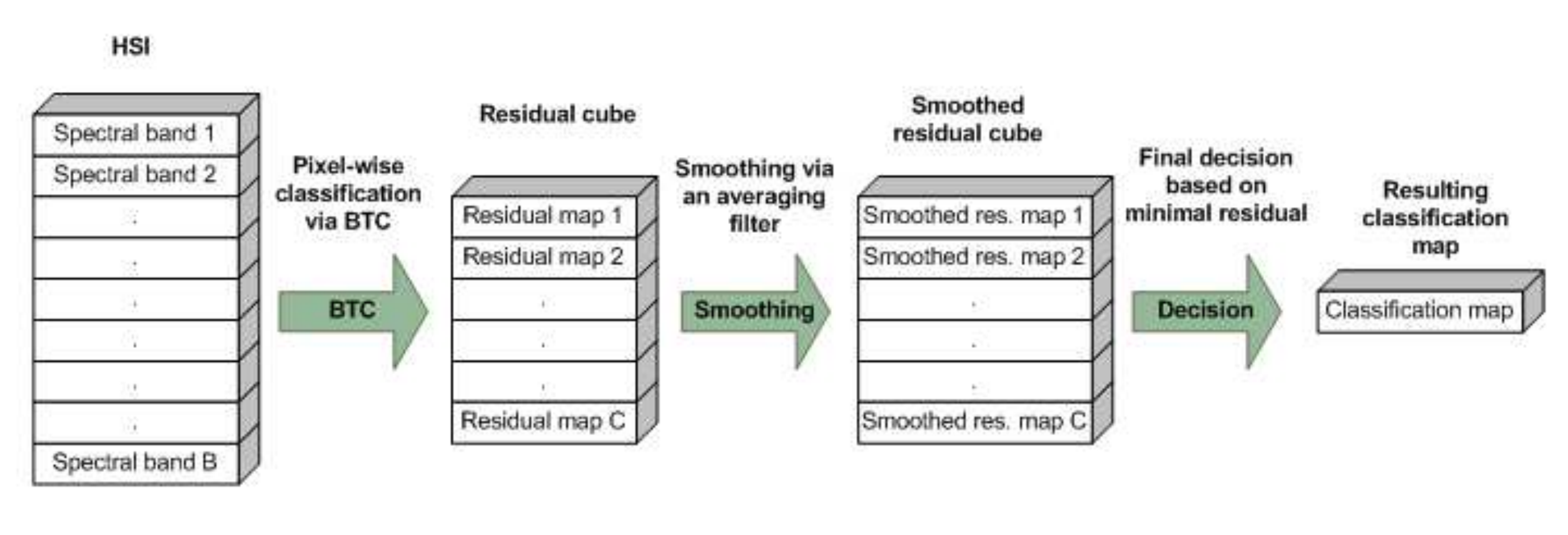}
\centering
\caption{Spatial-Spectral BTC}
\label{SBTC}
\end{figure*}

\begin{figure*}
\centering
\includegraphics[width = 0.9\textwidth]{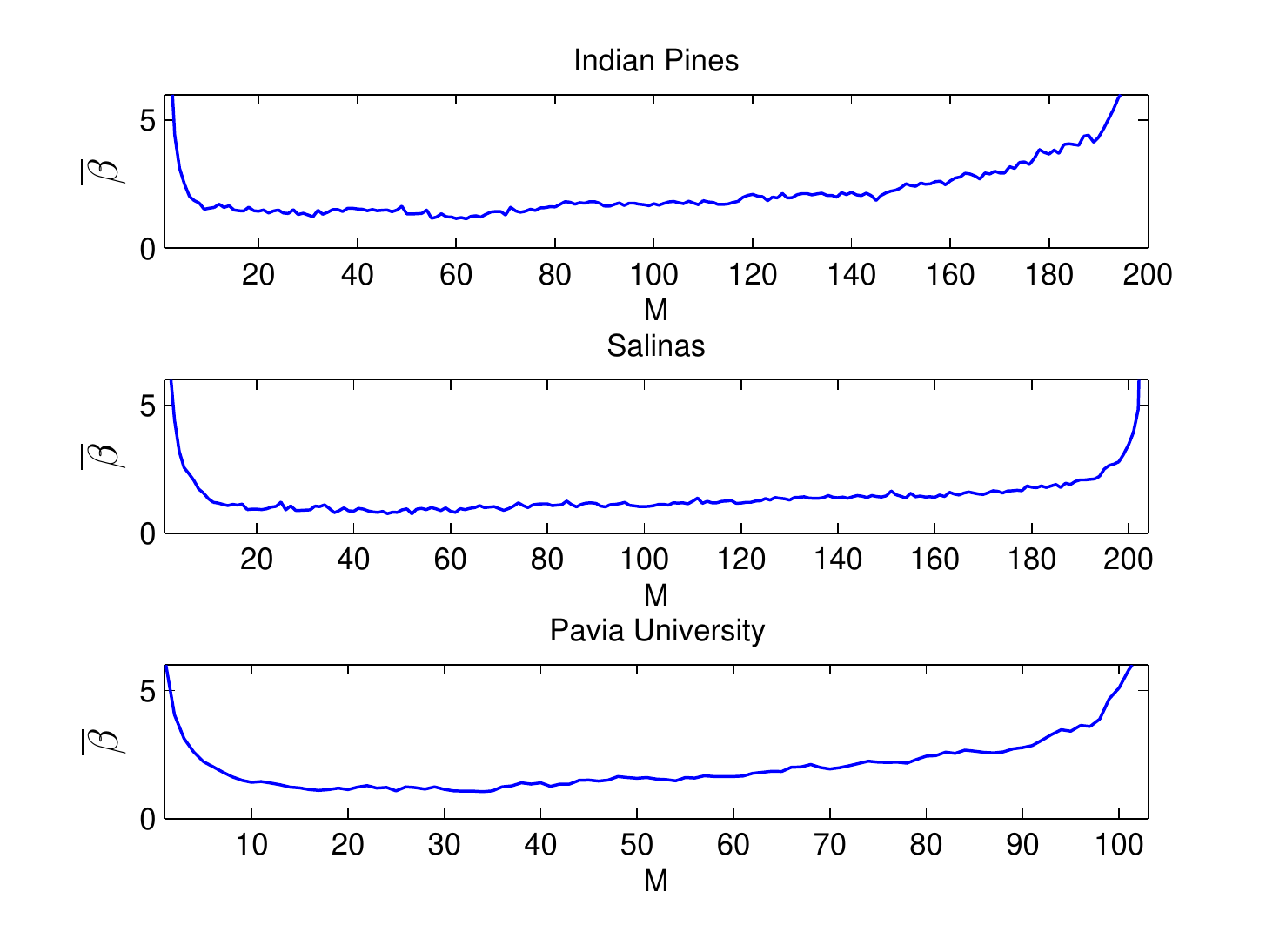}
\caption{$\overline{\beta}_M$ vs threshold for Indian Pines, Salinas, and Pavia University ($\alpha=10^{-10}$)}
\label{avgSicValues10}
\end{figure*} 
 
\section{Experimental Results}
\label{sec:exp}
\subsection{Datasets}
As in \cite{kang2014spectral}, we also used three well-known publicly available datasets namely Indian Pines, Salinas, and Pavia University in this work. We present the brief description of each dataset in Table \ref{description}. Both the Indian Pines and Salinas images were captured by Airborne Visible/Infrared Imaging Spectrometer (AVIRIS) sensor. The Pavia University image was captured by Reflective Optics System Imaging
Spectrometer (ROSIS) sensor. Before the experiments, some noisy water absorption bands (20 bands for Indian Pines and Salinas, 12 bands for Pavia University) were discarded. The 3-Band color image, ground truth, each class and the corresponding number of training and test pixels are given for Indian Pines, Salinas, and Pavia University in Fig. \ref{indianPines}, Fig. \ref{salinas}, and Fig. \ref{paviaUniversity}, respectively.     
 
\begin{table*}
\footnotesize
 \caption{Description of each dataset}
 \label{description}
 \centering
 \begin{tabular}{|c |c | c| c| c| c| c|}
 \hline
 Dataset & Size & Spatial  & Spectral  & Num. of & Sensor & Num. of  \\ 
         &      & resolution & coverage & classes &  & bands \\ \hline
 Indian Pines &	145 $\times$  145 $\times$  220 &	20 m &	0.4 $-$ 2.5 $\mu$m & 16 & AVIRIS &	200 \\ \hline
 Salinas	& 512 $\times$  217$\times$  224 &	3.7 m &	0.4 $-$ 2.5 $\mu$m &	16 & AVIRIS & 204 \\ \hline
 Pavia University &	610 $\times$ 340 $\times$  115 &	1.5 m &	0.43 $-$ 0.86 $\mu$m & 9 &	ROSIS &	103 \\
 \hline
 \end{tabular}
 \end{table*}
 
 \begin{figure*}
 \centering
 \includegraphics[width = 0.65\textwidth]{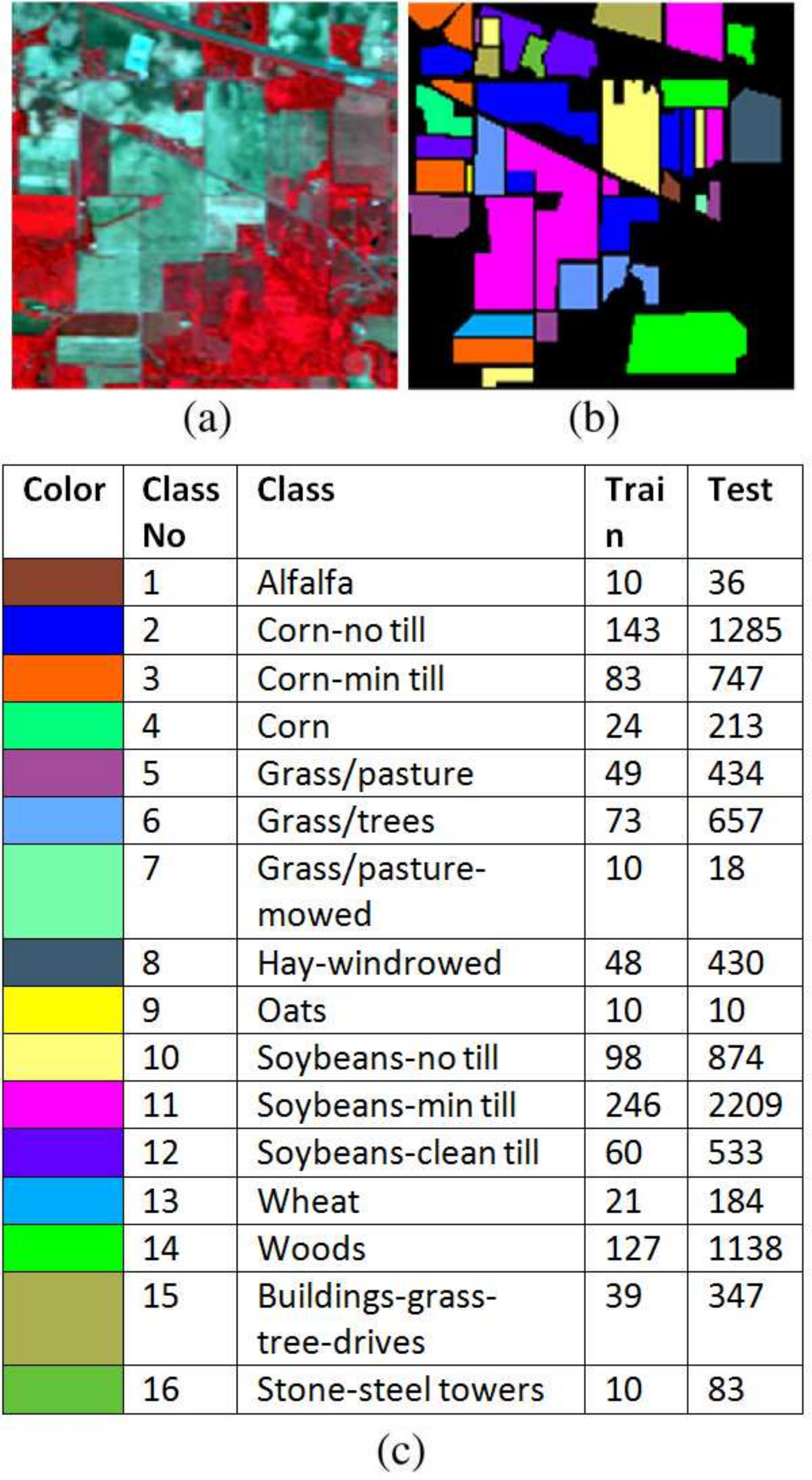}
 \centering
 \caption{ a-) 3-Band color image b-) ground truth image, and c-) each class and the corresponding number of training and test pixels of Indian Pines dataset }
 \label{indianPines}
 \end{figure*} 
 
 \begin{figure*}
 \centering
 \includegraphics[width = 0.65\textwidth]{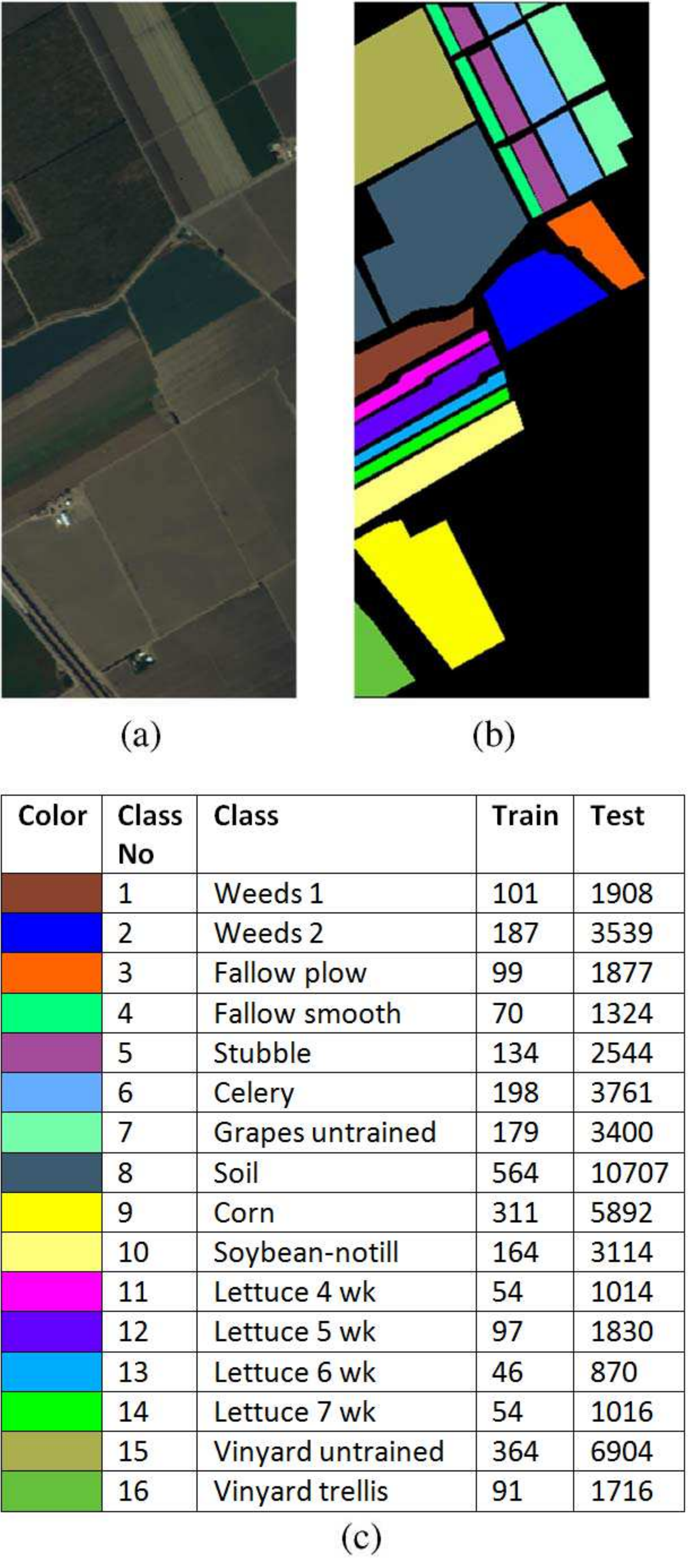}
 \centering
 \caption{ a-) 3-Band color image b-) ground truth image, and c-) each class and the corresponding number of training and test pixels of Salinas dataset }
 \label{salinas}
 \end{figure*} 
 
 \begin{figure*}
 \centering
 \includegraphics[width = 0.65\textwidth]{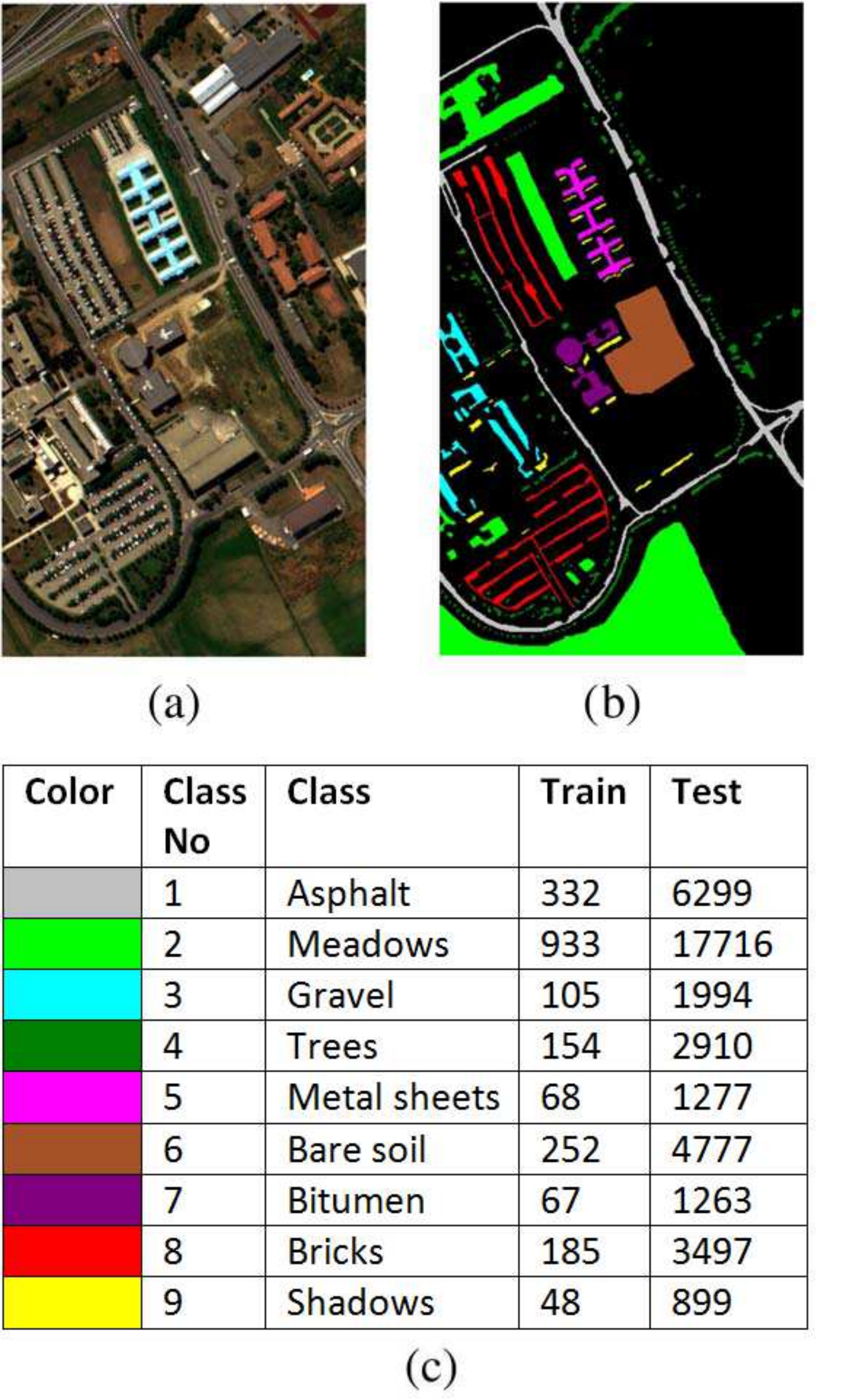}
 \centering
 \caption{ a-) 3-Band color image b-) ground truth image, and c-) each class and the corresponding number of training and test pixels of Pavia University dataset }
 \label{paviaUniversity}
 \end{figure*} 
 
 \subsection{Performance Indexes}
 During the experiments, we used three commonly preferred performance indexes namely overall accuracy (OA), average accuracy (AA), and the kappa coefficient ($\kappa$). In addition to them, we also included the computation time. OA shows the percentage of the correctly classified samples and AA gives the average of the percentages of the correctly classified samples in each class. The $\kappa$ coefficient is used to measure the degree of consistency \cite{cohen1960coefficient}. The computation time is also an important measure which determines whether the classifier is feasible for the real time applications or not.  
   
 \subsection{Experimental Setup}
 In the experiments, we included various spectral-only algorithms such as SVM \cite{melgani2004classification}, OMP \cite{chen2011hyperspectral}, and BTC as well as some state-of-the-art spatial-spectral methods such as EPF-G-g \cite{kang2014spectral}, L-MLL \cite{li2011hyperspectral}, SOMP \cite{chen2011hyperspectral}, and spatial-spectral BTC. EPF-G-g is based on SVM and guided image filtering. We used two versions of this method namely SVM-GF (based on guided filter \cite{he2010guided}) and SVM-WLS (based on weighted least squares filtering \cite{farbman2008edge}). For our spatial-spectral proposal, we also used the same filtering techniques in order to smooth the resulting residual maps and called the two versions of it as BTC-GF and BTC-WLS. For SVM-based classifiers, we used well-known and fast LIBSVM library which was written in C++ \cite{chang2011libsvm}. The parameters ($C$, $\gamma$) of SVM were chosen by 5-fold cross validation by varying $C$ from $10^{-2}$ to $10^4$ and $\gamma$ from $2^{-3}$ to $2^4$. Since the non-linear RBF kernel is superior to the linear kernel (dot product), we used the RBF kernel for the SVM-based methods. In all experiments, for SVM-GF and BTC-GF, we set the filtering parameters namely filtering size ($r$) and blur degree ($\epsilon$) to $3$ and $0.01$, respectively. Those values are proposed in \cite{kang2014spectral} for HSI classification. Since both GF and WLS filters require a gray-scale guidance image, we obtained it by extracting the first principal component of the given HSI using the principal component analysis (PCA) \cite{van2009dimensionality} procedure. Since the L-MLL approach requires an active learning stage, we set the initial training set to the half of the all training set for all experiments. We then incremented the samples by 50 using random selection (RS) method up to the whole training set. The classification and segmentation stages were performed after the learning stage. For OMP and SOMP classifiers, we used the SPArsa toolbox provided by Julien Mairal \cite{mairal2010online}. There is no guidance for the selection of sparsity parameter ($L$) of the OMP and SOMP techniques. However, based on the experiments in \cite{chen2011hyperspectral} and \cite{zhang2014nonlocal}, we set $L$ to 25 for OMP and 30 for SOMP. Similarly, we set the spatial window size (T) to 25 for SOMP technique. Note that we used the same values for all experiments.
 
 \begin{figure*}
 \centering
 \includegraphics[width = 1.0\textwidth]{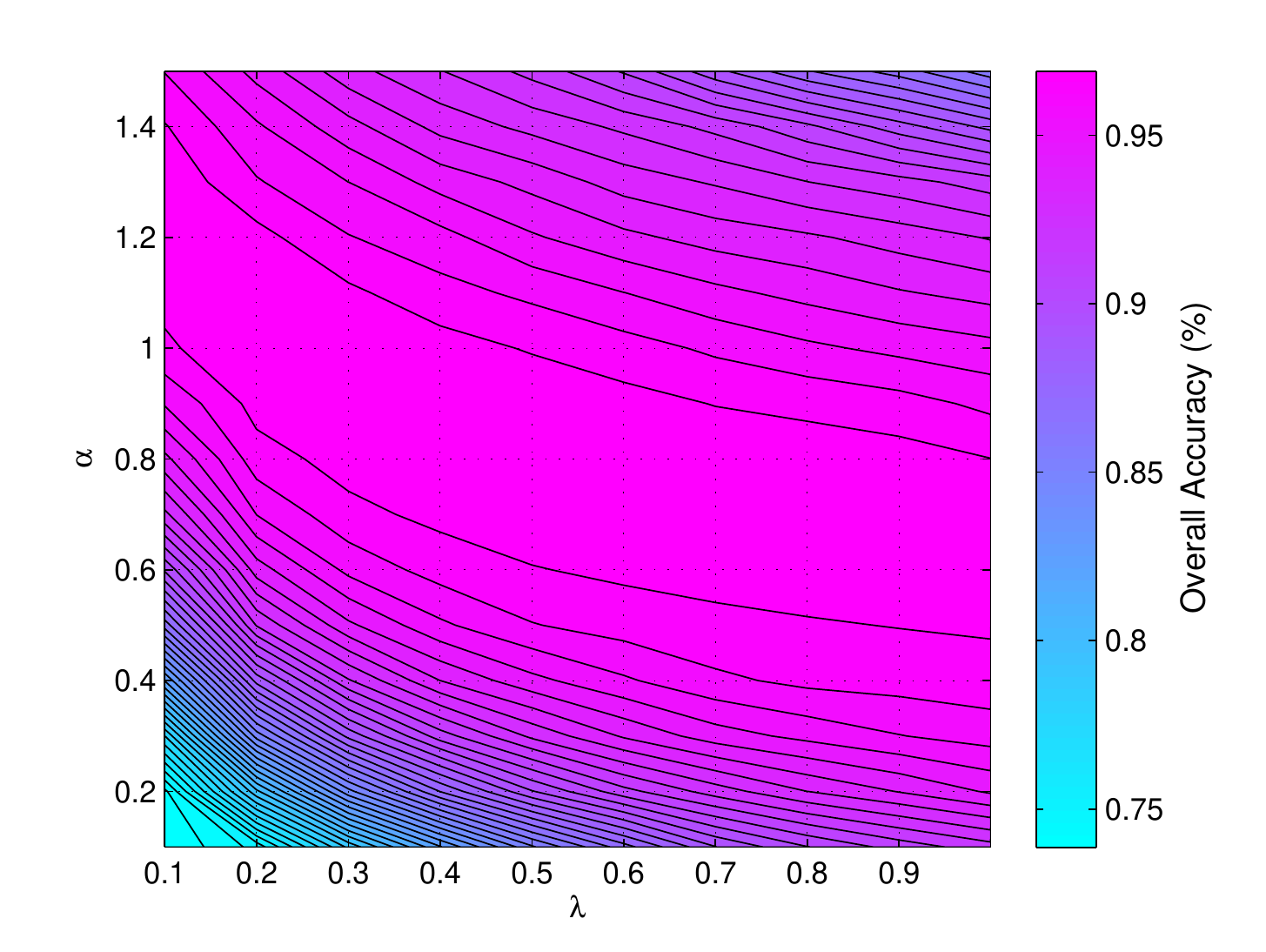}
 \centering
 \caption{Overall accuracy ($\%$) grid by varying the $\lambda$ and $\alpha$ pair of WLS filter on Indian Pines dataset using BTC-WLS method}
 \label{wlsParams}
 \end{figure*} 
 
 The reason that we include the WLS filter is because it does not cause halo artifacts at the edges of an image as the degree of smoothing increases \cite{farbman2008edge}. However, the guided filter and bilateral filter \cite{tomasi1998bilateral} techniques tend to blur over the edges. Similar to the filtering techniques mentioned here, WLS filtering also has two input parameters namely the degree of smoothing ($\lambda$) and the degree of sharpening ($\alpha$) over the preserved edges. Note that the reader should not confuse the parameter $\alpha$ here with the regularization parameter of BTC method. Since there is no guidance for the selection of WLS filtering parameters in HSI classification, in this thesis, our proposal is 0.4 for $\lambda$ and 0.9 for $\alpha$ based on the experiment we performed on the Indian Pines dataset. Note that the details of the experiment are presented in the following subsection. We obtained the metric OA by varying $\lambda$ from 0.1 to 1.0 and $\alpha$ from 0.1 to 1.5 using the BTC-WLS method. The resulting OA grid is shown in Fig. \ref{wlsParams}. The wide contour in the middle of the figure shows the highest OA region. The coordinates of the points in this region are also acceptable choices for the WLS filtering in our case. Note that the WLS filter parameters are fixed to the proposed values ($\lambda=0.4,\alpha=0.9$) for all experiments.     
 
 Regarding the regularization parameter ($\alpha$) of our proposal, as we pointed out in the previous section, we set it to $10^{-4}$ for the spectral-only BTC and $10^{-10}$ for BTC-GF and BTC-WLS techniques. Note that like spectral-only SVM and OMP methods, spectral-only BTC is also not a practical HSI classifier. We used those methods as the reference for our experiments. Therefore, the choice of proposed regularization parameter in the spectral-only case is not critical for real-world applications. However, for BTC-GF and BTC-WLS approaches, we should select a quite small number in order to only prevent ill-conditioned matrix operation. Since any quite small number is acceptable, there is no guidance required for the selection of this parameter.  
      
 For the choice of threshold parameter (M) of the proposed BTC-based techniques, once the dictionary is determined, one can immediately plot the $\overline{\beta}_M$ by varying M from $1$ to the number of bands ($B$) available. Then, the best value of it in the described sense could be easily determined by looking at the resulting plot. Notice that the procedure does not require any cross validation or experiment and it is totally based on the predetermined dictionary. In the following subsections, we will provide the estimated $M$ values for each experiment based on the $\overline{\beta}_M$ plots given in the previous section. A final note about the setup is that we performed all experiments on a PC with a quad-core 2.67 GHz processor and 4GB of memory. 
 \begin{table*}
 \footnotesize
 \caption{The results (accuracy per class ($\%$), OA ($\%$), AA ($\%$), $\kappa$ ($\%$), Time (s) of twenty Monte Carlo runs) for spectral-only and spatial-spectral methods on Indian Pines dataset}
 \label{resultIndianPines}
 \centering
 \begin{tabular}{|c|| c| c| c || c| c| c| c| c| c|}
 \cline{2-10}
 \multicolumn{1}{ c }{}& \multicolumn{3}{ |c|| }{Spectral-Only} & \multicolumn{6}{ |c| }{Spatial-Spectral} \\ \hline
 Class & SVM & OMP & BTC & SOMP & L-MLL & SVM-GF & SVM-WLS & BTC-GF & BTC-WLS\\ \hline 
  1 & 60.28 & 57.50 & \textbf{76.11} & 81.11 & 87.22 & 98.33 & 96.67 & \textbf{100.00} & 98.89 \\
  2 & \textbf{75.86} & 61.63 & 72.82 & 89.06 & 92.61 & 91.59 & 90.88 & \textbf{95.67} & 94.27\\
  3 & \textbf{65.19} & 53.00 & 62.24 & 84.75 & 89.18 & 85.74& 86.79 & 95.88 & \textbf{97.51} \\
  4 & \textbf{47.70}& 31.60& 43.71 & 69.81 & 79.90 & 87.65 & 90.94 & \textbf{95.40} & 94.41 \\
  5 & 87.12 & 85.51 & \textbf{88.23} & 93.02& 90.69 & \textbf{95.46} & 95.23 & 95.18 & 94.63 \\
  6 & 95.40& 93.15& \textbf{96.26} & 99.21& 99.54 &  \textbf{100.00} & 99.74 & \textbf{100.00}& \textbf{100.00} \\
  7 & \textbf{88.89} & 81.67& 85.56 & 89.44 & 93.33& \textbf{97.22} & \textbf{97.22} & \textbf{97.22} & 96.11 \\
  8 & 95.19 & 92.47 & \textbf{97.33} & 99.86& 99.69 & \textbf{100.00} & \textbf{100.00} & \textbf{100.00} & \textbf{100.00} \\
  9 & \textbf{91.00} & 63.00 & 84.00 & 78.00 & 96.00 & 98.00 & 86.00 & 97.00 & \textbf{100.00} \\
  10 & \textbf{74.45}& 57.25 & 70.35 & 85.27 & 89.01 & \textbf{93.83} & 93.36 & 93.27 & 93.59 \\
  11 & \textbf{83.73} & 72.87 & 81.98& 90.38 & 95.10 & 98.07 & 98.58 & 98.81 & \textbf{99.41} \\
  12 & 65.35 & 50.99 & \textbf{65.63} & 87.90 & 93.65 & 95.03 & 96.81 & 98.59 & \textbf{99.25} \\
  13 & 94.62 & 95.71 & \textbf{98.91} & 98.15 & 99.18 & 99.67 & \textbf{100.00} & \textbf{100.00}& \textbf{100.00}\\
  14 & 95.02 & 92.55 & \textbf{96.26} & 98.91& 97.78 & \textbf{100.00} & \textbf{100.00} & 99.93 & 99.83\\
  15 & \textbf{52.31}& 44.84 & 51.84 & 81.67 & 83.97 & 84.06 & 84.15 & 91.87 & \textbf{93.29}\\
  16 & 78.19 & 81.93 & \textbf{85.42} & \textbf{99.40} & 75.42 & 95.06 & 91.93 & 98.43 & 96.99\\ \hline \hline
  OA & \textbf{80.18} & 70.80& 79.17 & 90.74 & 93.36 & 95.16 & 95.34 & 97.38& \textbf{97.51}\\
  AA & \textbf{78.14} & 69.73 & 78.04 & 89.12 & 91.39 & 94.98& 94.27 & 97.33 & \textbf{97.39}\\
  $\kappa$ & \textbf{77.28} & 66.53 & 76.11& 89.43 & 92.42 & 94.46 & 94.66& 97.01 & \textbf{97.15} \\
  Time & 11.44 & 7.33 & \textbf{4.62}& 86.98 &\textbf{3.23}& 26.48 & 27.47 & 7.89& 8.77\\ \hline
 \end{tabular}
 \end{table*}
 \begin{figure*}
 \centering
 \includegraphics[width = 1.0\textwidth]{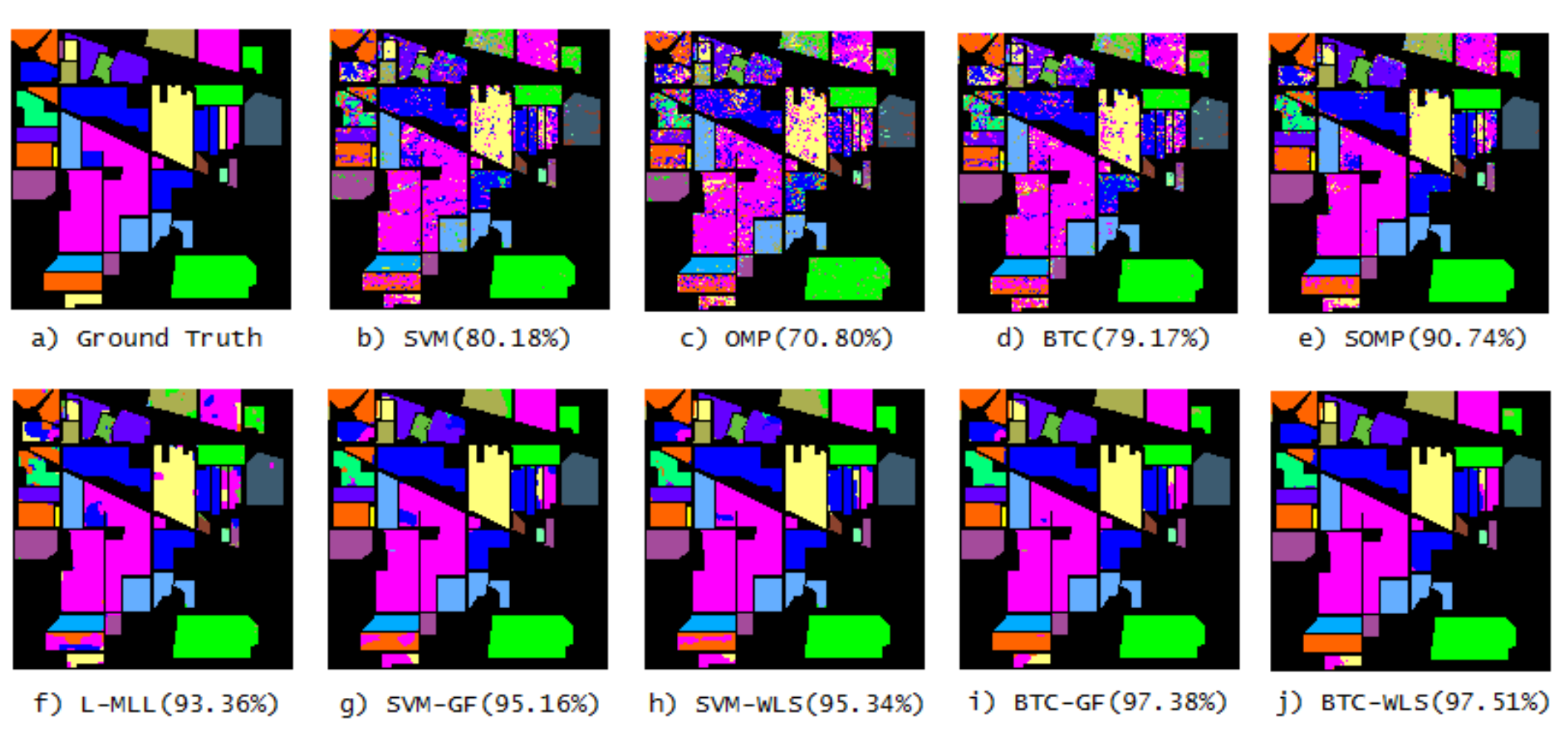}
 \centering
 \caption{ Classification Maps on Indian Pines Dataset with Overall Accuracies}
 \label{indianClassMap}
 \end{figure*}
 \subsection{Results on Indian Pines Dataset}
 We performed the first experiment on the Indian Pines dataset. We randomly selected $10\%$ of the ground truth pixels for training set (dictionary) and the remaining $90\%$ for testing. We limited the minimum number of training pixels to 10 for each class. Each class and the corresponding number of test and training pixels are given in Fig. \ref{indianPines}c. For sparsity-based algorithms, each training pixel (column) of the constructed dictionary was $l_2$ normalized. For SVM-based approaches, both the training and testing pixels were normalized between $-1$ and $1$. For BTC method, based on the $\overline{\beta}_M$ plot given in Fig. \ref{avgSicValues4}, we set the threshold $M$ to $80$ at which $\overline{\beta}_M$ approaches approximately to the minimum. Similarly, for BTC-GF and BTC-WLS methods, by observing the plot given in Fig. \ref{avgSicValues10}, we set the $M$ parameter to $60$. Based-on this configurations, we repeated the experiments for twenty Monte Carlo runs both for spectral-only methods (SVM, OMP, BTC) and for spatial-spectral approaches (SOMP, L-MLL, SVM-GF, SVM-WLS, BTC-GF, BTC-WLS). The classification results are shown in Table \ref{resultIndianPines}. We have also provided the classification maps with overall accuracies (\%) in Fig. \ref{indianClassMap}. In spectral-only case, as expected the SVM method having non-linear kernel (RBF) achieves best results in terms of OA, AA, and $\kappa$. This is because unlike SVM approach both OMP and BTC methods use linear kernel (dot product). On the other hand, classification results of the BTC method are very close to those of SVM. In terms of computation time, the best result is achieved by the BTC method. In spatial-spectral case, both BTC-GF and BTC-WLS approaches achieve best results in terms of all metrics except the computation time. The OA and AA differences between BTC-WLS and SVM-WLS are about $2\%$ and $3\%$, respectively. When we compare the BTC-GF and BTC-WLS methods with the SOMP method, the performance differences are significant. This results show that smoothing residual maps is quite effective way of improving the classification accuracy. The L-MLL method achieves better than the SOMP technique and it is the fastest algorithm in this case. However, the performance differences in terms of classification accuracies between the proposed BTC-based methods and L-MLL are significant. The SVM-based approaches perform quite slower than the proposed algorithms and the L-MLL method. The results also show that the SOMP method is computationally very expensive. Note that the time metric in the table include the classification time as well as the smoothing time. When implementing the WLS filter, we applied the preconditioned conjugate gradient method with incomplete Cholesky decomposition which highly reduces the computational cost \cite{saad2003iterative}. A final note about this experiment is that WLS-based approaches are generally superior to the GF-based methods. 
 \begin{table*}
  \footnotesize
 \caption{The results (accuracy per class ($\%$), OA ($\%$), AA ($\%$), $\kappa$ ($\%$), Time (s) of twenty Monte Carlo runs) for spectral-only and spatial-spectral methods on Salinas dataset}
 \label{resultSalinas}
 \centering
 \begin{tabular}{|c|| c| c| c || c| c| c| c| c| c|}
 \cline{2-10}
 \multicolumn{1}{ c }{}& \multicolumn{3}{ |c|| }{Spectral-Only} & \multicolumn{6}{ |c| }{Spatial-Spectral} \\ \hline
 Class & SVM & OMP & BTC & SOMP & L-MLL & SVM-GF & SVM-WLS & BTC-GF & BTC-WLS \\ \hline
 1 & 99.15 & 99.28 & \textbf{99.44} & 90.68 & 99.88 & \textbf{100.00} & \textbf{100.00} & \textbf{100.00} & \textbf{100.00}\\
 2 & 99.76 & \textbf{99.85} & 99.22 & 92.11 & 99.97 & \textbf{100.00} & \textbf{100.00} & 99.99 & \textbf{100.00}\\
 3 & \textbf{99.03} & 97.19 & 97.38 & 91.10 & 99.87 & \textbf{100.00} & \textbf{100.00} & \textbf{100.00} & \textbf{100.00}\\
 4 & \textbf{99.34} & 98.56 & 99.33 & 86.71 &  99.05 & \textbf{100.00} & \textbf{100.00} & \textbf{100.00} & 99.95\\
 5 & 98.26 & 98.16 & \textbf{98.74} & 88.77 & 99.04 & 99.58 & 99.45 & \textbf{99.83} & \textbf{99.83}\\
 6 & 99.77 & \textbf{99.88} & 99.77 & 88.52 & 99.87 & \textbf{100.00} & 99.97 & \textbf{100.00} & \textbf{100.00}\\
 7 & 99.66 & \textbf{99.84} & 99.64 & 93.53 & 99.80 & \textbf{100.00} & \textbf{100.00} & 99.94 & \textbf{100.00}\\
 8 & 87.30 & 78.07 & \textbf{88.83} & 88.54 & 93.79& 96.03 & 98.35 & 98.04 & \textbf{99.43}\\
 9 & 99.56 & \textbf{99.77} & 99.42 & 91.09 & 99.98& \textbf{100.00} & \textbf{100.00} & \textbf{100.00} & \textbf{100.00}\\
 10 & 94.65& \textbf{96.18} & 94.38 & 84.43& 96.48 & 99.51 & \textbf{100.00} & 99.86 & \textbf{100.00}\\
 11 & 96.98 & \textbf{98.08} & 97.93 & 86.16 & 97.82 & \textbf{100.00} & \textbf{100.00} & \textbf{100.00} & \textbf{100.00}\\
 12 & 99.49 & 99.32 & \textbf{99.98} & 86.49 & \textbf{100.00} & \textbf{100.00} & \textbf{100.00} & \textbf{100.00} & \textbf{100.00}\\
 13 & 97.79 & \textbf{97.86} & 97.51 & 85.94 & 97.94 & \textbf{100.00} & 99.49 & 99.90 & 99.34\\
 14 & 95.70 & 95.30 & \textbf{96.85} & 90.20 & 97.76& 99.84 & 98.93 & \textbf{99.85} & 99.64\\
 15 & \textbf{71.51} & 64.93 & 65.89 & 72.43 & 72.91 & 82.15 & 86.21 & 85.61 & \textbf{89.36}\\
 16 & 98.42 & \textbf{98.50} & 98.38 & 89.11 &  98.87 & 99.99 & \textbf{100.00} & 99.92 & 99.99\\ \hline \hline
 OA & \textbf{92.68} & 89.94 & 92.20 & 87.02 & 94.59 & 96.72 & 97.74 & 97.63 & \textbf{98.42}\\
 AA & \textbf{96.02} & 95.05 & 95.79 & 87.86 & 97.06 & 98.57 & 98.90 & 98.93 & \textbf{99.22}\\
 $\kappa$ & \textbf{91.84} & 88.80 & 91.30 & 85.61 & 93.96 & 96.34 & 97.48 & 97.36 & \textbf{98.24}\\
 Time & 61.81 & 113.51 & \textbf{24.37} & 988.99 &\textbf{30.16} & 135.85 & 138.94 & 42.80 & 45.55\\
 \hline
 \end{tabular}
 \end{table*}
 \begin{figure*}
 \centering
 \includegraphics[width = 1.0\textwidth]{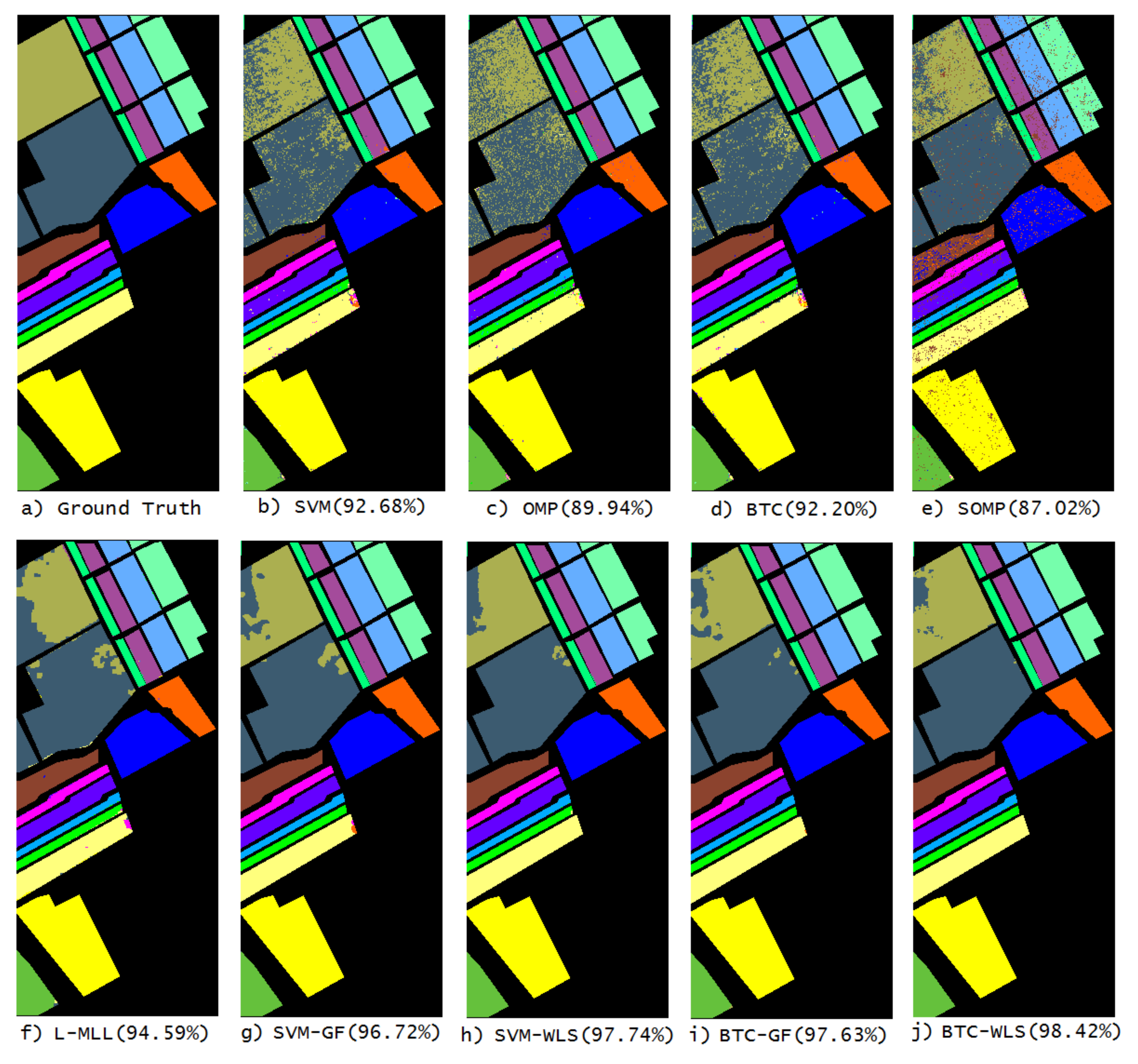}
 \centering
 \caption{ Classification Maps on Salinas Dataset with Overall Accuracies}
 \label{salinasClassMap}
 \end{figure*}
 \subsection{Results on Salinas Dataset}
 The second experiment was performed on the Salinas dataset. Since the number of ground truth pixels is large as compared to the first dataset, this time we selected $5\%$ of the ground truth pixels for training set (dictionary) and the remaining $95\%$ for testing. Similarly, each class, the number of training, and test pixels are given in Fig. \ref{salinas}c. The normalization process for the SVM and sparsity-based approaches was performed as described in the first experiment. For spectral-only and spatial-spectral BTC methods, we set $M$ to 50, and to 20, respectively based on the $\overline{\beta}_M$ plots given in Fig. \ref{avgSicValues4} and in Fig. \ref{avgSicValues10}. We repeated the experiments for twenty Monte Carlo runs for all methods. The classification results are shown in Table \ref{resultSalinas}. We have also included the classification maps with overall accuracies (\%) in Fig. \ref{salinasClassMap}. Since the HSI contains large homogeneous areas as compared to the previous case, all spectral-only approaches perform quite similarly. The SVM method again achieves best results except the classification time due to the reason we explained in the previous subsection. In terms of classification time, again the fastest one is the BTC method. Since the dictionary size is larger in this case, the OMP method performs quite slowly as compared to the others. In the spatial-spectral case, similar to the previous experiment, BTC-WLS technique again achieves the best results in terms of all metrics except the computation time. The performance differences between the filtering-based methods and the SOMP method are significant. In terms of OA, BTC-WLS performs approximately 4\% better than the L-MLL method. In terms of computational cost, although L-MLL approach slightly outperforms the BTC-based techniques, the computational performance differences between these methods and the SVM-based approaches are significant.
 \begin{table*}
 \footnotesize
 \caption{The results (accuracy per class ($\%$), OA ($\%$), AA ($\%$), $\kappa$ ($\%$), Time (s) of twenty Monte Carlo runs) for spectral-only and spatial-spectral methods on Pavia University dataset}
 \label{resultPaviaUni}
 \centering
 \begin{tabular}{|c|| c| c| c || c| c| c| c| c| c|}
 \cline{2-10}
 \multicolumn{1}{ c }{}& \multicolumn{3}{ |c|| }{Spectral-Only} & \multicolumn{6}{ |c| }{Spatial-Spectral} \\ \hline
 Class& SVM & OMP & BTC & SOMP & L-MLL & SVM-GF & SVM-WLS & BTC-GF & BTC-WLS \\ \hline
 1 & \textbf{93.07} & 66.58 & 87.74 & 87.30 & 98.66 & 98.74 & 98.69 & \textbf{98.88} & 98.79\\
 2 & \textbf{97.93} & 88.51 & 93.47 & 99.69 &  99.70 & 99.92 & \textbf{100.00} & 99.83 & \textbf{100.00}\\
 3 & \textbf{75.02} & 56.61 & 72.84 & 79.80 & 80.98 & 85.40 & 85.88 & 90.71 & \textbf{93.79}\\
 4 & \textbf{93.44} & 84.41 & 91.87 & 92.58 & 95.94 & 96.41 & 90.89 & \textbf{96.53} & 90.71\\
 5 & 99.21 & \textbf{99.80} & 99.50 & \textbf{100.00}& 99.53 & \textbf{100.00} & \textbf{100.00}& \textbf{100.00} & \textbf{100.00}\\
 6 & \textbf{86.50}& 58.13 & 73.35& 65.62 & 98.97 & 98.55 & \textbf{99.97} & 94.55 & 99.27\\
 7 & \textbf{84.74} & 55.31 & 73.29 & 80.59 & 90.25 & 99.81 & \textbf{100.00} & 97.27 & 99.46\\
 8 & \textbf{90.09} & 60.33& 73.32 & 67.78 & 94.64 & 98.78 & \textbf{99.45} & 96.79 & 98.87\\
 9 & \textbf{99.86} & 79.78 & 88.55& 63.87& \textbf{99.87} & 99.86& 98.77 & 99.35 & 97.58\\ \hline \hline
 OA & \textbf{93.39} & 76.38 & 86.81 & 88.16 & 97.54& 98.51 & 98.37 & 98.03 & \textbf{98.59}\\
 AA & \textbf{91.10} & 72.16 & 83.77 & 81.92 & 95.39 & 97.50 & 97.07 & 97.10 & \textbf{97.61}\\
 $\kappa$ & \textbf{91.19} & 68.67 & 82.46 & 83.98 & 96.73 & 98.02 & 97.84 & 97.38 & \textbf{98.12}\\
 Time & 17.58 & 47.19 & \textbf{13.48} & 503.57 & \textbf{37.37} & 92.21 & 96.65 & 63.80 & 67.94\\
 \hline
 \end{tabular}
 \end{table*}
 \begin{figure*}
 \centering
 \includegraphics[width = 1.0\textwidth]{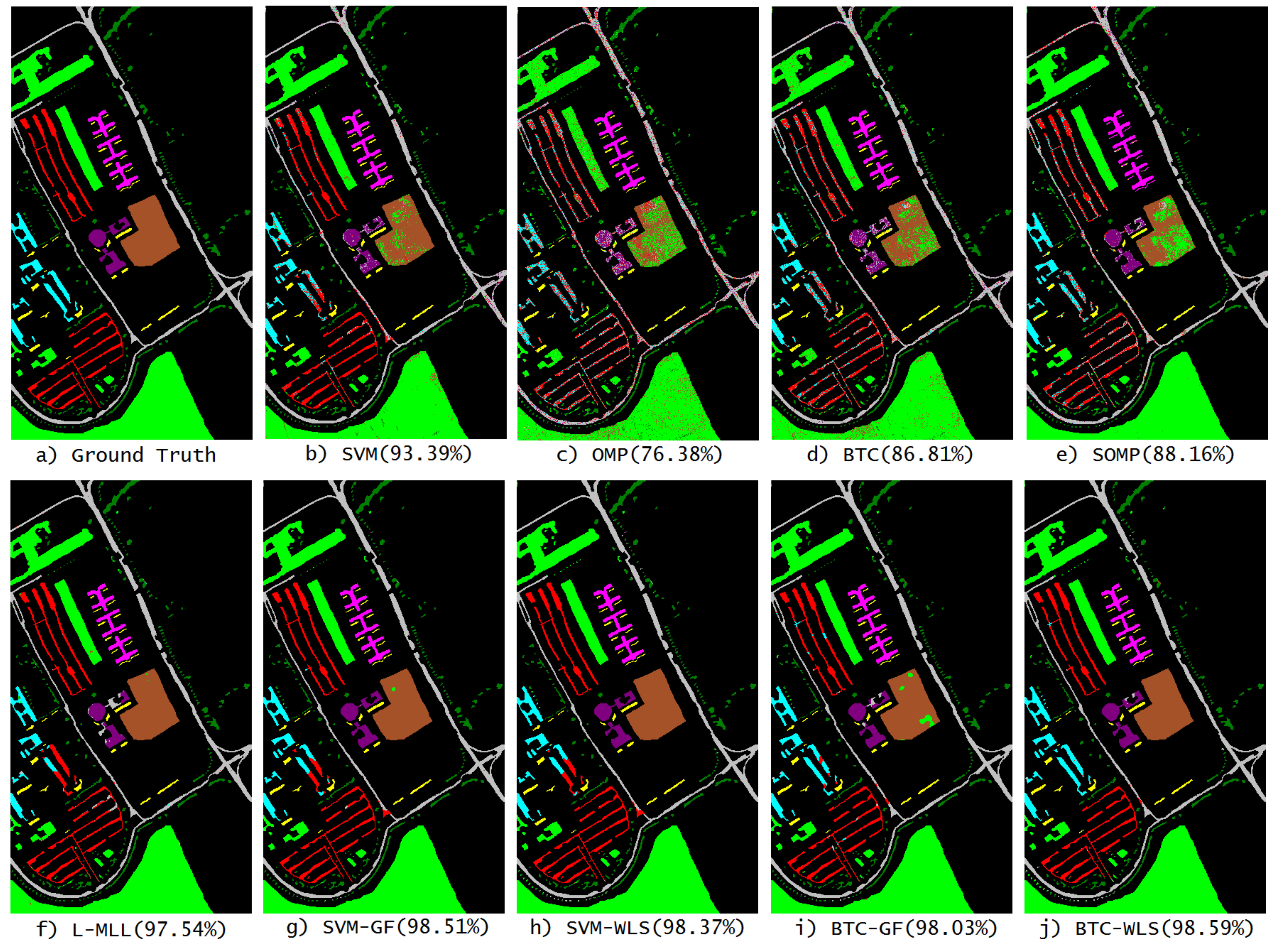}
 \centering
 \caption{ Classification Maps on Pavia University Dataset with Overall Accuracies}
 \label{paviaUniClassMap}
 \end{figure*}
 \subsection{Results on Pavia University Dataset}
 The third experiment was performed on the Pavia University dataset. Similar to the previous experiment, we chose the $5\%$ of the ground truth pixels for training and $95\%$ for testing. Each class of this dataset, the number of training and testing pixels for each class are shown in Fig. \ref{paviaUniversity}c. The threshold parameter ($M$) was set to 40 for BTC and 35 for BTC-GF and BTC-WLS techniques based on the plots given in Fig. \ref{avgSicValues4} and in Fig. \ref{avgSicValues10}, respectively. The experiments were repeated for twenty Monte Carlo runs for all methods using the same configurations given in the previous experiments. The classification results are given in Table \ref{resultPaviaUni}. Similarly, we have also provided the classification maps with overall accuracies (\%) for each method in Fig. \ref{paviaUniClassMap}. This time in spectral-only case the performance differences between the SVM method and the sparsity-based methods are significant in terms of OA, AA, and $\kappa$ metrics. It even outperforms the SOMP method. This is because the classes of Pavia University dataset are highly non-linearly separable. On the other hand, in spatial-spectral case, BTC and SVM-based methods perform quite similarly. The best results are achieved by the BTC-WLS method. This time L-MLL technique achieves 1\% worse than BTC-WLS in terms of OA. For this case, the fastest method is the L-MLL approach as well. Classification accuracy difference on the third class is significant between the BTC and SVM-based approaches. SVM-GF method outperforms the SVM-WLS technique. We believe this is the reason that the SVM-GF method has been proposed since it seems more robust for HSI classification. 
 \subsection {Results using Fixed Training Set}
 We performed the last experiment on the Pavia University dataset using fixed training set which is available on Dr. Li's web page\footnote{http://www.lx.it.pt/~jun/}. The original set contains 3921 training samples, however, we noticed that only 2777 of them are included in the ground truth (Fig. \ref{paviaUniversityFixed}a). Therefore, for this experiment, we used 2777 of the training samples, which are shown in Fig. \ref{paviaUniversityFixed}b. The number of training and testing pixels for each class are shown in Table \ref{fixedResultPaviaUni}. Fig. \ref{paviaUniversityFixed}b also shows that most of the pixels are grouped together which means that the samples belonging to the same class are highly similar to each other. This reduces the diversity of the input features causing lower classification accuracies as compared to the previous experiments. For this experiment, we evaluated the performances of the BTC-WLS technique and the final spatial-spectral techniques (SOMP, L-MLL, SVM-GF) proposed in \cite{chen2011hyperspectral}, \cite{li2011hyperspectral}, and \cite{kang2014spectral}, respectively. The threshold parameter ($M$) was set to 5 for the BTC-WLS technique based on the $\overline{\beta}_M$ plot using fixed training set. The classification results are given in Table \ref{fixedResultPaviaUni}. Again BTC-WLS achieves best results in terms of OA, AA, and $\kappa$. This time the performance differences between the proposed method and the other techniques are significant. Although L-MLL outperforms the other approaches in terms of classification time, the performance difference is not significant between our proposal and the L-MLL technique in terms of this metric. On the other hand, SOMP and SVM-GF performs quite slowly as compared to L-MLL and BTC-WLS.               
 \begin{figure*}
 \centering
 \includegraphics[width = 0.6\textwidth]{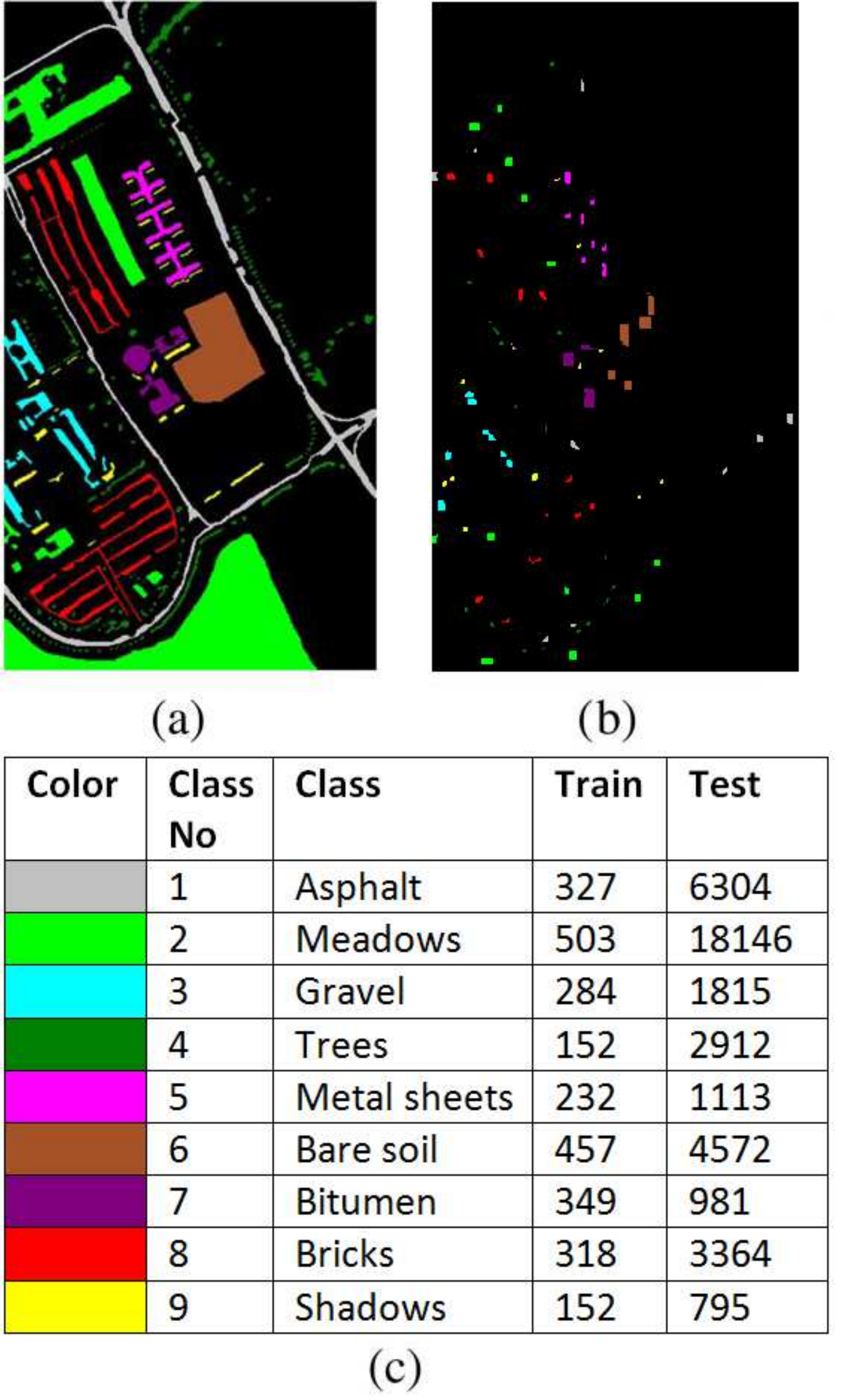}
 \centering
 \caption{ a-) Ground truth image b) fixed training samples, and c-) class numbers and the corresponding classes of Pavia University dataset }
 \label{paviaUniversityFixed}
 \end{figure*} 
 \begin{table*}
 \footnotesize
 \caption{The results (accuracy per class ($\%$), OA ($\%$), AA ($\%$), $\kappa$ ($\%$), Time (s) using fixed training set) for spatial-spectral methods on Pavia University dataset}
 \label{fixedResultPaviaUni}
 \centering
 \begin{tabular}{|c| c| c|| c| c| c | c|}
 \hline      
 Class  & Train & Test & SOMP \cite{chen2011hyperspectral} & L-MLL \cite{li2011hyperspectral} & SVM-GF \cite{kang2014spectral} & BTC-WLS \\ \hline
 1 & 327 & 6304 & 62.72 & 92.61 & \textbf{96.19} & 91.89 \\
 2 & 503 & 18146 & 71.49 & 70.80 & 79.00 & \textbf{86.75} \\
 3 & 284 & 1815 & 81.60 & 76.80 & 62.98 & \textbf{82.26} \\
 4 & 152 & 2912 & 89.66 & 91.79 & \textbf{96.81} & 92.41 \\
 5 & 232 & 1113 & \textbf{100.00} & 99.82 & \textbf{100.00} & \textbf{100.00} \\
 6 & 457 & 4572 & 94.03 & \textbf{99.02} & 98.60 & 98.43 \\
 7 & 349 & 981 & 95.51 & 98.37 & \textbf{100.00} & \textbf{100.00} \\
 8 & 318 & 3364 & 65.67 & 98.69 & 99.05 & \textbf{99.35} \\
 9 & 152 & 795 & 57.99 & \textbf{99.87} & 99.75 & 97.74 \\ \hline \hline
 OA & - & - & 75.09 & 83.67 & 87.71 & \textbf{91.07} \\
 AA & - & - & 79.85 & 91.98 & 92.48 & \textbf{94.31} \\
 $\kappa$ & - & - & 68.29 & 79.21 & 84.12 & \textbf{88.27} \\
 Time & -& - & 660.18 & \textbf{51.18} &  100.14 & 63.36 \\
 \hline
 \end{tabular}
 \end{table*}
 
 \section{Conclusions}
 \label{sec:conc}
 In this chapter, we proposed a light-weight sparsity-based classifier (BTC) for HSI classification alternative to the well-known SVM-based approaches and other greedy methods such as SOMP. The proposed method is easy to implement and performs quite fast. One of the most important advantages of the proposal is that it does not require any cross validation or classification experiment for parameter selection. Based on the guidance we have presented in chapter \ref{chp:btc}, one could easily select the threshold parameter once the dictionary is constructed. To improve the classification accuracy, we have proposed quite efficient framework in which the output residual maps of the pixel-wise classification procedure are smoothed using an edge preserving filtering method. Simulation results on the publicly available datasets showed that this intermediate procedure highly improves the classification accuracy. This approach could also be applied to any other sparsity-based classifier.

\chapter{HYPER-SPECTRAL IMAGE CLASSIFICATION via KBTC}
\label{chp:hyper2}

\section{Introduction}
The spectral signatures obtained by remote sensors could be used to distinguish the materials and objects on the surface of the earth. Those signatures are stored in the three dimensional data cubes called hyper-spectral images (HSI). While the first two dimensions of an image are used to describe the spatial coordinates, the last dimension is used to represent the spectral coordinate. The pixels of an image can be interpreted as the feature vectors containing spectral measurements. Various methods have been proposed in the literature to classify those pixels using the spectral measurements. Among those, support vector machines (SVM) \cite{cortes1995support} approach with radial basis function (RBF) kernel is commonly preferred. It is well known that the SVM classifier can achieve satisfactory results using only a few training samples \cite{melgani2004classification}, \cite{camps2005kernel}. It also has low sensitivity to Hughes phenomenon \cite{hughes1968mean}. On the other hand, it has some limitations such as conversion from binary classification to multi-class one and parameter determination via cross validation. When one-against-all approach is used as a conversion technique, some of the samples may be evaluated as unclassified \cite{mountrakis2011support}. If one-against-one approach is preferred, this time an increment in the number of classes significantly increases the binary classifiers used. Based on SVM, spatial-spectral techniques such as composite kernels \cite{camps2006composite}, segmentation maps \cite{tarabalka2009spectral}, \cite{tarabalka2010segmentation}, edge-preserving filtering \cite{kang2014spectral} have been developed to increase the classification accuracies by incorporating the spatial information. Although those approaches achieve promising results, they share common limitations with SVM.

Multinomial logistic regression (MLR)\cite{bohning1992multinomial}, which is based on Bayesian framework, has been used as an alternative to SVM in HSI classification \cite{borges2007evaluation}, \cite{li2012spectral}, and \cite{li2010semisupervised}. One of the MLR-based techniques which uses active learning namely logistic regression via splitting and augmented Lagrangian (LORSAL) has been proposed in \cite{bioucas2009logistic} and achieved promising results with the segmentation framework namely L-MLL (LORSAL multi-level logistic prior). It has the advantage of using active learning and RBF kernel. 

A powerful alternative to SVM is the sparse representation-based classification scheme which is based on the assumption that the test samples can be modeled as the sparse linear combinations of the basis dictionary elements. It was introduced in \cite{wright2009robust} and successfully applied to many classification problems. Unlike classical methods, this technique has low sensitivity to the corrupted or partial features exploiting the fact that the errors are sparse with respect to dictionary elements. Recently, joint sparsity-based methods (JSM) have been successfully applied to HSI classification. A greedy approach namely the simultaneous orthogonal matching pursuit (SOMP) \cite{tropp2006algorithms} was used for simultaneous classification of neighboring pixels in a given region based on the assumption that the pixels in a predetermined window share a common sparsity model \cite{chen2011hyperspectral}. Non-linear kernel versions of this approach have been proposed \cite{chen2013hyperspectral}, \cite{liu2013spatial}, and \cite{li2014column} to achieve better performance when the samples of different classes are linearly non-separable. To improve the classification accuracy, a weighting matrix based version of SOMP namely WSOMP has been proposed in \cite{zhang2014nonlocal}. An adaptive version of SOMP namely ASOMP has been developed in \cite{zou2014classification} to adaptively define the neighborhood pixels in the supervision of a segmentation map. Finally, a multi-scale adaptive sparse representation, which incorporates spatial information at different scales, has been successfully applied to HSI classification. However, JSM-based approaches suffer from the extensive computational cost because of simultaneous sparse recovery of the neighboring pixels. Another drawback is to select the sparsity level and the neighborhood window experimentally which may vary and cause non-optimal results.         

To eliminate the problems with the aforementioned approaches, basic thresholding classifier (BTC) was introduced in \ref{chp:btc}. It is a sparsity-based light-weight linear classifier which is able to achieve promising results and classify test samples extremely rapidly as compared to the other sparsity-based algorithms. In this chapter, we propose the kernelized version of the BTC algorithm namely kernel basic thresholding classifier (KBTC) which is capable of classifying the samples of classes of a given dataset when they are linearly non-separable \cite{toksoz2016kbtc}. Three major contributions of this chapter are follows:
\begin{itemize}
\item We propose a non-linear kernel version of the basic thresholding classification algorithm namely KBTC for HSI classification. The proposed non-linear sparsity-based method can achieve higher classification results as compared to the linear sparsity based ones especially when the classes of a given dataset are linearly non-separable.   
\item Unlike SVM, in which the parameters are obtained via cross validation, we provide a full parameter selection guidance by which the kernel and threshold parameters can easily be estimated without using any experiment and cross validation.
\item We extend the proposal to a spatial-spectral framework in which the final classification is performed based on the smoothed residual maps. We also provide more realistic fixed train sets alternative to those selected randomly from a given dataset.
\end{itemize} 

\section{HSI Classification}
In this section, we will briefly describe the HSI classification problem in the sparse representation model. Let $A_i\in \mathbb{R}^{B\times N_i}$ be the matrix containing $B$ dimensional $N_i$ many training pixels which are belonging to the $i$th class. Then, with $C$ many classes one could construct the dictionary $A$ in such a way that $A=[A_1~A_2~\ldots~A_C]\in \mathbb{R}^{B\times N}$ where $N=\sum_{i=1}^{C}N_i$. In this context, a test pixel $y \in\mathbb{R}^B$ can be classified as follows: Sparse representation model states that there exists a sparse code $x\in\mathbb{R}^{N}$ having minimum $l_1$ norm such that $y=Ax$ or $y=Ax+\epsilon$ where $\epsilon$ is a small error \cite{wright2009robust}. The minimization problem is equivalent to 
\begin{equation}\label{sparse_hyper2}
\hat{x}=\arg\min_x \norm{x}_1~subject~to~y=Ax
\end{equation}
or subject to $\norm{y-A x }_2 \leq \epsilon$. Then, we could find the class of $y$ using the following expression
\begin{equation}\label{sparse_class_hyper2}
 class(y)=\arg \min_i \norm{y-A \hat{x_i}}_2 \; \forall i \in \{1,2,\ldots,C\}
\end{equation}
where $\hat{x_i}$ denotes the $i$th class portion of the estimated sparse code vector $\hat{x}$. The minimization problem (\ref{sparse_hyper2}) could be solved using convex relaxation-based techniques (homotopy \cite{yang2010fast}) or alternatively greedy approaches such as OMP \cite{tropp2006algorithms}. However, those techniques and their variants \cite{chen2011hyperspectral} suffer from high computational cost. To eliminate the problems with those techniques, a simple and light-weight sparsity-based method, basic thresholding classifier (BTC) was introduced in chapter \ref{chp:btc}. Unfortunately, this approach works based on the fact that the classes of a given HSI are linearly separable. In practice, the situation may not be like this. Therefore, in order to eliminate the limitations with the aforementioned algorithms, we propose KBTC algorithm for HSI classification. In this context, one can use the following expression in order to classify a given pixel:

\begin{equation}\label{kbtc_hyper1}
 Class(y) \gets KBTC(A, y, \gamma, M, \alpha)
\end{equation}

The details of the algorithm could be found in Chapter \ref{chp:kbtc}.
\section{Extension to Spatial-Spectral KBTC}
Since incorporating spatial information highly improves the classification accuracy, we extend our pixel-wise proposal (KBTC) to a spatial-spectral framework namely KBTC-WLS. Notice that when a test pixel $y$ is classified, KBTC produces not only the class identity of $y$ but also a residual vector $\epsilon \in\mathbb{R}^{C}$ which contains distances to each class. Suppose that we classified every pixel of a given HSI, $H \in\mathbb{R}^{n_1 \times n_2 \times B}$ consisting of $n_1 \times n_2$ pixels, using the KBTC method. We could interpret the resulting residual vectors as a residual cube $R \in\mathbb{R}^{n_1 \times n_2 \times C}$ in which each layer represents a residual map. The idea is to smooth those maps using an averaging filter and then determine the class of each pixel based on minimal smoothed residuals. This intermediate step highly improves the classification results. We use weighted least squares method (WLS) \cite{farbman2008edge} as the smoothing filter which successfully preserves the edges via a gray scale guidance image. We prefer this filter because it does not cause halo artifacts at the edges. The gray scale guidance image could be obtained via principal component analysis (PCA) technique by reducing the dimensions of the original HSI from $B$ to 1. We give the overall framework step-by-step as follows:

\begin{itemize}
\item The parameters $\gamma$ and $M$ are estimated for a given dictionary $A$ using the procedures provided in chapter \ref{chp:kbtc}.
\item Every pixel of a given HSI is classified via KBTC. Note that the entries of the resulting residual cube are normalized between $0$ and $1$.
\item We obtain the gray scale guidance image by reducing the dimension of the original HSI from $B$ to 1 using the PCA technique.
\item We apply WLS filtering to each residual map by means of the guidance image obtained in the previous step. In order to reduce the computation cost of the WLS filtering, preconditioned conjugate gradient (PCG) method with incomplete Cholesky decomposition \cite{saad2003iterative} could be used when taking the inverses of the large sparse matrices.
\item The class of each pixel is determined based on minimal smoothed residuals.   
\end{itemize}    
The overall framework could also be seen in Fig. \ref{SKBTC}. Please note that since we obtain the pixel-wise class map before the smoothing procedure, we could set the error values to the maximum value 1 in the $i$th residual map for the entries whose labels are not equal to $i$ using the pixel-wise class map. This improves the classification performance further.

\begin{figure*}
\centering
\includegraphics[width = 1.0\textwidth]{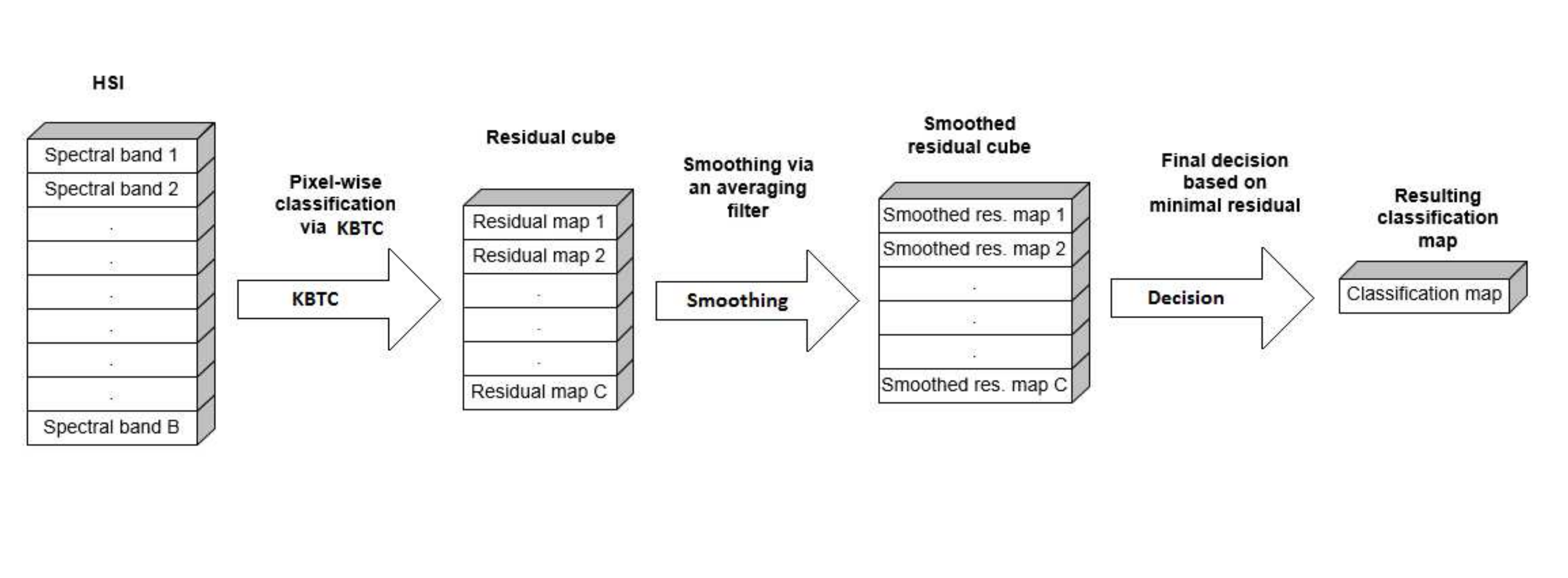}
\centering
\caption{Spatial-Spectral KBTC (KBTC-WLS)}
\label{SKBTC}
\end{figure*}

\section{Experimental Results}
\label{sec:exp_hyper2}

\subsection{Scaling}
Scaling both the testing and training data is quite important before applying the KBTC algorithm. We recommend scaling the entries in the feature vectors to the range $[-1, +1]$ or $[0, 1]$. Suppose that we scaled a training pixel from $[100, 20, 50, ...]^T$ to $[1, 0.2, 0.5, ...]^T$. If a test pixel having the feature vector $[110, 25, 45, ...]^T$ is needed to be classified, then it should be scaled to $[1.1, 0.25, 0.45, ...]^T$. It is similar to the scaling in RBF kernel SVM \cite{hsu2003practical}. Please note that linear BTC uses different approach in which only the training pixel vectors are $l_2$ normalized. 

\subsection{Datasets}
We performed the experiments using three publicly available HSI datasets namely Indian Pines, Salinas, and Pavia University. Detailed description of each dataset is given Table \ref{description_hyper2}. The original Indian Pines image contains $16$ different classes. However, we discarded $7$ of them because of insufficient number of training and testing samples \cite{plaza2009recent}. We carried out the experiments using fixed and grouped training pixels instead of random selection. This is because in case of random selection, it is highly likely to have a training sample which may be closely related to a testing sample. Therefore, this type of experiment may not present realistic results. On the other hand, in our case, we exclude training pixels from the testing areas which is quite similar to real world scenarios. The experiments using fixed and grouped training pixels could also be seen in \cite{li2013spectral},\cite{chen2011hyperspectral}, and \cite{chen2013hyperspectral} for Pavia University dataset. In our experiments, we used $27$ training pixels for each class of Indian Pines and Pavia University datasets. Those samples were taken from $3$ distinct $3\times3$ blocks. For Salinas dataset we used $32$ training pixels for each class. This time the samples were taken from $2$ distinct $4\times4$ blocks. The ground truth image, each class with corresponding number of training and testing pixels, and the starting coordinates of the distinct blocks from which the fixed training pixels were taken are given for each dataset in Fig. \ref{indianPines_hyper2}, Fig. \ref{salinas_hyper2}, and Fig. \ref{paviaUniversity_hyper2}, respectively. We provided the coordinates of the blocks because anyone can easily repeat the experiments and compare the results.   
 
\begin{table*}
\footnotesize
\caption{Description of each dataset}
\label{description_hyper2}
\centering
\begin{tabular}{|c |c | c| c| c| c| c|}
\hline
 Dataset & Size & Spatial  & Spectral  & Num. of & Sensor & Num. of  \\ 
         &      & resolution & coverage & classes &  & bands \\ \hline
Indian Pines &	145 $\times$  145 $\times$  220 &	20 m &	0.4 $-$ 2.5 $\mu$m &	9 &	AVIRIS &	200 \\ \hline
Salinas	& 512 $\times$  217$\times$  224 &	3.7 m &	0.4 $-$ 2.5 $\mu$m &	16 &	AVIRIS &	204 \\ \hline
Pavia University &	610 $\times$ 340 $\times$  115 &	1.5 m &	0.43 $-$ 0.86 $\mu$m &	9 &	ROSIS &	103 \\
\hline
\end{tabular}
\end{table*}

\begin{figure*}
\centering
\includegraphics[width = 1.0\textwidth]{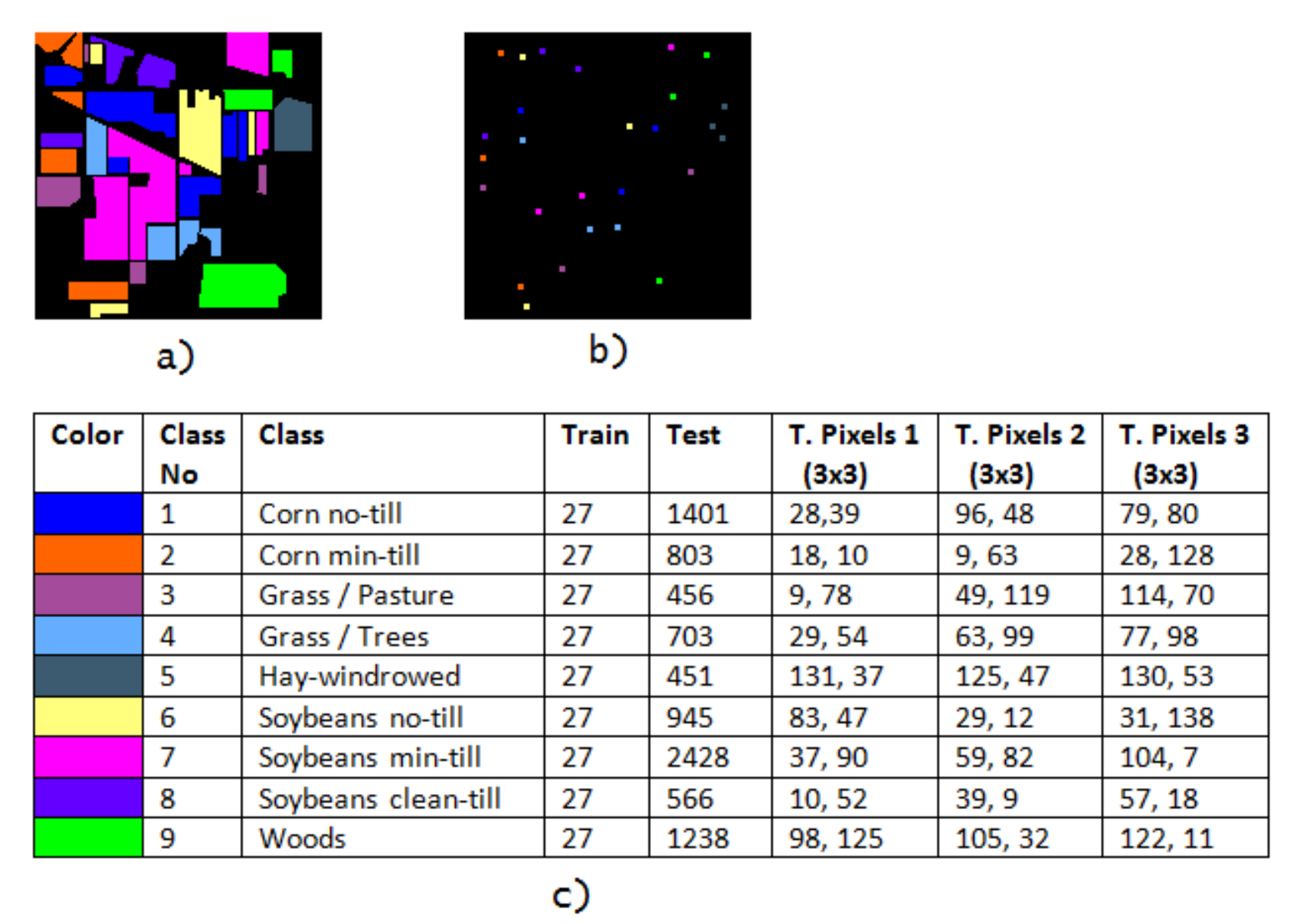}
\centering
\caption{ a-) Ground truth image, b-) fixed training pixels, c-) each class with corresponding number of training and testing pixels, and the coordinates of the grouped training pixels for Indian Pines dataset }
\label{indianPines_hyper2}
\end{figure*} 

\begin{figure*}
\centering
\includegraphics[width = 1.0\textwidth]{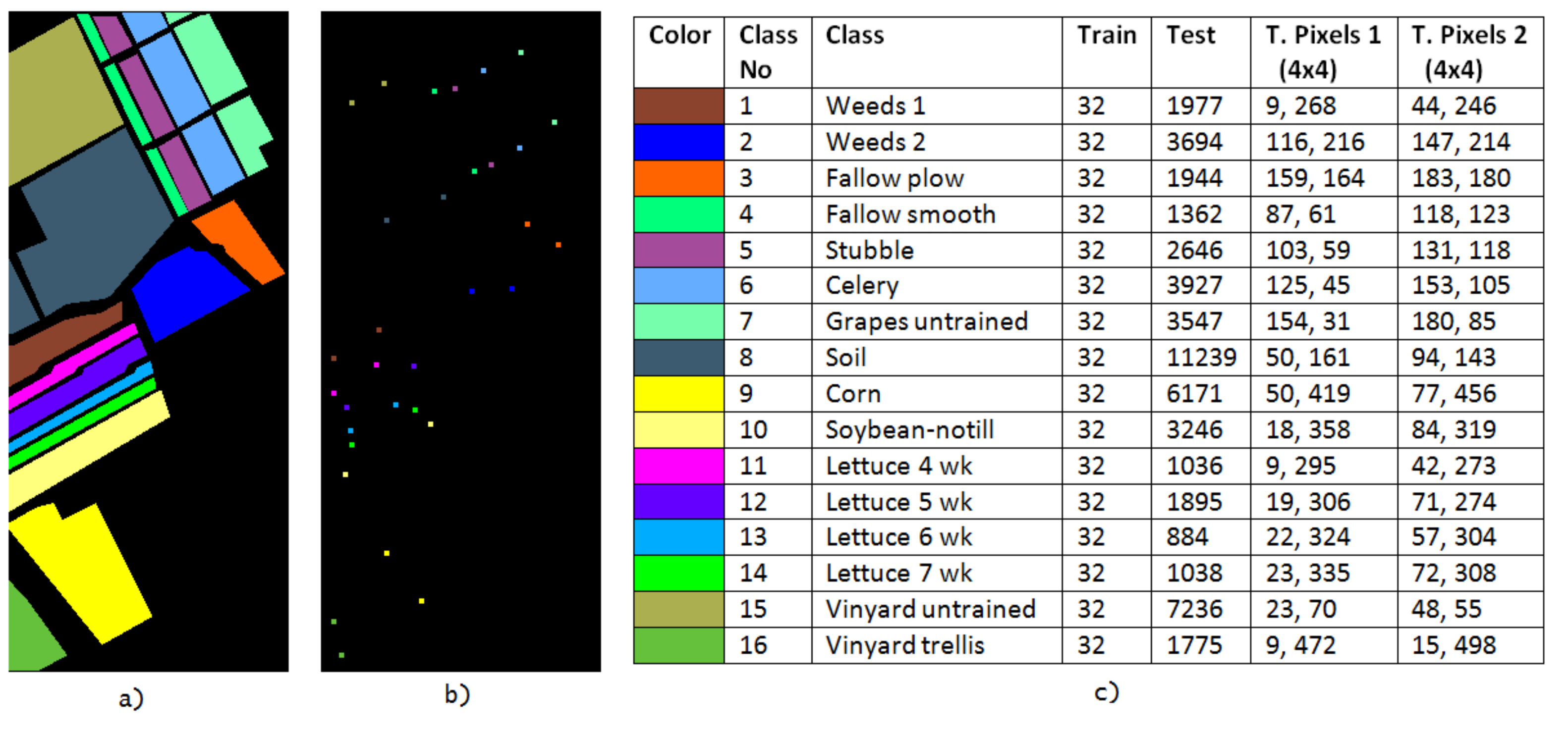}
\centering
\caption{  a-) Ground truth image, b-) fixed training pixels, c-) each class with corresponding number of training and test pixels, and the coordinates of the grouped training pixels for Salinas dataset }
\label{salinas_hyper2}
\end{figure*} 

\begin{figure*}
\centering
\includegraphics[width = 1.0\textwidth]{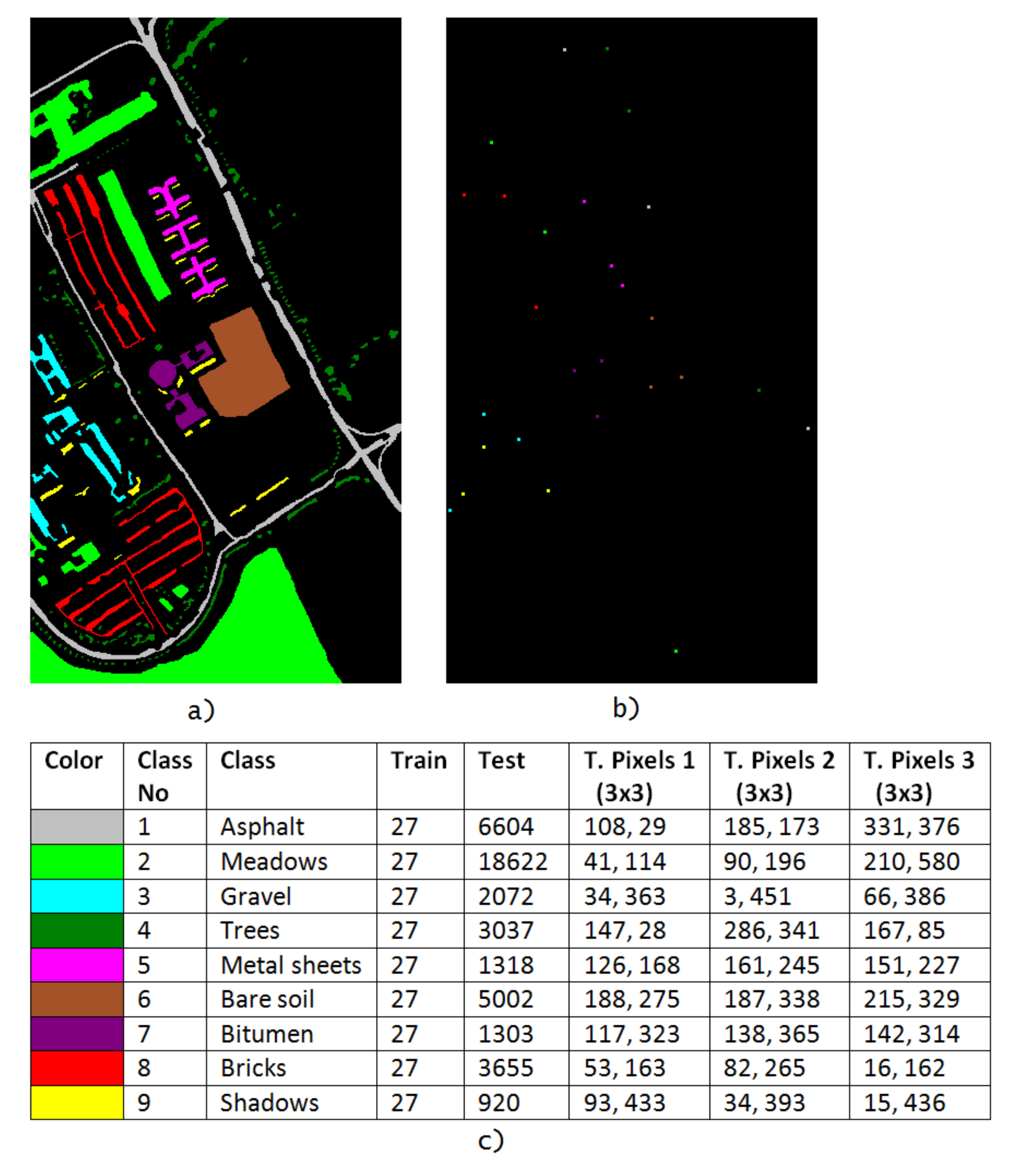}
\centering
\caption{  a-) Ground truth image, b-) fixed training pixels, c-) each class with corresponding number of training and test pixels, and the coordinates of the grouped training pixels for Pavia University dataset }
\label{paviaUniversity_hyper2}
\end{figure*} 

\subsection{Experimental Setup}
Before the experiments, we estimated the $\gamma$ and $M$ parameters of KBTC using the training sets and the procedures provided in the previous section. For each dataset, we computed the values of $\overline{\beta}(\gamma)$ by varying $\gamma$ from $2^{-10}$ to $2^1$. The results could be seen in Table \ref{gamma_estimate_table}. If we keep track of the computed values of $\overline{\beta}(\gamma)$ for any dataset, we observe that the resulting function is strictly convex. This implies that $\overline{\beta}(\gamma)$ has unique minimum. Therefore, the best value of $\gamma$ in the described sense could easily be estimated. The estimated $\gamma$ values which minimize $\overline{\beta}(\gamma)$ are $2^{-6}$, $2^{-6}$, $2^{-1}$ for the dictionaries constructed using Indian Pines and Salinas and Pavia University datasets, respectively.  If we insert the estimated $\gamma$ values to $\overline{\beta}(\hat{\gamma}, M)$ function, then we could estimate the best value of the threshold by varying $M$ from $1$ to $B-1$. The $M$ value that minimizes $\overline{\beta}(\hat{\gamma}, M)$ is considered to be the best estimate of it in the described sense. We performed this procedure and obtained $\overline{\beta}(\hat{\gamma}, M)$ plots in Fig. \ref{avgSicValues_hyper2} for each dataset used in this chapter. The estimated threshold values are $95$, $92$, and $35$ for Indian Pines, Salinas, and Pavia University datasets, respectively.
\begin{table*}
\footnotesize
\caption{$\overline{\beta}(\gamma)$ values for each $\gamma$ and dataset}
\label{gamma_estimate_table}
\centering
\begin{tabular}{|c |c | c| c|}
\cline{2-4}
\multicolumn{1}{ c }{}& \multicolumn{3}{ |c| }{$\overline{\beta}(\gamma)$} \\ \hline
$\gamma$ & Indian Pines & Salinas & Pavia University \\ \hline
$2^{1}$ & 0.9314 & 0.3322 & 0.4612 \\ \hline
$2^{0}$ & 0.8250 & 0.2410 & 0.4263 \\ \hline
$2^{-1}$ & 0.6950 & 0.1836 & \textbf{0.4176} \\ \hline
$2^{-2}$ & 0.5719 & 0.1499 & 0.4265 \\ \hline
$2^{-3}$ & 0.4751 & 0.1302 & 0.4549 \\ \hline
$2^{-4}$ & 0.4093 & 0.1183 & 0.5025 \\ \hline
$2^{-5}$ & 0.3743 & 0.1123 & 0.5712 \\ \hline
$2^{-6}$ & \textbf{0.3644} & \textbf{0.1106} & 0.6630 \\ \hline
$2^{-7}$ & 0.3737 & 0.1127 & 0.7669 \\ \hline
$2^{-8}$ & 0.3949 & 0.1176 & 0.8925 \\ \hline
$2^{-9}$ & 0.4263 & 0.1251 & 1.0429 \\ \hline
$2^{-10}$ & 0.4681 & 0.1355 & 1.2670 \\ 
\hline
\end{tabular}
\end{table*}

\begin{figure*}
\centering
\includegraphics[width = 0.9\textwidth]{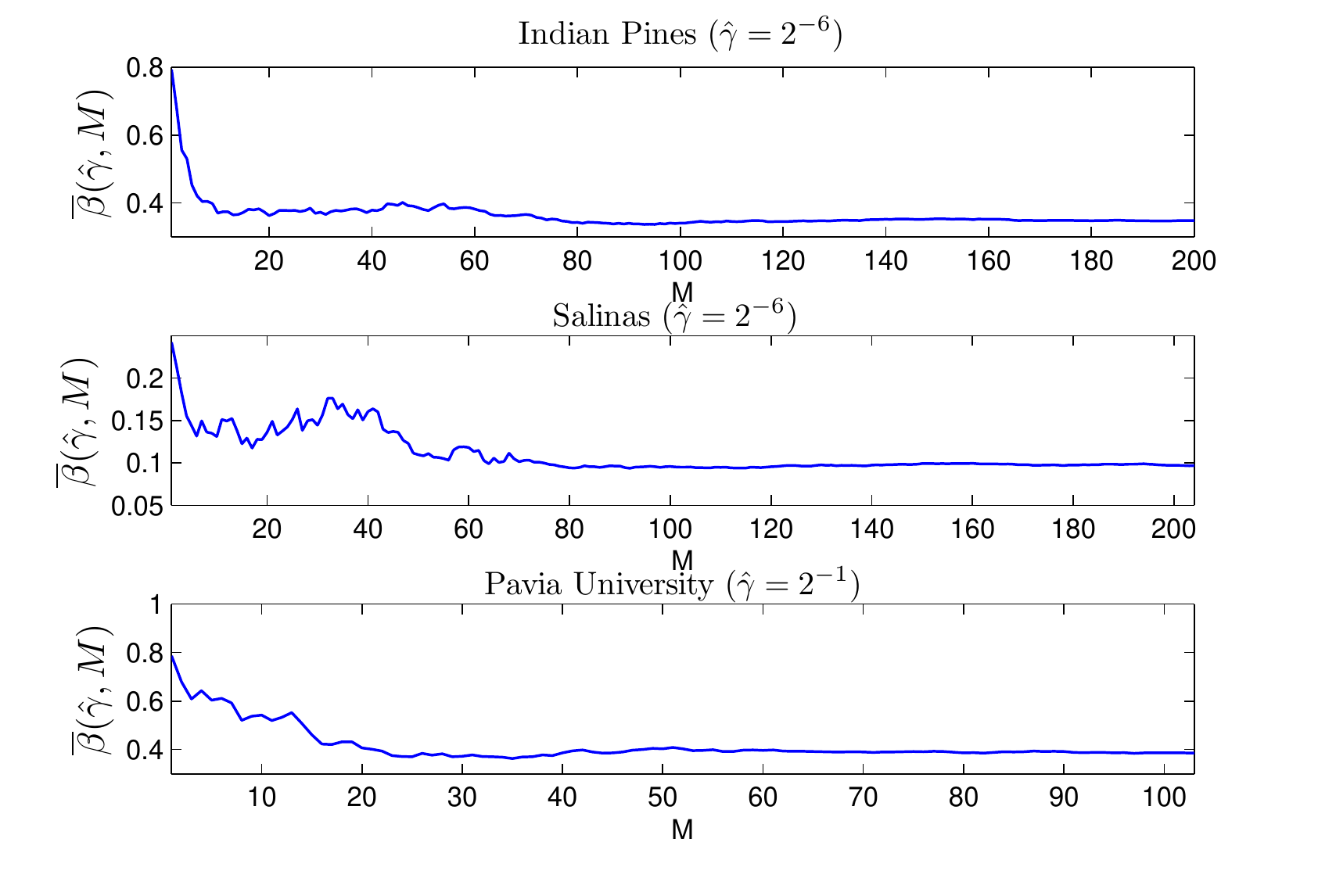}
\caption{$\overline{\beta}(\hat{\gamma}, M)$ vs threshold values for Indian Pines, Salinas, and Pavia University}
\label{avgSicValues_hyper2}
\end{figure*} 

We included the spectral-only classifiers as well as the spatial-spectral ones in the experiments. RBF kernel SVM \cite{melgani2004classification} could be considered one of the strongest spectral-only classifiers in the literature. Another approach in this category is the LORSAL \cite{li2011hyperspectral} technique which also uses the RBF kernel. It has the advantage of using active learning. Although it is not fair, we included the linear similarity-based BTC in order to show how the non-linear KBTC algorithm is superior to it. For SVM method, we used LIBSVM library \cite{chang2011libsvm} and we selected the parameters ($C$, $\gamma$) of it via 5-fold cross validation by varying $C$ from $10^{-5}$ to $10^5$ and $\gamma$ from $2^{-5}$ to $2^5$. The spatial extensions of those algorithms highly improve the results obtained in the pixel-wise classification stage. Those techniques used in this chapter are SVM-GF \cite{kang2014spectral} (based on guided filter \cite{he2010guided}), L-MLL (based on LORSAL), BTC-WLS, and KBTC-WLS. In order for fair comparison we used SVM-WLS instead of SVM-GF since the WLS filter-based techniques achieve better results. WLS filter has two parameters namely the smoothing ($\lambda$) and the sharpening ($\alpha$) degrees. We set those parameters to $0.4$ and $0.9$ for all experiments, respectively. For LORSAL and L-MLL techniques, we set the initial training samples to the half of the all available training pixels, and during the learning stage we incremented the samples by 20 using random selection (RS). Under this configuration we repeated the experiments for all datasets in a PC having a quad-core 3.60 GHz processor and 16GB of memory.

\subsection{Performance Indexes}
The performance indexes used in this work are as follows:
\begin{itemize}
\item Overall Accuracy (OA): It is the percentage of correctly classified pixels among the whole test samples.
\item Average Accuracy (AA): It shows the mean of individual class accuracies. 
\item The $\kappa$ coefficient: It measures the degree of consistency \cite{cohen1960coefficient}.
\item Computation time: It is used to measure the computational complexity of an algorithm. It also determines if an algorithm is suitable for real time applications. 
\end{itemize}

\subsection{Classification Results}
Using the training and testing samples given in Fig. \ref{indianPines_hyper2}, we carried out the first experiment on Indian Pines dataset both for spectral-only and spatial-spectral approaches. The classification results are shown in Table \ref{resultIndianPines_hyper2}. We also provided the corresponding classification maps with OAs(\%) in Fig. \ref{indianClassMap_hyper2}. Both in the spectral-only and spatial-spectral cases, KBTC achieves best results in terms of all metrics except the computation time. The performance differences between the KBTC and the other algorithms are significant. In the first case, KBTC performs about $5\%$ better than LORSAL technique in terms of overall accuracy. It also improves the result of linear BTC approximately $7.5\%$. LORSAL technique performs about $1\%$ better than the SVM method using the advantage of active learning. In the latter case, BTC-WLS achieves promising results by means of smoothing the residual maps. KBTC-WLS exploits the same technique and outperforms the BTC-WLS approach by achieving about $3\%$ better in terms of OA. Although the LORSAL approach performs well in the spectral-only case, L-MLL technique, which is based on LORSAL, performs worse than the other approaches in the latter case. This experiment shows that KBTC significantly improves the performance of linear similarity-based BTC both in spectral-only and spatial-spectral cases. In terms of computation time, LORSAL and its spatial extension are the fastest ones. Although KBTC and KBTC-WLS are the slowest ones, there is no significant difference between the computation times of these methods and those of the other techniques except LORSAL and L-MLL.  

\begin{table*}
\footnotesize
\caption{The results (accuracy per class ($\%$), OA ($\%$), AA ($\%$), $\kappa$ ($\%$), Time (s) using fixed training set) for spectral-only and spatial-spectral methods on Indian Pines dataset}
\label{resultIndianPines_hyper2}
\centering
\begin{tabular}{|c|| c| c| c | c|| c| c| c| c|}
\cline{2-9}
\multicolumn{1}{ c }{}& \multicolumn{4}{ |c|| }{Spectral-Only} & \multicolumn{4}{ |c| }{Spatial-Spectral} \\ \hline
Class No           & SVM & LORSAL & BTC & KBTC & SVM-WLS & L-MLL & BTC-WLS & KBTC-WLS \\ \hline 
1 & 52.53 & 51.46 & 47.47 & \textbf{64.03} & 77.02 & 67.81 & 80.66 & \textbf{81.87}\\
2 & 49.32 & 45.08 & \textbf{56.04} & \textbf{56.04} & 68.99 & 51.93 & \textbf{82.81} & 81.32\\
3 & \textbf{89.47} & 73.68 & 88.82 & 88.82 & \textbf{96.05} & 84.65 & \textbf{96.05} & \textbf{96.05}\\
4 & 92.75 & 95.73 & 90.75 & \textbf{98.58} & 99.86 & 99.57 & \textbf{100.00} & \textbf{100.00}\\
5 & 98.45 & 98.45 & 95.79 & \textbf{99.56} & \textbf{100.00} & \textbf{100.00} & \textbf{100.00} & \textbf{100.00}\\
6 & 65.50 & 56.40 & 44.66 & \textbf{69.95} & 89.63 & 73.54 & 70.05 & \textbf{99.26}\\
7 & 49.63 & 57.50 & \textbf{58.81} & 54.57 & 68.57 & 65.53 & \textbf{89.17} & 83.61\\
8 & 56.36 & 61.13 & 37.81 & \textbf{65.90} & 85.87 & 81.27 & 76.15 & \textbf{93.11}\\
9 & 88.93 & 92.57 & 86.03 & \textbf{92.65} & \textbf{100.00} & 97.25 & 99.19 & \textbf{100.00}\\ \hline
OA & 65.39 & 66.25 & 63.60 & \textbf{71.18} & 82.97 & 76.23 & 87.56 & \textbf{90.36}\\
AA & 71.43 & 70.22 & 67.35 & \textbf{76.67} & 87.33 & 80.17 & 88.23 & \textbf{92.80}\\
$\kappa$ & 60.15 & 60.76 & 57.53 & \textbf{66.68} & 80.27 & 72.32 & 85.26 & \textbf{88.73} \\
Time & 1.89 & \textbf{0.18} &1.78 & 2.90 & 2.44 &\textbf{0.41 } & 2.33 & 3.47 \\ \hline
\end{tabular}
\end{table*}
\begin{figure*}
\centering
\includegraphics[width = 1.0\textwidth]{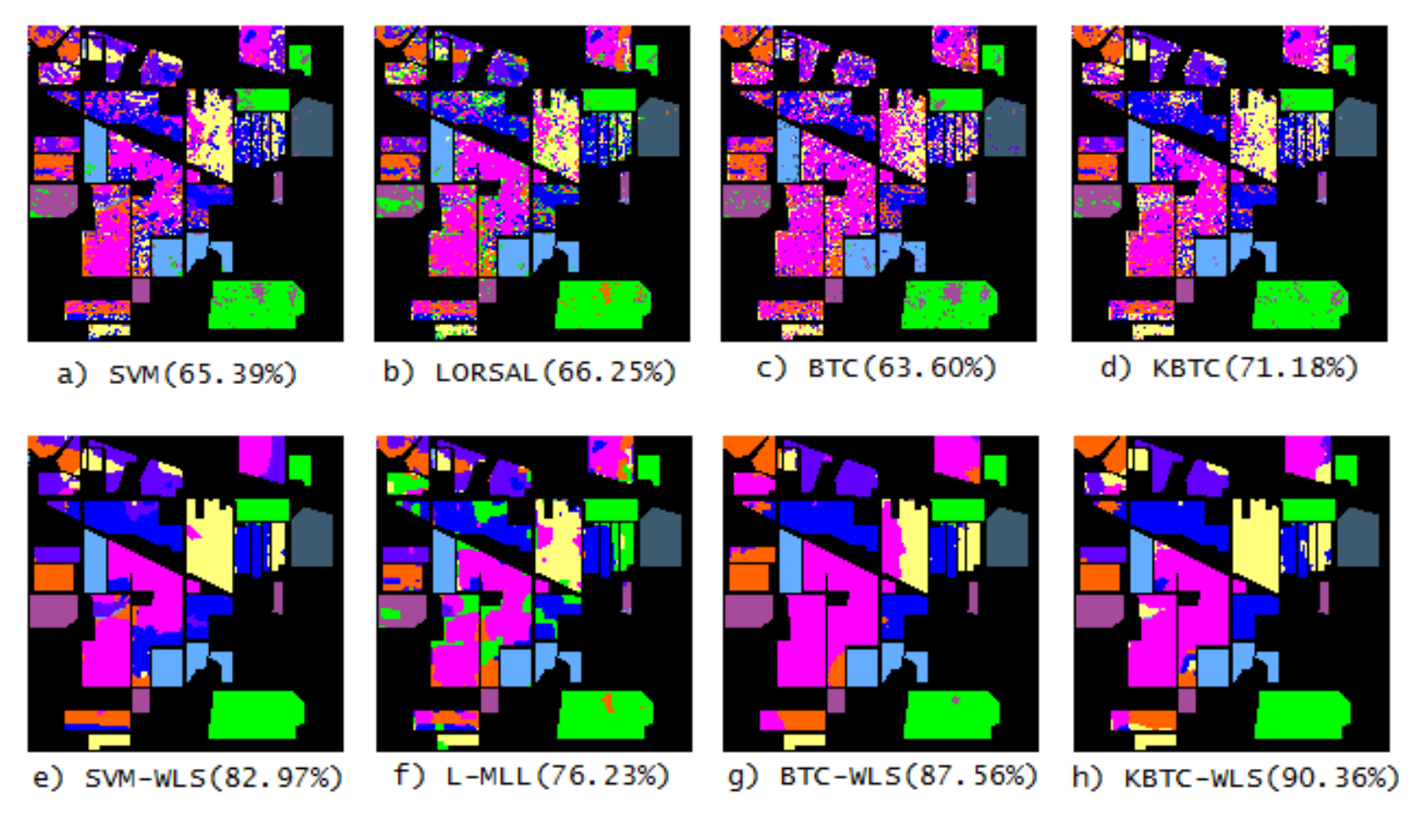}
\centering
\caption{ Classification Maps on Indian Pines Dataset with Overall Accuracies}
\label{indianClassMap_hyper2}
\end{figure*}

We performed the second experiment on the Salinas dataset using the training and testing samples given in Fig. \ref{salinas_hyper2}. The classification results could be seen in Table \ref{resultSalinas_hyper2}. We also included the corresponding classification maps with OAs(\%) in Fig. \ref{indianClassMap_hyper2}. As we can see in Fig. \ref{salinas_hyper2}, this HSI consists of large homogeneous areas as compared to the previous HSI. Therefore, the classification results achieved by the spectral-only techniques are closer to each other. The OA achieved by KBTC is about $1\%$ better than that of BTC approach. This result also shows that the dictionary constructed for Salinas image is more linearly separable than the previous one. KBTC also outperforms the SVM and LORSAL techniques having approximately $3\%$ and $1.5\%$ better accuracies, respectively. In the spatial-spectral case, the accuracies are also closer to each other. In this case, again KBTC-WLS achieves the best results in terms of all metrics except the computation time.

\begin{table*}
\footnotesize
\caption{The results (accuracy per class ($\%$), OA ($\%$), AA ($\%$), $\kappa$ ($\%$), Time (s) using fixed training set) for spectral-only and spatial-spectral methods on Salinas dataset}
\label{resultSalinas_hyper2}
\centering
\begin{tabular}{|c|| c| c| c | c|| c| c| c| c|}
\cline{2-9}
\multicolumn{1}{ c }{}& \multicolumn{4}{ |c|| }{Spectral-Only} & \multicolumn{4}{ |c| }{Spatial-Spectral} \\ \hline
Class No           & SVM & LORSAL & BTC & KBTC & SVM-WLS & L-MLL & BTC-WLS & KBTC-WLS \\ \hline 
1 & 96.56 & 95.45 & 97.02 & \textbf{97.42} & \textbf{100.00} & 96.61 & \textbf{100.00} & \textbf{100.00}\\
2 & \textbf{99.76} & 99.38 & 98.51 & 99.46 & \textbf{100.00} & 99.73 & \textbf{100.00} & \textbf{100.00}\\
3 & 73.15 & 86.37 & \textbf{87.65} & 82.46 & 95.27 & 97.43 & \textbf{100.00} & \textbf{100.00}\\
4 & 98.53 & 91.48 & \textbf{98.83} & 97.21 & \textbf{100.00} & 91.48 & \textbf{100.00} & \textbf{100.00}\\
5 & 97.73 & \textbf{98.11} & 95.39 & 97.73 & 99.36 & 98.37 & \textbf{99.77} & 99.51\\
6 & 96.79 & 95.80 & \textbf{99.41} & 98.55 & 99.92 & 96.87 & \textbf{100.00} & 99.92\\
7 & 97.86 & 96.48 & 97.60 & \textbf{99.10} & \textbf{100.00} & 97.07 & \textbf{100.00} & \textbf{100.00}\\
8 & 73.28 & 73.25 & 75.18 & \textbf{75.35} & 93.28 & 86.24 & 88.81 & \textbf{93.44}\\
9 & 97.18 & 97.21 & 96.29 & \textbf{98.31} & \textbf{100.00} & 97.50 & \textbf{100.00} & \textbf{100.00}\\
10 & 85.77 & 77.85 & 85.37 & \textbf{90.48} & \textbf{99.14} & 80.04 & 96.03 & \textbf{99.14}\\
11 & 96.81 & 96.43 & \textbf{99.52} & 98.75 & \textbf{100.00} & 96.72 & \textbf{100.00} & \textbf{100.00}\\
12 & \textbf{99.63} & 94.67 & 99.47 & 94.72 & \textbf{100.00} & 95.62 & \textbf{100.00} & \textbf{100.00}\\
13 & 97.96 & \textbf{98.98} & 97.85 & 98.30 & 99.66 & 98.98 & 99.21 & \textbf{99.89}\\
14 & 90.46 & 89.40 & \textbf{93.26} & 90.37 & 98.55 & 92.20 & 99.42 & \textbf{99.52}\\
15 & 59.72 & \textbf{74.47} & 69.79 & 69.00 & 81.45 & \textbf{88.82} & 78.15 & 83.32\\
16 & 77.07 & 82.48 & 78.87 & \textbf{94.20} & 92.90 & 91.15 & 92.28 & \textbf{100.00}\\ \hline
OA & 85.08 & 86.67 & 87.39 & \textbf{88.14} & 95.55 & 92.47 & 94.17 & \textbf{96.28}\\
AA & 89.89 & 90.48 & 91.87 & \textbf{92.58} & 97.47 & 94.05 & 97.10 & \textbf{98.42}\\
$\kappa$ & 83.38 & 85.18 & 85.98 & \textbf{86.81} & 95.04 & 91.62 & 93.50 & \textbf{95.85} \\
Time &  8.66& \textbf{1.94} & 6.69& 17.44 & 12.98 &\textbf{3.84 } & 10.93 & 21.71 \\ \hline
\end{tabular}
\end{table*}
\begin{figure*}
\centering
\includegraphics[width = 1.0\textwidth]{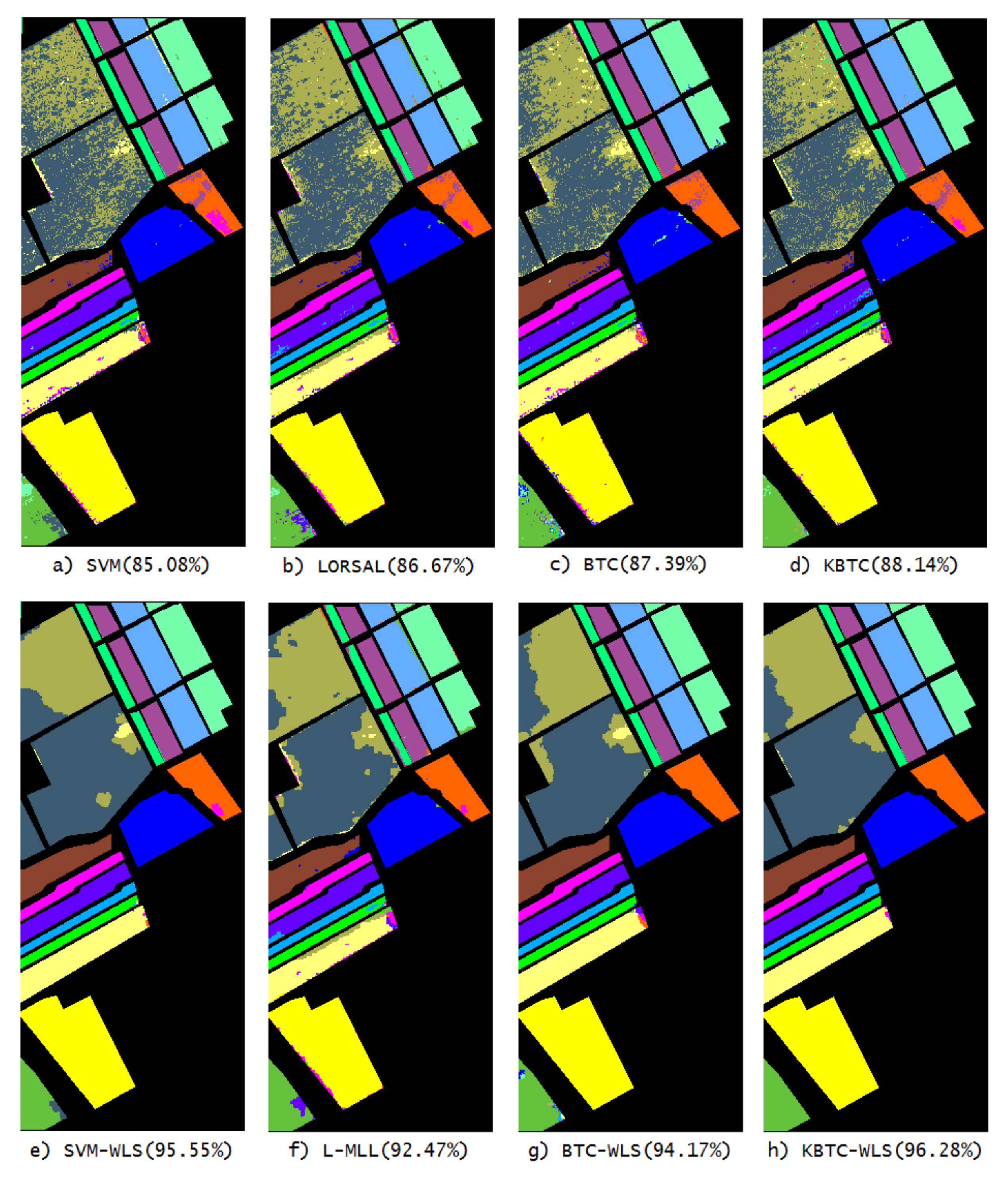}
\centering
\caption{ Classification Maps on Salinas Dataset with Overall Accuracies}
\label{salinasClassMap_hyper2}
\end{figure*}

\begin{table*}
\footnotesize
\caption{The results (accuracy per class ($\%$), OA ($\%$), AA ($\%$), $\kappa$ ($\%$), Time (s) using fixed training set) for spectral-only and spatial-spectral methods on Pavia University dataset}
\label{resultPaviaUni_hyper2}
\centering
\begin{tabular}{|c|| c| c| c | c|| c| c| c| c|}
\cline{2-9}
\multicolumn{1}{ c }{}& \multicolumn{4}{ |c|| }{Spectral-Only} & \multicolumn{4}{ |c| }{Spatial-Spectral} \\ \hline
Class No           & SVM & LORSAL & BTC & KBTC & SVM-WLS & L-MLL & BTC-WLS & KBTC-WLS \\ \hline 
1 & 74.11 & 75.44 & 66.81 & \textbf{75.79} & 89.58 & 89.52 & \textbf{90.48} & 90.40\\
2 & 63.44 & 67.82 & 62.03 & \textbf{73.87} & 68.59 & 74.21 & 70.98 & \textbf{82.92}\\
3 & 69.35 & 63.08 & 66.17 & \textbf{75.63} & 89.48 & 63.75 & \textbf{97.06} & 90.64\\
4 & 97.07 & 90.12 & 94.80 & \textbf{97.14} & 93.97 & 89.96 & \textbf{94.01} & 93.71\\
5 & 86.12 & 93.17 & \textbf{99.09} & 94.01 & 99.24 & 96.97 & \textbf{100.00} & \textbf{100.00}\\
6 & 69.07 & 74.29 & 65.23 & \textbf{78.83} & 77.75 & 80.69 & 85.67 & \textbf{94.86}\\
7 & 85.96 & 89.10 & 82.27 & \textbf{93.17} & \textbf{100.00} & 94.32 & \textbf{100.00} & \textbf{100.00}\\
8 & \textbf{79.48} & 83.31 & 46.62 & 75.73 & 90.78 & \textbf{91.57} & 76.17 & 88.59\\
9 & 99.78 & 95.00 & 81.85 & \textbf{100.00} & \textbf{99.13} & 93.15 & 90.54 & 98.70\\\hline
OA & 72.01 & 74.48 & 66.56 & \textbf{78.43} & 80.23 & 81.18 & 81.30 & \textbf{88.51}\\
AA & 80.48 & 81.25 & 73.87 & \textbf{84.90} & 89.83 & 86.01 & 89.43 & \textbf{93.31}\\
$\kappa$ & 64.92 & 67.83 & 58.22 & \textbf{72.67} & 75.06 & 75.98 & 76.40 & \textbf{85.26} \\
Time & 6.56  & \textbf{1.51} & 6.49 & 7.85 & 11.36 &\textbf{ 3.80} & 11.19 & 12.58 \\ \hline
\end{tabular}
\end{table*}

\begin{figure*}
\centering
\includegraphics[width = 1.0\textwidth]{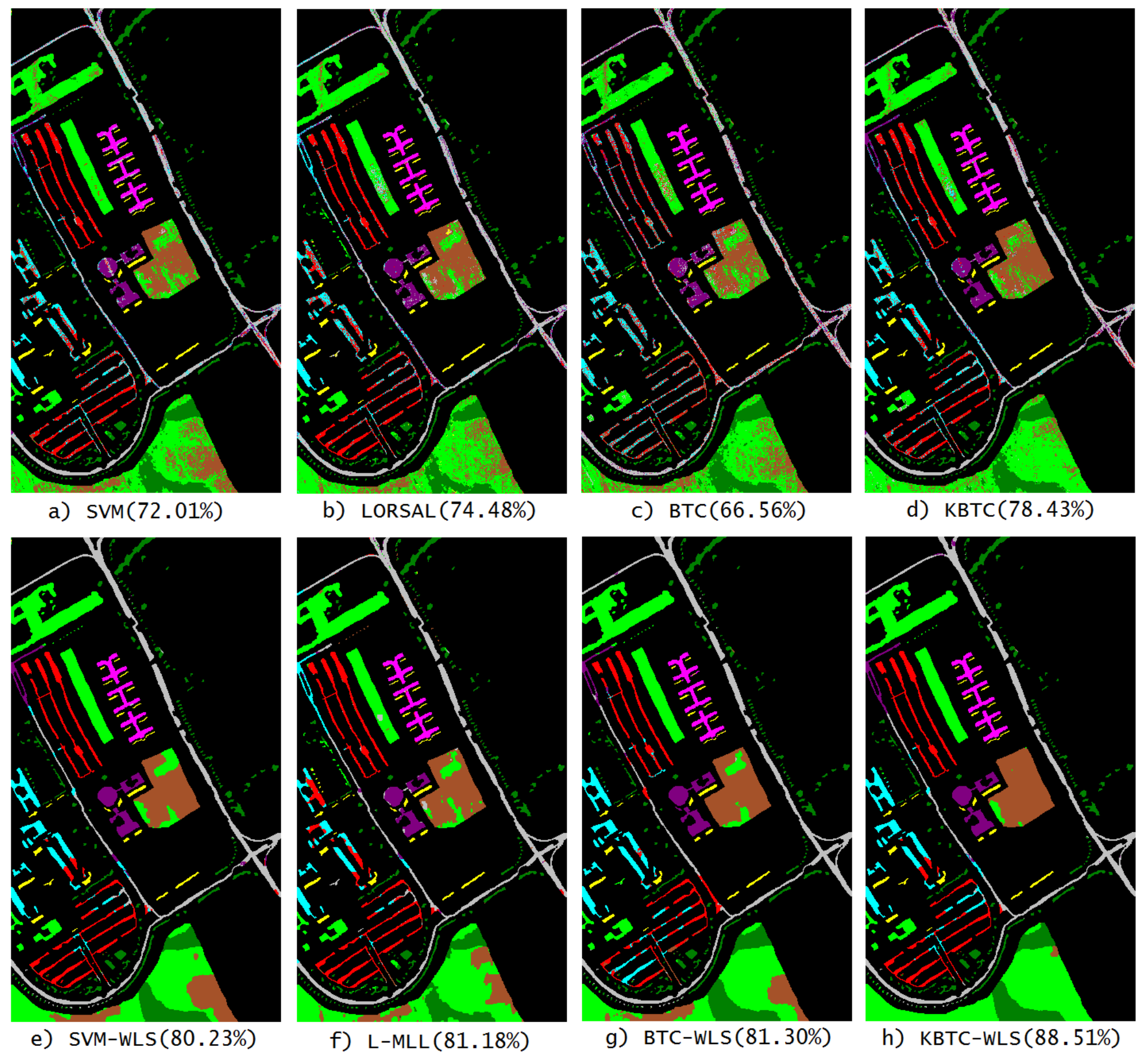}
\centering
\caption{ Classification Maps on Pavia University Dataset with Overall Accuracies}
\label{paviaUniClassMap_hyper2}
\end{figure*}

Finally, the last experiment was performed on the Pavia University dataset using the training and testing samples given in Fig. \ref{paviaUniversity_hyper2}. The classification results are given in Table \ref{resultSalinas_hyper2}. We also included the corresponding classification maps with OAs(\%) in Fig. \ref{paviaUniClassMap_hyper2}. The results obtained both in the spectral-only and spatial-spectral cases show that this dataset is the most difficult one. The pixels of this HSI are quite mixed and those belonging to different classes are highly non-linearly separable. The performance differences between the linear similarity-based BTC and the kernelized approaches are significant. KBTC achieves about $12\%$ overall accuracy improvement over the BTC technique by means of RBF kernel. This time the LORSAL method performs about $2.5\%$ better than the SVM technique. However, its performance is about $4\%$ less than that of KBTC. Although BTC achieves quite low results in the spectral-only case, BTC-WLS closes the gap between the other approaches using the smoothed residual maps. It even outperforms the SVM-GF and L-MLL methods. On the other hand, KBTC-WLS performs about $7\%$ better than BTC-WLS in terms of OA.

\section{Conclusions}
\label{sec:conc_hyper2}
In this chapter, we proposed a non-linear kernel version of the previously introduced basic thresholding classification algorithm for HSI classification. The proposed method achieves significant performance improvement over the linear version of it especially in the experiments in which the samples of the classes are linearly non-separable. The classification results on the publicly available datasets showed that the proposed algorithm also outperforms the well-known RBF kernel SVM and recently introduced logistic regression-based LORSAL technique. Based on the weighted least squares filter, we also presented the spatial-spectral version of the proposal, which achieves better performances as compared to the recently introduced state-of-the-art spatial-spectral approaches such as SVM-WLS (GF) and L-MLL. Another significance of the proposed framework is that the threshold and the kernel parameter could be easily estimated via the procedures we provided in Chapter \ref{chp:kbtc} without any cross validation or experiment.


\chapter{CONCLUDING REMARKS}
\label{chp:conclusions}
\section{Summary}
In this thesis, we have addressed the problem of classification in computer vision and pattern recognition by introducing two sparsity-based methods. While the first algorithm (BTC) refers the applications involving linearly separable data, the other one (KBTC) addresses the problems consisting of non-linearly separable classes. The techniques are easy to understand and require a few steps to implement. In some challenging applications such as face recognition and hyper-spectral image classification, we have shown that the proposed approaches achieve state-of-the-art classification accuracies as compared to the strongest classifiers in the literature. They also outperform those methods by classifying the given testing samples extremely rapidly. The proposals require a few parameters which could be determined via efficient off-line procedures. These procedures do not involve experiments such as cross validation in which the parameters are determined experimentally. Moreover, we have proposed some problem-specific fusion techniques which significantly improve the classification performances of our individual classifiers. For instance, in face recognition, the fusion is performed by means of taking the average of the output residuals provided by individual classifiers having different random projections. In case of HSI classification, this is achieved by smoothing the output residual maps using recently introduced edge preserving filtering techniques. The proposed fusion mechanisms could also be applied to other sparsity-based classification algorithms. We believe that BTC and KBTC algorithms together constitute a complete classification framework. 

\section{Discussion}
Although the proposed algorithms are based on sparse representation, they significantly differ from the other sparsity-based techniques when performing sparse recovery. It is known that conventional methods ($l_1$ minimization, greedy pursuits) use iterative expressions at this stage. However, our proposals consist of non-iterative structures such as thresholding and Tikhonov regularized sparse code estimation which result in fast classification operation. Other than speed issues, it is unclear if an iteration-based approach can perform satisfactory results or it can converge in most cases. Further studies are required on the robustness of the iteration-based methods for classification applications. 

At the dictionary pruning stages of the proposed techniques, we apply a fixed thresholding policy which has been shown to be robust. On the other hand, it is doubtful whether an adaptive pruning stage will improve the classification accuracies or not. Even so, it is difficult to adapt a correlation based adaptive stage and the performance improvement is not guaranteed. It is also worth mentioning that the kernelized version of the proposal uses RBF kernel which is quite common and popular kernel function in classification applications. It is suspicious if the other type of kernel functions such as polynomial kernel will improve the performance. We also note that KBTC is superior to BTC especially in non-linearly separable cases. However, it does not mean that it is always superior. In some applications, linear algorithms outperform the non-linear ones.    

\section{Future Directions}
There are many future directions related to the proposed algorithms to consider. Let us mention them one by one:

\begin{itemize}
\item As we can observe, both proposals involve inverse matrix operation. Using the properties of symmetric positive definite matrices, the inverse operation could be performed more efficiently in order to reduce the computational cost. Also for this purpose, the properties of Gram matrices could be further investigated.
\item Currently, selection of the pruned dictionary is performed based on linear and non-linear correlations. More sophisticated approaches could be utilized in order to improve this step.  
\item We have measured the performances of the proposed techniques using the applications involving elementary features or those containing simple feature projections. It is required to investigate them under more advanced transform techniques such as scale-invariant feature transform (SIFT) \cite{lowe1999object,lowe2004distinctive}, histogram of oriented gradients (HOG) \cite{dalal2005histograms}, Hough transform \cite{ballard1981generalizing}, etc.   
\item In HSI classification, spatial-spectral extensions of the proposals currently utilize the gray-scale guidance image obtained via PCA of the given HSI in the edge preserving smoothing stages. We believe that filtering the guidance image using an aggressive edge-aware filter such as $L_0$ smoothing technique \cite{xu2011image} further improves the classification performance.    
\item It would be interesting to investigate the performance of the KBTC algorithm under multiple kernel learning framework \cite{bach2004multiple,sonnenburg2006large} based on the fact that the real world data in the feature space is highly heterogeneous. Another future direction could be adapting a dictionary learning \cite{mairal2009online} stage which may improve the classification accuracy as well as the computational efficiency.          
\end{itemize} 



\bibliographystyle{IEEEtran}	
%
\bibliography{publications}

\appendix
\chapter{MATLAB CODES}
The following function implements the BTC algorithm.
\lstinputlisting[]{codes/btc.m}
The function below is used to calculate $\overline{\beta}$ quantity for BTC. It also estimates the threshold parameter $M$.
\lstinputlisting[]{codes/averageBetaBtc.m}
The following function implements the KBTC algorithm. Please make sure that the columns of the training and testing matrices are normalized as described in Chapter 4. 
\lstinputlisting[]{codes/kbtc.m}
The function below is used to determine the parameters of the KBTC algorithm.
\lstinputlisting[]{codes/determineKbtcParams.m}
The following function implements the $\overline{\beta}$ quantity for KBTC.
\lstinputlisting[]{codes/averageBetaKbtc.m}
The function below is used to determine the kernel matrix for KBTC algorithm.
\lstinputlisting[]{codes/kernelFunction.m}
\curriculumvitae
\label{chapter:vita}

\section*{\uppercase{Personal Information}}

\textbf{Surname, Name: } Toksöz, Mehmet Altan\\
\textbf{Nationality:} Turkish (TC) \\
\textbf{Date and Place of Birth:} 01.05.1982, Ankara\\
\textbf{Marital Status:} Single \\
\textbf{Phone:} +90 506 959 16 09 \\
\textbf{Fax:} +90 312 291 6004 \\

\section*{\uppercase{Education}}

\begin{tabular}{lll}
\textbf{Degree} & \textbf{Institution} & \textbf{Year of Graduation} \\
M.S. & Bilkent University & 2009 \\
 &  Electrical and Electronics Engineering &  \\
B.S. & University of Applied Sciences Upper Austria & 2006 \\
 & Electrical Engineering and Computer Science & \\
B.S. & Anadolu University & 2006\\
 &  Electrical and Electronics Engineering & \\
\end{tabular}

\section*{\uppercase{Professional Experience}}

\begin{tabular}{lll}
\textbf{Year} & \textbf{Place} & \textbf{Enrollment} \\
2010-2016 & Tübitak Bilgem İltaren & Senior Researcher \\
2007-2009 & Bilkent University & Research and Teaching Assistant\\ 
2006-2007 & Akad Elektronik & Software Engineer
\end{tabular}

\section*{\uppercase{HONORS, SCHOLARSHIPS AND AWARDS}}

\begin{itemize}
\item  Ranked 79th in the quantitative area among 1.5 million students in the National
University Entrance Exam of 2001.
\item Scholarship for undergraduate study in Austria for one year, 2005-2006.
\item Full scholarship for Master of Science Study awarded by Bilkent University.
\item Master of Science scholarship awarded by the Scientific and Technological Research
Council of Turkey.
\end{itemize}

\section*{\uppercase{Publications}}
\subsection*{International Journal Papers}

\begin{itemize}
\item M. A. Toksoz and I. Ulusoy, “Hyperspectral image classification via kernel basic thresholding classifier,” IEEE Transactions on Geoscience and Remote Sensing, vol. 55, no. 2, pp. 715–728, 2017.

\item M. A. Toksöz and I. Ulusoy, “Hyperspectral image classification via basic thresholding classifier,” IEEE Transactions on Geoscience and Remote Sensing, vol. 54, no. 7, pp. 4039–4051, 2016.

\item M. A. Toksöz and I. Ulusoy, “Classification via ensembles of basic thresholding classifiers,” IET Computer Vision, vol. 10, no. 5, pp. 433–442, 2016.

\item N. Akar and M. A. Toksoz, “MPLS Automatic Bandwidth Allocation via Adaptive Hysteresis”, Computer Networks, vol. 55, no. 5, pp. 1181-1196, April 2011.

\item M. A. Toksoz and N. Akar, “Dynamic Threshold-based Assembly Algorithms for Optical Burst Switching Networks Subject to Burst Rate Constraints”, Photonic Network Communications, vol. 20, no. 2, pp. 120-130, Oct. 2010. 
\end{itemize}

\end{document}